    \newcommand\newdot{{\kern.8pt\cdot\kern.8pt}}
\newcommand\nbull{{\kern.8pt\raise1.5pt\hbox{\small\bf .}\kern.8pt}}
\newcommand\1{\hbox{\kern.375em\vrule height1.57ex depth-.1ex
		width.05em\kern-.375em \rm 1}}
\newcommand\E{\mathbb{E}}
\newcommand\N{\mathbb{N}}\newcommand\R{\mathbb{R}}
\renewcommand\P{\mathbb{P}}
\DeclareMathOperator*{\argmin}{arg\,min}
\DeclareMathOperator{\lip}{L_{\ell}}
\DeclareMathOperator{\spec}{\mathcal{I},\sigma}
\DeclareMathOperator{\nuc}{\mathcal{I},\ast}
\DeclareMathOperator{\du}{U}
\DeclareMathOperator{\dut}{U^{\top}}
\DeclareMathOperator{\rad}{\mathfrak{R}}
\DeclareMathOperator{\Fr}{Fr}
\DeclareMathOperator{\Var}{\mathbb{V}ar}
\DeclareMathOperator{\sdv}{\sigma}
\newtheorem{theorem}{Theorem}
\newtheorem{lemma}[theorem]{Lemma}
\newtheorem{proposition}[theorem]{Proposition}
\newtheorem{corollary}[theorem]{Corollary}
\theoremstyle{definition}
\newtheorem{assumption}{Assumption}
\theoremstyle{definition}
\title{Generalization Bounds for Inductive Matrix Completion in Low-noise Settings\footnote{Accepted for presentation at AAAI 2023}}
\author{
    Antoine Ledent\textsuperscript{\rm 1}\thanks{Corresponding author}, 
   Rodrigo Alves\textsuperscript{\rm 2},
 Yunwen Lei\textsuperscript{\rm 3},
 Yann Guermeur\textsuperscript{\rm 4}, and
 Marius Kloft\textsuperscript{\rm 5}
}
\DeclareMathOperator{\dualnorm}{\text{d}}
\DeclareMathOperator{\loss}{\ell}
\DeclareMathOperator{\losb}{B_{\ell}}
\newcommand*{\red}{\textcolor{red}}
\begin{document}


\maketitle

\begin{abstract}
	
	We study inductive matrix completion (matrix completion with side information) under an i.i.d. subgaussian noise assumption at a low noise regime, with uniform sampling of the entries. We obtain for the first time generalization bounds with the following three properties: 
	(1) they scale like the standard deviation of the noise and in particular approach zero in the exact recovery case; (2) even in the presence of noise, they converge to zero when the sample size approaches infinity; and (3) for a fixed dimension of the side information, they only have a logarithmic dependence on the size of the matrix. Differently from many works in approximate recovery, we present results both for bounded Lipschitz losses and for the absolute loss, with the latter relying on Talagrand-type inequalities.  The proofs create a bridge between two approaches to the theoretical analysis of matrix completion, since they consist in a combination of techniques from both the exact recovery literature and the approximate recovery literature.
\end{abstract}

\section{Introduction}

Matrix Completion (MC), the problem which consists in predicting the unseen entries of a matrix based on a small number of observations, presents the rare combination of (1) a rich mathematical playground rife with fundamental unsolved problems, and (2) a wealth of unexpected applications in lucrative  and meaningful fields, from Recommender Systems~\cite{optimization4,FakeMKL+,book} to the prediction of drug interaction~\cite{IMCforDrug}. 

One of the most celebrated algorithms for standard matrix completion is the Softimpute algorithm~\cite{softimpute}, which solves the following optimization problem:
\begin{align}
\label{MC}
\min_{Z\in\R^{m\times n}} \frac{1}{2} \|P_{\Omega}(Z-R)\|_{\Fr}^2 +\lambda \|Z\|_{*},
\end{align} 
where $P_{\Omega}$ denotes the projection on the set $\Omega$ of observed entries, $R$ is the ground truth matrix,  $\|\nbull\|_{*}$ denotes the \textit{nuclear norm} (the sum of the matrix's singular values) and $\|\nbull\|_{\Fr}$ denotes the Frobenius norm. The idea of the algorithm is to encourage \textit{low-rank} solutions in a similar way to how $L^1$ regularization encourages component sparsity. The parameter $\lambda$ must be tuned with cross-validation.

\textit{Inductive matrix completion} (IMC)~\cite{IMC1,IMC,IMC2,IMC3} is another closely related model which assumes that additional information is available in the form of feature vectors for each user (row) and item (column). It assumes that the side information is summarized in matrices $X\in \R^{m\times a}$ and $Y\in \R^{n\times b}$. IMC then optimizes the following objective function
\begin{align}
\label{IMC}
\min_{M\in\R^{a\times b}} \frac{1}{N} \|P_{\Omega}(XMY^\top-R)\|_{\Fr}^2+\lambda \|M\|_{*}.
\end{align} 

An interesting question is whether one can provide sample complexity guarantees for the optimization problem above. Typically, doing so requires minor modification to the problem for technical convenience.
There are several such analogues optimization problems~\eqref{MC} and~\eqref{IMC}, depending on the type of statistical guarantee expected and the assumptions: in exact recovery (with the assumption of perfectly noiseless observations), the Frobenius norm is replaced by a hard equality constraint, whilst in approximate (noisy) recovery, the nuclear norm regrulariser is replaced by a hard contraint. 

More precisely, \textit{exact recovery} results study the following hard version of the optimization problem: 
\begin{align}
\label{MChard}
&\min_{Z\in\R^{m\times n}}\| Z\|_{*} \quad \text{subject to}\nonumber\\
& Z_{i,j}=R_{i,j} \quad \forall (i,j)\in\Omega.
\end{align}

In the case of IMC, the equivalent hard version is:
\begin{align}
\label{IMChard}
&\min_{M\in\R^{a\times b}}\| M\|_{*} \quad \text{subject to}\nonumber \\
& [XMY^\top]_{i,j}=R_{i,j} \quad \forall (i,j)\in\Omega.
\end{align}
The study of problem~\eqref{MChard} is the earliest branch of the related literature: it was shown in a series of papers (\citealp{Genius,CandesRecht,SimplerMC}, to name but a few) that if the number of samples is $\geq \widetilde{O}(nr)$  (where $r$ is the rank and $n$ is the size of the matrix, i.e. the number of rows or columns, which ever is larger), then it is possible to recover the whole matrix exactly with high probability as long as the entries are sampled uniformly at random. There has also been some more recent interest in the problem~\eqref{IMChard}: it was shown in~\cite{IMCtheory1} that \textit{assuming the side information $X,Y$ is made up of orthonormal columns}, exact recovery is possible as long as the number of samples $N=|\Omega|$ satisfies $\widetilde{O}(ar)\leq N\leq \widetilde{O}(abr)$. Here, the $\widetilde{O}$ notation hides logarithmic factors in all relevant quantities (including the size $m\times n$ of the matrix).

\textit{Approximate recovery} results typically study modified problems such as the problem below, for which Equation~\eqref{MC} can be interpreted as a Lagranian form):
\begin{align}
\label{MCsoft}
&\min_Z \frac{1}{|\Omega|}\sum_{(i,j)\in\Omega} \ell(R_{i,j},Z_{i,j})\quad \text{subject to}\nonumber \\
& \quad \quad \quad  \|Z\|_*\leq \mathcal{M},
\end{align}
for some loss function $\ell$ which is typically assumed to be bounded and Lipschitz, and some constant $\mathcal{M}$ which must be tuned through cross-validation in a way analogous to the tuning of $\lambda$ in equation~\eqref{MC} in real-life applications. 
In the case of IMC, the equivalent problem is:
\begin{align}
\label{IMCsoft}
&\min_M \frac{1}{|\Omega|}\sum_{(i,j)\in\Omega} \ell([XMY^\top]_{i,j},Z_{i,j})\quad \text{subject to}\nonumber \\
&  \quad \quad \quad \|M\|_*\leq \mathcal{M}.
\end{align}
Approaching the problem this way allows one to deploy the machinery of Rademacher complexities from traditional statistical learning theory to obtain uniform bounds on the generalization gap of any predictor in the given class. Using such techniques, bounds of $\widetilde{O}\left(\sqrt{\frac{nr}{N}}\right)$ (resp. $\widetilde{O}(a^2 r/\sqrt{N})$, more recently $\widetilde{O}\left(\sqrt{\frac{ar}{N}}\right)$) were shown for approximate recovery MC (resp. IMC) under uniform sampling (MC: see~\cite{ReallyUniform1,ReallyUniform2}, IMC, see~\cite{mostrelated,LedentIMC}, cf. also related works). In the distribution-free case, the corresponding rates are  $\widetilde{O}\left(\sqrt{\frac{n^{3/2}r^{1/2}}{N}}\right)$  and $\widetilde{O}\left(\sqrt{\frac{a^{3/2}r^{1/2}}{N}}\right)$. 

The above rates do not make any assumptions on the noise whatsoever, and depend only on explicit dimensional quantities: they are classified as "uniform convergence" bounds in the classic paradigm of statistical learning theory. In particular, while they do also apply to the noiseless case, they are subsumed by the exact recovery results in this case provided the exact recovery threshold is reached.

Thus the most striking hole in the existing theory is the chasm between exact recovery and approximate recovery in Inductive Matrix Completion: on the one hand, we know that if the entries are observed exactly, solving problem~\eqref{IMChard} will eventually recover the whole matrix exactly with high probability given enough entries. On the other hand, we know from the approximate recovery literature that regardless of the noise distribution, solving a properly cross-validated version of problem~\eqref{IMCsoft} will allow us to approach the Bayes error at speed at least $1/\sqrt{N}$ as we observe more entries. It seems reasonable to expect that in real life, neither of these approaches fully explains the statistical generalization landscape of the problem:  we never expect to  observe the entries exactly, and the ground truth is probably not exactly low-rank either, but we still do not expect convergence to the Bayes error to be as slow as in the worst case. What would be more reasonable to expect is a sharp decline of the error around a threshold value before which no method can work even if the entries are observed exactly, followed by a slower decline as the model refines its predictions and evens out the noise in the observations.  This can be observed practically as well, as can be seen  from Figure~\ref{omegaline} in the experiments section: the decay of the error as the number of samples increases is neither convex  (unlike the functions $1/\sqrt{N}$ and $1/N$), nor completely abrupt (as exact recovery results suggest), which indicates the presence of a threshold phenomenon. 

 In this paper, we theoretically capture this phenomenon through generalization error bounds for the solutions to problem~\eqref{IMC}  when the ground truth matrix is observed with some subgaussian noise of subgaussianity constant $\sdv$. In addition, our results completely  remove the orthogonality assumptions on the side information matrices $X,Y$ which are present in the related work~\cite{IMCtheory1}, thus improving the state of the art even in the exact recovery case. 
 
 In summary, we make the following important contributions: 
\begin{enumerate}
	\item We prove (cf. Theorem~\ref{thm:exactOnotation}) that \textit{exact recovery} is possible for IMC  (when the entries are observed exactly) with probability $1-\Delta$ given 
	$\widetilde{O}\left(\mu^5 r^2(a+b)\sigma_0^{-4}  \log\left(\frac{mn}{\Delta}\right) \right )$	samples or more. This is a significant extension of the results in~\cite{IMCtheory1} in that we remove most many of their assumptions. In the formula above, $\mu$ is a measure of incoherence, and $\sigma_0$ denotes the smallest singular value of $X$ or $Y$ assuming they are normalized so that the largest singular value is $1$ in each case. This means that after suitable scaling, $\sigma_0$ can be replaced by the ratio between the largest and smallest singular values of either $X$ or $Y$. The presence of this factor underpins one of the main differences between~\cite{IMCtheory1} and our work. Indeed, the most limiting assumption in~\cite{IMCtheory1} is that the columns of the side information matrices $X$ and $Y$ are \textit{orthonormal}, which is equivalent to assuming that $\sigma_0=1$. 
	\item We experimentally observe the two-phase phenomenon described above via synthetic data experiments. 
	\item We prove generalization bounds (cf. Theorem~\ref{thm:approxOnotation}) which capture this phenomenon in the case of bounded loss functions such as the truncated $L^2$ loss. Indeed, we show that as long as $N$ exceeds the threshold from the exact recovery result, the expected loss scales as
	$\widetilde{O}\left(\sigma_0^{-2}\sdv \mu \frac{\sqrt{a^3b}}{\sqrt{N}} \log^{3}(N/\Delta)\right)$, where $\sdv$ is the subgaussianity constant of the noise. If $\sigma$ is very small, this implies that before the exact recovery threshold (ERT) is reached, the best available bounds are the uniform convergence bounds (which are vacuous at that regime), whereas as soon as the ERT is crossed, our bounds become valid and already have a small value, which continues to drop further as the number of samples increases. This partially explains the sharp drop in the reconstruction error around the ERT even in the noisy case. 
	\item Using Talagrand-type inequalities, we  further prove a similar result (cf. Theorem 3) which applies to the absolute loss $\ell(x,y)=|x-y|$, despite the fact that it is unbounded. 
\end{enumerate}
Note as a side benefit that both of the last two results apply to the Lagrangian formulation of the IMC problem, unlike most of the existing literature on approximate recovery. 

Our second result creates a bridge between the approximate recovery literature and the exact recovery literature: as the subgaussianity constant of the noise $\sdv$ converges to zero, so does the error: the result then reduces to our exact recovery result. Furthermore, our proof techniques also marry both approaches: we rely \textit{both} on the geometry of dual certificates (the tool of choice in the exact recovery literature) \textit{and} Rademacher complexities to reach our result. Beyond our current preliminary results, we believe that the direction we initiate here will prove fertile and that many improved results can be proved, bringing us closer to a complete understanding of the sample complexity landscape of nuclear norm based Inductive Matrix Completion.

\section{Related work}



\noindent \textbf{Perturbed exact recovery with the nuclear norm:} For Matrix Completion without side information, bounds which capture the two-phase phenomenon by incorporating a multiplicative factor of the the variance of the noise 
 have been shown: in~\cite{noisycandes}, a bound of order $O\left(\sqrt{\frac{n^3}{N}}\sdv +\sdv\right)$  is shown for the $L^2$ generalisation error of matrix completion with noise of variance $\sdv$ (Cf. equation III.3 on page 7). The proof relies on a perturbed version of the exact recovery arguments presented in~\cite{Genius}. The result considers a different loss function and does not consider side information and the proof is purely based on directly computing various norms without relying on Rademacher complexities. In a recent and very impressive contribution~\cite{ChenChi20} provided some bounds in the same setting with a finer multiplicative dependency on the size of the matrix $n$ that matches the order of magnitude of the exact recovery threshold (when expressed in terms of sample complexity). The proof is very involved and contrary to our work, the results do not apply to inductive matrix completion.

\noindent \textbf{Exact recovery with the nuclear norm:} In~\cite{SimplerMC}, extending and simplifying earlier work of~\cite{Genius,CandesRecht}, the author proves that exact recovery is possible for matrix completion with the nuclear norm with $\widetilde{O}(nr)$ entries. The result is extended to the case where side information is present in~\cite{IMCtheory1} where it is shown exact recovery is possible with $\widetilde{O}((a+b)r)$ observations, where $a,b$ are the sizes of the side information. However, the result only applies as long as this side information consists of orthonormal columns, significantly reducing the applicability. Other variations of the results exist with improved dependence on certain parameters such as the incoherence constants~\cite{Chen_2015}. 


\noindent \textbf{Perturbed exact recovery for other algorithms}
in learning settings other than nuclear norm minimization, there  is some work with low-noise regimes where the bounds also approach zero as the noise approaches zero (for large enough $N$).  For instance, some work on max norm regularisation has this property~\cite{maxnorm}. Some results of order $\widetilde{O}\left(\sdv\sqrt{\frac{nr}{N}}\right)$ were also obtained for matrix completion with a special algorithm that \textit{requires explicit rank restriction}~\cite{noisy,wang2021matrix}.


\noindent \textbf{Approximate recovery results:} There is a wide body of works proving uniform-convergence type generalization bounds for various matrix completion settings. the vast majority are of order $\widetilde{O}(1/\sqrt{N})$, with most bounds differing from each other in their dependence on other quantities such as $m,n,r,\mu, \sigma$ and (in IMC) $a,b$. 
For matrix completion, ~\cite{ReallyUniform1,ReallyUniform2} proves bounds of order $\widetilde{O}\left(\sqrt{\frac{n^{3/2}r^{1/2}}{N}}\right)$ in the \textit{distribution-free setting} with replacement, as well as $\widetilde{O}\left( \frac{nr\log(n)}{N} +\sqrt{\frac{\log(1/\delta)}{N}}  \right)$ in the transductive setting (i.e. for \textit{uniform} sampling \textit{without} replacement). In the case of inductive matrix completion, rates of $\widetilde{O}\left(\sqrt{\frac{rab}{N}}\right)$ were shown in~\cite{mostrelated,mostrelatedearly,espain} in a distribution-free situation, whilst~\cite{LedentIMC} provides rates of order $\widetilde{O}\left(\sqrt{\frac{ra}{N}}\right)$  and $\widetilde{O}\left(\sqrt{\frac{r^{1/2}a^{3/2}}{N}}\right)$  in the uniform sampling and distribution-free cases respectively. Similar rates were implicitly proved in the more algorithmic contribution~\cite{omic} under very strict assumptions on the side information $X,Y$.   
It is also worth noting that although the component of our result  which involves the subgaussianity of the noise is vacuous when the size of the side information approaches that of the matrix, that is also the case of every approximate recovery result for IMC to date except the very recent paper~\cite{LedentIMC}, whose results are also uniform convergence bounds. Our bounds are far tighter those in all of those works when the noise is small. 

\noindent \textbf{Matrix sensing:}
Matrix sensing is a learning setting with some similarities to inductive matrix completion where rank-one measurements $\langle vw^\top, R \rangle$ of an unknown matrix $R$ are taken, and the matrix $R$ is estimated. There are a wide variety of results depending on the assumptions on the matrix and the sampling distribution~\cite{physics,Aachen,vary,IMCtheory2}. In most cases, the measurements are sampled i.i.d. from some distribution, which introduces some substantial technical differences to the IMC setting. Often, the underlying measurements need to satisfy the restricted isometry property, which is not directly comparable to the joint incoherence assumptions on the side information matrices made in this paper and in the IMC literature. In addition, most results relate to pure exact recovery rather than a low-noise model such as the one studed here. 

\section{Notation and setting}

We assume there is an unknown ground truth matrix $R\in\R^{m\times n}$ that we observe noisily. To draw a sample from the distribution, we first sample an entry $\xi=(\xi_1,\xi_2)=(i,j)$ from the uniform distribution over $[m]\times [n]$. We then observe the quantity $R_{(i,j)}+\zeta_{(i,j)}$ where $\zeta_{(i,j)}$ is the noise, whose distribution can depend on the entry $(i,j)$. The samples are drawn i.i.d.

We suppose we have a training set of $N$ samples and we write $\Omega$ for the set of sampled entries $\xi^1,\xi^2,\ldots,\xi^N$. It is possible to sample the same entry several times (which results in potentially different observations due to the i.i.d. nature of the noise). However, for simplicity of notation we will sometimes write $\sum_{(i,j)\in\Omega} f(R_{(i,j)})$ instead of $\sum_{\xi\in\Omega } f(R_{\xi_{1},\xi_2},\xi)$ as long as no ambiguity is possible. 
We are given two side information matrices $X\in\R^{m \times a}$ and $Y\in\R^{n\times b}$. Throughout this paper, $\|\nbull\|$ denotes the spectral norm, $\|\nbull\|_{\Fr}$ denotes the Frobenius norm, $\|\nbull\|_{*}$ denotes the nuclear norm, and for any integer $l$,  $[l]=\{1,2,\ldots,l\}$.

We make the following assumptions throughout the paper: 
\begin{assumption}[Realizability]
	There exists a matrix $M_*\in \R^{a\times b}$ such that  $R=XM_{*}Y^\top$.
\end{assumption}

\begin{assumption}[Assumptions on the subgaussian noise]
We assume the noise is $\sdv$ subgaussian: $\E(\zeta)=0$ and $\P(|\zeta|\geq t)\leq 2\exp(-t^2/(2\sdv^2))$ for  all $t$.
	
\end{assumption}


We will write $\widebar{X}$ and $\widebar{Y}$ for the matrices obtained by normalizing the columns of $X,Y$ and we will write $\Sigma_1,\Sigma_2$ for the diagonal matrices containing the singular values of $X,Y$. Similarly we will also write $\widebar{\widebar{X}}=\widebar{X}\Sigma_1$ etc. 

We also make the following incoherence assumption. 
\begin{assumption}
There exists a constant $\mu$ such that the following inequalities hold.
    \begin{align}
\|\bar{X}\|_{\infty}\leq \sqrt{\frac{\mu}{m} },\nonumber & \quad \quad 
\|\bar{Y}\|_{\infty}\leq \sqrt{\frac{\mu}{n} },\nonumber \\
\|A\|_{\infty}\leq \sqrt{\frac{\mu}{a}}, &\quad \quad 
\|B\|_{\infty}\leq \sqrt{\frac{\mu}{b}},\label{def:incoherencestrict}
\end{align}
Here the matrices $A,B$ are from the SVD decomposition of the ground truth core matrix $M_*=ADB^\top$ for some diagonal $D$.
\end{assumption}

Note that we do not make the assumption that the matrices $X,Y$ have orthonormal columns (and in particular constant spectrum) as in \cite{IMCtheory1}. Therefore, to cope with such extra difficulty~\eqref{def:incoherencestrict} is needed in the general non orthogonal case. Whilst that reference simply assumes that the column spaces of $X,Y$ are $\mu$ incoherent, our assumption requires that \textit{each individual eigenspace} corresponding to each singular value of $X$, $Y$ and $M$ \textit{be $\mu$-incoherent}.  In the supplementary we explain to what extent this slightly stronger assumption is necessary in the  non-orthogonal case.

\noindent \textbf{Optimization problem:} 
whether considering inductive matrix completion or matrix completion with the nuclear norm, it is common to assume that the entries are sampled exactly (without noise) and that the algorithm used to recover the ground truth is the following:
\begin{align}
\label{eq:optnonoise}
\argmin \left(\|M\|_* \> \text{s.t.} \> \forall (i,j)\in\Omega,  [XMY^\top]_{i,j}=R_{i,j}\right).
\end{align}
This is also the optimization problem we study in the exact recovery portion of our results.

In real situations where there is some noise, some relaxation of the problem is necessary. From an optimization perspective, the most common strategy is to minimize the $L^2$ loss on the observed entries plus a nuclear norm regularisation term: 
\begin{align}
\label{eq:optreallife}
\min \frac{1}{N}\sum_{\xi\in\Omega} \left|[R_{\xi}+\zeta_\xi]-XMY^\top \right|^2+\lambda \|M\|_{*},
\end{align}
where $\lambda$ is a regularization parameter. 
The problem we will consider in this paper is the one defined by equation~\eqref{eq:optreallife}. We will also need to impose the following conditions on $\lambda$:

	\begin{align}
	\label{eq:lambdacondmainpaper}
	\frac{\sdv \sigma_0^2}{C\sqrt{aN}}\leq \lambda \leq \frac{C\sdv \sigma_0^2}{\sqrt{aN}}.
	\end{align}
	for some constant $C$. It is assumed that $\lambda$ has been cross-validated to reach a value which satisfies these conditions.

\section{Main results}

\subsection{Exact recovery}
We have the following extension of the main theorem in~\cite{IMCtheory1}: 
\begin{theorem}
	\label{thm:exactOnotation}
	Assume that the entries are observed without noise and that the strong incoherence assumption~\eqref{def:incoherencestrict} is satisfied for a fixed $\mu$. 
	For any $\Delta>0$ as long as 
	$$N\geq \widetilde{O}\left(\mu^5 r^2(a+b)\sigma_0^{-4}  \log\left(\frac{mn}{\Delta}\right) \right ),$$
	with probability $\geq 1-\Delta$ we have that any solution 
	$M_{\min}$ to the optimization problem below
	\begin{align}
	M_{\min}&\in\argmin  \|M\|_* \quad \text{s.t.} \nonumber  \\  \forall (i,j)\in &\Omega,  \quad  [XMY^\top]_{i,j}=R_{i,j},
	\end{align}
	satisfies 
	$$XM_{\min}Y^\top=R.$$  
	Here, as usual, the $\widetilde{O}$ notation hides further log terms in the quantities $m,n,\sigma_0^{-1},\log(\frac{mn}{\Delta})$.
\end{theorem}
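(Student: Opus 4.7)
The plan is to extend the dual certificate plus golfing scheme approach used in~\cite{IMCtheory1} (itself adapted from Gross and~\cite{SimplerMC}) to arbitrary side information matrices $X, Y$ with smallest (normalized) singular value $\sigma_0$. The first step is to set up the tangent space $T$ at the ground truth $M_* = ADB^\top$, namely $T = \{AZ_1^\top + Z_2 B^\top : Z_1 \in \R^{b\times r},\, Z_2 \in \R^{a\times r}\}$, and to identify the measurements as the linear forms $\phi_{\xi}(M) := \langle X_i Y_j^\top, M \rangle = [XMY^\top]_{ij}$ for $\xi=(i,j)$, where $X_i$ and $Y_j$ denote the rows of $X$ and $Y$. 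Uniqueness of $M_*$ for problem~\eqref{eq:optnonoise} then reduces, via the standard subgradient argument for the nuclear norm, to two sufficient conditions: (a) injectivity of $\{\phi_\xi\}_{\xi\in\Omega}$ on $T$, and (b) existence of a dual certificate $W \in \mathrm{span}\{X_iY_j^\top:(i,j)\in\Omega\}$ with $P_T(W)=AB^\top$ and $\|P_{T^\perp}(W)\|<1$.

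The second step is to construct the dual certificate $W$ via Gross's golfing scheme. I would partition $\Omega$ into $L=O(\log(mn/\Delta))$ independent batches $\Omega_1,\ldots,\Omega_L$ of size $N/L$, define the batch sampling operator $\mathcal{R}_{\Omega_k}(M) := \sum_{(i,j)\in\Omega_k}\langle X_iY_j^\top, M\rangle X_iY_j^\top$, and iteratively set $W_0=0$, $Q_0 = AB^\top$, and for $k\geq 1$
\begin{equation*}
W_k = W_{k-1} + \tfrac{mn}{|\Omega_k|}\mathcal{R}_{\Omega_k}(Q_{k-1}),\qquad Q_k = AB^\top - P_T(W_k).
\end{equation*}
This converges provided (i) $\|P_T(I - \tfrac{mn}{|\Omega_k|}\mathcal{R}_{\Omega_k})P_T\|\leq 1/2$ on each batch, so that $\|Q_k\|_{\Fr}$ contracts geometrically and $P_T(W_L)\approx AB^\top$, and (ii) suitable spectral and sup-norm bounds on $(\tfrac{mn}{|\Omega_k|}\mathcal{R}_{\Omega_k}-I)$ applied to matrices of controlled sup-norm ensure $\|P_{T^\perp}(W_L)\|<1$. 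The injectivity in (a) is then a byproduct of the same contraction applied to the full sample.

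The third and main technical step is to verify the two concentration statements above via matrix Bernstein (and its scalar version), where the non-orthogonality of $X$ and $Y$ produces the extra $\sigma_0^{-1}$ factors that distinguish our result from~\cite{IMCtheory1}. In the orthonormal case, the atoms $X_iY_j^\top$ are essentially orthogonal under the Hilbert--Schmidt inner product, so that $\mathbb{E}[\phi_\xi(M)^2]=\tfrac{1}{mn}\|M\|_{\Fr}^2$; without orthonormality, $\mathbb{E}[\phi_\xi(M)^2] = \tfrac{1}{mn}\|XMY^\top\|_{\Fr}^2 = \tfrac{1}{mn}\|\Sigma_1 M \Sigma_2\|_{\Fr}^2$, which is only bounded below by $\tfrac{\sigma_0^4}{mn}\|M\|_{\Fr}^2$. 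Consequently the variance proxy in the matrix Bernstein bound for $P_T\mathcal{R}_{\Omega_k}P_T$ acquires a factor of $\sigma_0^{-4}$, which propagates directly to the claimed sample complexity $\widetilde{O}(\mu^5 r^2(a+b)\sigma_0^{-4}\log(mn/\Delta))$. The main obstacle is keeping careful track of these $\sigma_0$ factors while respecting the fact that the strong incoherence assumption~\eqref{def:incoherencestrict} is imposed on the normalized matrices $\bar{X},\bar{Y}$ and on the SVD factors $A,B$ of $M_*$ itself (not of $\Sigma_1 M_* \Sigma_2$); this prevents a clean reduction to the orthonormal setting via the substitution $\tilde{M}=\Sigma_1 M\Sigma_2$ (under which the nuclear norm objective would become a weighted nuclear norm) and forces a hybrid analysis in the original coordinates, with $\Sigma_1,\Sigma_2$ factors absorbed separately into the spectral, sup-norm, and Frobenius-norm estimates at each golfing iteration.
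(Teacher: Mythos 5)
Your high-level strategy — dual certificate plus golfing scheme, extending~\cite{IMCtheory1} to non-orthonormal side information — is the right family of ideas, and it is the paper's family too. However, there is a genuine gap in the way you set up the golfing iteration, and it is exactly the difficulty that the paper's main technical device (the norms $\|\nbull\|_{\nuc}$ and $\|\nbull\|_{\spec}$) is designed to avoid.

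You work in the core space $\R^{a\times b}$ and define the batch operator $\mathcal{R}_{\Omega_k}(M)=\sum_{(i,j)\in\Omega_k}\langle X_iY_j^\top,M\rangle X_iY_j^\top$, then ask for the contraction condition $\|P_T(I-\tfrac{mn}{|\Omega_k|}\mathcal{R}_{\Omega_k})P_T\|\leq 1/2$. But when $X,Y$ are not orthonormal, this condition cannot hold even in expectation: a direct computation gives
$\mathbb{E}\bigl[\tfrac{mn}{|\Omega_k|}\mathcal{R}_{\Omega_k}(M)\bigr]=(X^\top X)\,M\,(Y^\top Y)=\Sigma_1^2\,M\,\Sigma_2^2$,
which is not $M$ unless $\Sigma_1=\Sigma_2=I$. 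So the rescaled sampling operator concentrates around $\Sigma_1^2\otimes\Sigma_2^2$, not the identity, and your residuals $Q_k=AB^\top-P_T(W_k)$ do not contract geometrically. The diagnosis you give in your third step — that the variance proxy for $P_T\mathcal{R}_{\Omega_k}P_T$ acquires a factor $\sigma_0^{-4}$ — is a symptom of this same confusion: the issue is not a larger variance proxy around the identity, it is that the mean itself is wrong. You would need to either precondition $\mathcal{R}_{\Omega_k}$ by $\Sigma_1^{-2}\otimes\Sigma_2^{-2}$ (which then distorts the nuclear-norm subgradient you are targeting) or work with rescaled atoms, and your own last paragraph correctly flags why that rescaling clashes with the incoherence being stated on $A,B$ rather than on $\Sigma_1A,\Sigma_2B$ — but you do not resolve the clash.

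The paper's resolution is to never define the sampling operator on $\R^{a\times b}$ at all. It defines $P_T$, $P_{T^\top}$ and $P_\Omega$ as operators on the \emph{ambient} space $\R^{m\times n}$, where $P_\Omega$ is a sum of orthonormal coordinate projections $e_ie_j^\top$ and therefore $\tfrac{mn}{N}P_\Omega$ genuinely concentrates around the identity with no $\sigma_0$ penalty; this is why the concentration lemmas (e.g.\ Lemma~\ref{lem:delta3}, Lemma~\ref{lem:delta4}) have no $\sigma_0$ factors at all. The non-orthonormality of $X,Y$ is instead isolated in a dual pair of norms on $\R^{m\times n}$, $\|Z\|_{\nuc}=\min\{\|M\|_*:XMY^\top=Z\}$ and $\|Z\|_{\spec}=\|X^\top Z Y\|$, proved dual to each other, and the $\sigma_0^{-4}$ in the sample complexity arises from bounds on the \emph{dual certificate} $U$ for that modified norm: $\|U\|\leq\sigma_0^{-2}$ (Lemma~\ref{lem:columnsarenotnormed}) and $\|U\|_\infty\leq\sqrt{r\mu_1/(mn)}\,\sigma_0^{-2}$ (Lemma~\ref{lem:11notsoeasy}), which then inflate the per-batch size $\bar T$ by $\sigma_0^{-4}$ in the golfing step that controls $\|P_{T^\top}(\mathcal{Y})\|$ (Lemma~\ref{lem:maxnormsarenteasy}). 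In short: same golfing skeleton, but the paper lifts the whole argument to $\R^{m\times n}$ so that the measurement ensemble stays isotropic, and pays the $\sigma_0$ price only through norm comparisons on the dual certificate. Your proposal would need that change of coordinates to go through; as written, the contraction at the heart of the golfing scheme fails.
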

\textbf{Remark:} The above optimization problem can be seen as a limiting case of ~\eqref{eq:optreallife} with $\lambda\rightarrow 0$.\\
\noindent \textbf{Remark:} The above theorem has several advantages over the main theorem in~\cite{IMCtheory1}:
\begin{enumerate}
	\item It is expressed entirely in terms of a fixed high probability $1-\Delta$ (as opposed to relying on dimensional quantities in the expression for the high probability).
	\item It works without assuming that the side information matrices have unit singular values. This is quite a significant improvement as the result in~\cite{IMCtheory1} only holds when the side information matrices belong to a given set of measure zero. There is a quadratic dependence on $\sigma_0^{-1}$ (the inverse of the smallest singular value of either $X$ or $Y$), which matches the dependence in~\cite{PIMC} (although that paper works with a completely different optimization problem away from traditional nuclear norm regularization). 
	\item It holds for any value of $N$, whereas the result in~\cite{IMCtheory1} required $N\leq \widetilde{O}(abr)$ and the result in~\cite{SimplerMC} (which concerns standard MC without side information) required $N\leq mn$.
\end{enumerate}

\subsection{Approximate recovery in a low-noise setting}

Below we present theorems which provide generalization bounds for the IMC model~\eqref{IMC} with the favourable property that they improve when the noise is reduced, and they reduce exactly to the exact recovery result when $\sdv=0$.

The following theorem provides a generalization bound of order $\widetilde{O}\left(a^{3/2}\sqrt{b} \mu \sigma_0^{-2} \sdv  \sqrt{\frac{1}{N}}\right)$ for a bounded Lipschitz loss.

\begin{theorem}
	\label{thm:approxOnotation}
Let $\ell$ be an $\lip$-Lipschitz loss function bounded by $B_\ell$.
	Assume that condition~\eqref{eq:lambdacondmainpaper} on $\lambda$ holds. 
	For any $\Delta>0$, with probability $1-\Delta$ as long as 	$$N\geq \widetilde{O}\left(\mu^5 r^2(a+b)\sigma_0^{-4}  \log\left(\frac{mn}{\Delta}\right) \right ),$$ we have the following bound on the performance of the solution $\widehat{R}$ to the optimization problem~\eqref{IMC}:
	
		\begin{align}
	\label{eq:thefinalmain}
&\mathbb{E}_{(i,j)\sim\mathcal{U}} (\ell(\widehat{R}_{(i,j)},[R+\zeta]_{(i,j)})) \leq   \\
&
	 O\left(a^{3/2}\sqrt{b} \mu \sigma_0^{-2} \sdv \lip  \log^{3}\left(\frac{Nmn}{\Delta}\right) \sqrt{\frac{1}{N}} +B_\ell\frac{\log(\frac{1}{\Delta})}{N}\right),\nonumber 
	\end{align}
where $\mathcal{U}$ stands for the uniform distribution on the entries $[m]\times [n]$.

\end{theorem}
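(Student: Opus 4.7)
The plan is to combine the dual-certificate machinery behind Theorem~\ref{thm:exactOnotation} with a Rademacher-complexity generalization argument, letting the subgaussian noise scale $\sdv$ enter only through a perturbation bound quantifying how far the empirical minimizer $\widehat{M}$ of~\eqref{eq:optreallife} strays from the ground truth $M_*$. Setting $H := \widehat{M}-M_*$ and comparing $\widehat{M}$ with $M_*$ in the objective of~\eqref{eq:optreallife}, expanding the squared residual on $\Omega$ gives the optimality inequality
\begin{align*}
\tfrac{1}{N}\|P_\Omega(XHY^\top)\|_{\Fr}^2 + \lambda\bigl(\|\widehat{M}\|_*-\|M_*\|_*\bigr) \;\leq\; \tfrac{2}{N}\bigl\langle P_\Omega(\zeta),\, XHY^\top\bigr\rangle.
\end{align*}
From here the analysis splits into two sub-tasks: a deterministic perturbation bound on $\|XHY^\top\|_{\Fr}$ and on $\|\widehat M\|_*$, and a Rademacher-based passage from the empirical $L^2$ error to the expected loss.

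For the perturbation step I would re-run the golfing scheme used to prove Theorem~\ref{thm:exactOnotation} (adapted from~\cite{SimplerMC,IMCtheory1}) inside the noisy Lagrangian setting. Under the sample-size threshold $N\geq \widetilde{O}(\mu^5 r^2(a+b)\sigma_0^{-4}\log(mn/\Delta))$, the golfing iteration produces a certificate $W$ in the range of $P_\Omega\circ(X\cdot Y^\top)$ which approximates the subgradient $\Lambda_*=AB^\top$ of $\|\cdot\|_*$ at $M_*$ on the tangent space $T$ and is small on $T^\perp$. Using $W$ as a surrogate subgradient yields
\begin{align*}
\|\widehat{M}\|_*-\|M_*\|_* \;\geq\; \langle \Lambda_*,H\rangle + (1-c)\|P_{T^\perp}(H)\|_*.
\end{align*}
Substituting this lower bound into the optimality inequality, absorbing $\langle \Lambda_*,H\rangle$ against the residual of the certificate on $T$, and controlling $\langle P_\Omega(\zeta), XHY^\top\rangle$ through a matrix-Bernstein tail bound on the subgaussian matrix $P_\Omega(\zeta)$ yields both a nuclear-norm control $\|P_{T^\perp}(H)\|_* \lesssim \sdv/(\lambda\sigma_0^2)$ (up to log factors) and a Frobenius control on $\|P_\Omega(XHY^\top)\|_{\Fr}$. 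The range of $\lambda$ in~\eqref{eq:lambdacondmainpaper} is prescribed precisely so that these two estimates balance, yielding a perturbation bound of the shape $\|H\|_* \lesssim \sdv\,\mu\sigma_0^{-2}\sqrt{ab/N}\cdot\text{polylog}$.

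To convert the perturbation bound into the stated expected-loss estimate, I would apply a uniform-convergence argument to the nuclear-norm ball $\mathcal{F}_\mathcal{M} = \{XMY^\top : \|M\|_*\leq\mathcal{M}\}$. Under the incoherence hypothesis and uniform sampling, the Rademacher complexity of $\mathcal{F}_\mathcal{M}$ is of order $\widetilde{O}(\mathcal{M}\sqrt{a/N})$, in line with~\cite{mostrelated,LedentIMC}. Composing with the $\lip$-Lipschitz loss and invoking a Bousquet/Talagrand concentration inequality on the bounded-loss class (whose variance proxy is small because the excess empirical risk of $\widehat{M}$ is itself small by the optimality inequality) converts this into a generalization bound with a Rademacher main term $\widetilde{O}(\lip\mathcal{M}\sqrt{a/N})$ together with a Bernstein remainder $B_\ell\log(1/\Delta)/N$. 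Feeding in $\mathcal{M}=\|M_*\|_*+\|H\|_*$ with the perturbation bound above, and noting that the empirical risk of $\widehat{M}$ is dominated by that of $M_*$, one recovers the stated form $\widetilde{O}(a^{3/2}\sqrt{b}\,\mu\sigma_0^{-2}\sdv\lip/\sqrt{N}) + \widetilde{O}(B_\ell\log(1/\Delta)/N)$.

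The main obstacle is the perturbation step: re-running the golfing construction of~\cite{IMCtheory1} \emph{without} the orthogonality hypothesis on $X,Y$ (so that the factor $\sigma_0^{-4}$ appears through the mismatch between $X$ and its column-normalized counterpart $\overline{X}$, and through the strengthened incoherence assumption~\eqref{def:incoherencestrict}) and \emph{with} the noise term on the right-hand side of the KKT identity. Ensuring that $\|H\|_*$ inherits the $\sdv/\sqrt{N}$ scaling, rather than only $\sdv$, requires a careful iteration of the golfing increments combined with a tight operator-norm tail estimate on $P_\Omega(\zeta)$; it is the interplay between these two concentration estimates, mediated by the prescribed range~\eqref{eq:lambdacondmainpaper} of $\lambda$, that is the technical heart of the result.
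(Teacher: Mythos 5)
Your high-level plan --- an optimality inequality plus a perturbed dual-certificate golfing argument to bound $\|\widehat{M}-M_*\|_*$, followed by a Rademacher complexity bound and a Bousquet/Talagrand concentration step --- is the same skeleton the paper uses, and you correctly identify the role of~\eqref{eq:lambdacondmainpaper} and the need to re-run the golfing construction without the orthonormality hypothesis on $X,Y$. However, the two quantitative estimates you supply would destroy the $\sdv$-scaling that is the whole point of the theorem. The first gap is in the uniform-convergence step: you feed $\mathcal{M}=\|M_*\|_*+\|H\|_*$ into the Rademacher bound for the \emph{absolute} ball $\{XMY^\top:\|M\|_*\leq\mathcal{M}\}$. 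Since $\mathcal{M}$ is dominated by $\|M_*\|_*$, which carries no $\sdv$-dependence, the main term $2(1+\alpha)\E\widehat{\rad}_n(\mathcal{F})$ of Bousquet's inequality (Proposition~\ref{prop:bousquet}) would be $\widetilde{O}\bigl(\lip\|M_*\|_*\sqrt{a/N}\bigr)$, which cannot vanish as $\sdv\to0$. The small variance proxy $r$ you invoke only shrinks the middle $\sqrt{2r\log(1/\delta)/N}$ term, not the Rademacher term itself. The paper avoids this by applying the complexity argument to the \emph{translated} class $\mathcal{F}_1=\{R+Z:\|Z\|_*\leq\rho\}$ with $\rho=\|\widehat{R}-R\|_*$ the radius from the perturbation step; by translation-invariance of Rademacher complexity, $\rad(\mathcal{F}_1)$ then scales with $\rho$, which is proportional to $\sdv$, not with $\|M_*\|_*$.

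The second gap is that your intermediate and final perturbation bounds are inconsistent, and neither matches what the argument actually delivers. You claim $\|P_{T^\perp}(H)\|_*\lesssim\sdv/(\lambda\sigma_0^2)$; substituting $\lambda\sim\sdv\sigma_0^2/\sqrt{aN}$ from~\eqref{eq:lambdacondmainpaper} gives $\lesssim\sqrt{aN}/\sigma_0^{4}$, which grows like $\sqrt{N}$ and, crucially, loses the factor of $\sdv$. That contradicts your claimed final estimate $\|H\|_*\lesssim\sdv\mu\sigma_0^{-2}\sqrt{ab/N}$, which shrinks like $1/\sqrt{N}$. In fact, Lemma~\ref{lem:thekey} in the paper produces a radius $\|\widehat{R}-R\|_*\lesssim aB^2\sigma_0^{-2}\sdv\sqrt{\tau N/\kappa}$ (with $B$ polylogarithmic and $\kappa$ the minimal repetition count of sampled entries), which \emph{grows} like $\sqrt{N}$ in the small-$N$ regime and only plateaus once each entry is sampled many times. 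The final $1/\sqrt{N}$ decay in the generalization bound does not come from a decaying nuclear-norm radius but from the $1/\sqrt{N}$ factor in Lemma~\ref{lem:coarserad} combined with the sample-repetition bookkeeping $\kappa mn\gtrsim N/\log(N/\delta_2)$ established via Lemmas~\ref{lem:delta5} and~\ref{lem:sooomanysamples}. Your proposal does not account for $\kappa$ at all, and without it the argument would not produce a uniform $1/\sqrt{N}$ rate across the full sampling regime.
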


Next, our proof techniques also allow us to prove results which apply to the absolute value loss, despite the fact that it is unbounded.  Indeed, a bound of order $\sqrt{N}$ on the nuclear norm of the difference between the solution and the ground truth is a byproduct of the approximations we perform before applying Rademacher arguments. It can also be used to provide a bound on the \textit{effective} value of $B_\ell$, still yielding an overall rate of $1/\sqrt{N}$ thanks to the fact that the last term in equation~\eqref{eq:thefinalmain} has the strong decay $1/N$. This is a result of our use of the more fine-grained, talagrand-type results from~\cite{localrad} and would not have been possible if we had used standard results on Rademacher complexities such as~\cite{Bartlettmend}.

\begin{theorem}
\label{thm:absoluteloss}
	Assume that condition~\eqref{eq:lambdacondmainpaper} on $\lambda$ holds. 
	For any $\Delta>0$, with probability $1-\Delta$ as long as 	$$N\geq \widetilde{O}\left(\mu^5 r^2(a+b)\sigma_0^{-4}  \log\left(\frac{mn}{\Delta}\right) \right ),$$ we have
		\begin{align}
	\label{eq:thefinalmain}
&\mathbb{E}_{(i,j)\sim \mathcal{U}} \left|\widehat{R}_{(i,j)}-[R+\zeta]_{(i,j)}\right|\leq  \nonumber \\
&
	 O\left(a^{3/2}\sqrt{b} \mu \sigma_0^{-2} \sdv \lip  \log^{3}\left(\frac{Nmn}{\Delta}\right) \sqrt{\frac{1}{N}} \right).
	\end{align}
 Here, $\mathcal{U}$ stands for the uniform distribution on the entries $[m]\times [n]$.
\end{theorem}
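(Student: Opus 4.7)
The strategy is to reduce to Theorem 2 by establishing an effective boundedness of the absolute loss with high probability, and then to exploit the Talagrand-type machinery (as in \cite{localrad}) to ensure that the effective bound $B_\ell$ enters only through the fast $1/N$ term of \eqref{eq:thefinalmain} so that its growth with $N$ is harmless. The absolute loss is $1$-Lipschitz, so $\lip = 1$ is trivial; the entire difficulty is to control the range of $\widehat{R}_{(i,j)} - [R+\zeta]_{(i,j)}$ on a high-probability event.

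First I would control the noise side: because $\zeta$ is $\sdv$-subgaussian, a union bound over the $N$ samples and an additional union bound (or simple sub-gaussian tail) for the expectation over $(i,j)\sim\mathcal{U}$ gives $|\zeta_{(i,j)}| \leq C\sdv\sqrt{\log(N/\Delta)}$ uniformly on an event of probability $\geq 1-\Delta/3$. Next I would control the predictor side. Plugging $M_*$ into \eqref{eq:optreallife}, the optimality of $\widehat{M}$ gives the basic inequality
\begin{equation*}
\lambda\bigl(\|\widehat{M}\|_* - \|M_*\|_*\bigr) \leq \tfrac{1}{N}\sum_{\xi\in\Omega}\Bigl(\zeta_\xi^2 - \bigl|[X(\widehat{M}-M_*)Y^\top - \zeta]_\xi\bigr|^2\Bigr),
\end{equation*}
which, expanded and combined with a standard concentration estimate on $\frac{1}{N}\sum\zeta_\xi^2 \lesssim \sdv^2$, yields $\|\widehat{M}-M_*\|_* \leq \widetilde{O}(\sdv/\lambda) = \widetilde{O}(\sqrt{aN}/\sigma_0^2)$ under the calibration \eqref{eq:lambdacondmainpaper}. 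Combining this with the incoherence assumption on $\widebar{X},\widebar{Y}$ (bounds on rows of $X,Y$) propagates into an entrywise bound
\begin{equation*}
\bigl\|X(\widehat{M}-M_*)Y^\top\bigr\|_\infty \leq \widetilde{O}\!\left(\tfrac{\mu\sqrt{ab}}{\sqrt{mn}}\cdot\tfrac{\sdv\sqrt{aN}}{\sigma_0^2}\right),
\end{equation*}
so the absolute loss is effectively bounded by a quantity $B_\ell^{\mathrm{eff}} = \widetilde{O}(\mathrm{poly}(a,b,\mu,\sigma_0^{-1})\sdv\sqrt{N})$.

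Given this boundedness on a high-probability event, I would rerun the proof of Theorem 2 with the loss truncated at level $B_\ell^{\mathrm{eff}}$. The contribution from the truncated-loss Rademacher term is unchanged (it only uses the Lipschitz constant $\lip=1$ via the contraction principle, together with the nuclear-norm based complexity of $\{XMY^\top : \|M\|_*\leq\|M_*\|_*+\widetilde{O}(\sdv/\lambda)\}$), and reproduces the $\widetilde{O}(a^{3/2}\sqrt{b}\,\mu\sigma_0^{-2}\sdv/\sqrt{N})$ factor already appearing in Theorem 2. The $B_\ell/N$ tail in \eqref{eq:thefinalmain} is replaced by $B_\ell^{\mathrm{eff}}/N = \widetilde{O}(\mathrm{poly}(\cdot)\sdv\sqrt{N}/N) = \widetilde{O}(\mathrm{poly}(\cdot)\sdv/\sqrt{N})$, which is of the same order as the main term. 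The reason this substitution is permissible (and does not reintroduce a $B_\ell^{\mathrm{eff}}/\sqrt{N}$ term) is precisely the localized Talagrand-type inequality of \cite{localrad}: the uniform range of the loss class only affects the higher-order deviation term scaling as $1/N$, not the leading Rademacher complexity scaling as $1/\sqrt{N}$.

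The main obstacle is the derivation of the entrywise (equivalently $\ell_\infty$) bound on $X(\widehat{M}-M_*)Y^\top$ with the correct dependence on $\sigma_0$, $\mu$, $a$, $b$. A naive nuclear-norm-to-entrywise conversion loses a factor of $\sqrt{mn}$; avoiding this loss requires re-using the dual certificate / incoherence projection machinery of Theorems 1--2 rather than a crude Hölder bound. A secondary difficulty is the careful bookkeeping needed to verify that the Talagrand inequality in \cite{localrad} applies to the (truncated) class $\{(i,j)\mapsto|[XMY^\top]_{(i,j)} - [R+\zeta]_{(i,j)}|\}$ after the noise has been conditioned on a high-probability boundedness event. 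Once the $\ell_\infty$ control on $\widehat{R}-R$ and the localized complexity bound are in hand, combining them through a union bound over the three high-probability events (noise tail, basic inequality, Rademacher deviation) gives \eqref{eq:thefinalmain}, with the $B_\ell\log(1/\Delta)/N$ tail of Theorem 2 absorbed into the main $1/\sqrt{N}$ rate as claimed.
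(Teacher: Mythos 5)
Your high-level plan matches the paper's: establish a high-probability bound on $\|\widehat{R}-R\|_*$ (hence $\|\widehat{R}-R\|_\infty$, since $\|\cdot\|_\infty\leq\|\cdot\|_*$), truncate the absolute loss at that level, invoke Theorem~\ref{thm:approxOnotation} with the truncated loss, and observe that because the Bousquet/Talagrand inequality of \cite{localrad} places the uniform bound $B_\ell$ in the $1/N$ term only, the effective $B_\ell^{\mathrm{eff}}=\widetilde{O}(\sdv\sqrt{N})$ contributes $\widetilde{O}(\sdv/\sqrt{N})$ and so merges into the leading term. That is precisely the paper's Corollary~\ref{cor:mainwithabsoluteloss}, which simply recycles the bound $\|\widehat{R}-R\|_*\leq 70CaB^2\sigma_0^{-2}\sdv\sqrt{\theta N/\kappa}$ already produced inside the proof of Theorem~\ref{thm:mainwithconditioningnumber} and sets $\ell(x,y):=\min(|x-y|,\,70CaB^2\sigma_0^{-2}\sdv\sqrt{\theta N/\kappa})$.

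However, your derivation of the key nuclear-norm bound $\|\widehat{M}-M_*\|_*\lesssim \sdv^2/\lambda$ from the ``basic inequality'' does not work. Optimality of $\widehat{M}$ in \eqref{eq:optreallife} gives
\begin{align}
\lambda\bigl(\|\widehat{M}\|_*-\|M_*\|_*\bigr)+\tfrac{1}{N}\|P_\Omega(H-\zeta)\|_{\Fr}^2\leq \tfrac{1}{N}\|P_\Omega\zeta\|_{\Fr}^2,\nonumber
\end{align}
where $H=X(\widehat{M}-M_*)Y^\top$. Since the empirical loss is nonnegative, this only gives the one-sided bound $\|\widehat{M}\|_*\leq\|M_*\|_*+O(\sdv^2/\lambda)$. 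It does not control $\|\widehat{M}-M_*\|_*$: e.g. $\widehat{M}$ could differ from $M_*$ by a large matrix that simply decreases nuclear norm. To convert ``small increase in nuclear norm'' into ``small nuclear norm of the difference'' you must use the local geometry of $\|\cdot\|_{\nuc}$ at $R$, i.e. the (approximate) dual-certificate/subgradient argument of Lemma~\ref{lem:thekey} with the $P_T$, $P_{T^\top}$ decomposition and the concentration Lemmas~\ref{lem:delta3}--\ref{lem:delta7}. That is the entire technical content of Theorems~\ref{thm:exactOnotation}--\ref{thm:approxOnotation}, and your sketch bypasses it. (You do mention ``re-using the dual certificate machinery'' as a potential obstacle, but you place this at the $\ell_\infty$-conversion step, which in fact needs nothing fancy: $\|\cdot\|_\infty\leq\|\cdot\|\leq\|\cdot\|_*$, and the paper uses exactly this crude inequality — your claim that a naive nuclear-to-entrywise conversion loses $\sqrt{mn}$ is incorrect.) So the gap is real: replace the basic-inequality step by an invocation of Lemma~\ref{lem:thekey} (as applied inside the proof of Theorem~\ref{thm:mainwithconditioningnumber}), and the rest of your plan goes through.
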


\section{Proof strategy}

The main ideas of our proof are (1) to redefine a norm on $\R^{m\times n}$ matrices that captures the effect of  the side information matrices, and (2) to combine proof techniques from both the approximate recovery literature and the exact recovery literature: we perturb the analysis from the exact recovery literature to obtain a bound on the discrepancy between the ground truth and the recovered matrix, and then bootstrap the argument by exploiting the i.i.d. nature of the noise and results from traditional complexity analysis to yield a generalization bound. 

In this informal description, we sometimes write formulae with such as $P_{\Omega}(\widehat{R}-R)$, denoting the projection of $\widehat{R}-R$ onto the set of matrices whose non zero entries are in $\Omega$, which requires assuming that each entry was sampled only once. However, this assumption is made purely for simplicity of exposition and it is not made or needed in the formal proofs in the supplementary.

\subsection{Background on existing techniques}
The main strategy of the proof of the exact recovery results in both~\cite{IMCtheory1} and~\cite{SimplerMC}, which goes back to earlier work~\cite{Genius,CandesRecht,noisycandes} is to use the duality between the nuclear norm and the spectral norm to study the behavior of the nuclear norm around the ground truth. 

It is easiest to explain the strategy in the case of standard matrix completion (as in~\citealp{SimplerMC,noisycandes} etc.). For a given matrix $R$ with singular value decomposition $EDF^\top$, if the columns and rows of $W$ are orthogonal to those of $R$ and it satisfies $\|W\|\leq 1$, the matrix  $\mathcal{Y}:=EF^\top+W$ is a \textit{subgradient to the nuclear norm} at $R$, and a solution to the maximization problem 
\begin{align}
\max_{\mathcal{Y}}  & \left\langle \mathcal{Y} ,R\right\rangle  \quad \text{subject to}\nonumber \\
\|\mathcal{Y}\|&\leq 1.\nonumber  
\end{align}

The subgradients as above allow us to understand the local behavior of the nuclear norm around the ground truth, and one of the most important observations in the early exact recovery analysis of matrix completion is that exact recovery is guaranteed if there exists such a subgradient \textit{whose non zero entries are all in the set of observed entries} and whose spectral norm is $<1$. A subgradient with this property is referred to as a \textit{dual certificate}. Indeed, we have the following result from~\cite{noisycandes}: 
\begin{lemma}
	If there exists a dual certificate $\mathcal{Y}$, then  for any $Z$ with $Z_{i,j}=0\quad \forall (i,j)\in\Omega$  we have \begin{align}
	\|R+Z\|_{*}\geq \|R\|_*-(1-P_{T^\top}(\mathcal{Y}))\|P_{T^\top}(Z)\|_{*}.
	\end{align}
	In particular, $R$ is the unique solution to the optimization problem~\eqref{eq:optnonoise}. Here $P_{T}(Z)=ZP_F+P_EZ-P_EZP_F$ where $P_E$ and $P_F$ are the projection operators onto the column and row spaces of the ground truth respectively. 
\end{lemma}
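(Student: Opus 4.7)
The plan is to exploit the variational characterization of the nuclear norm via its subdifferential at $R$, combined with the vanishing of $\mathcal{Y}$ off $\Omega$ and of $Z$ on $\Omega$, which together make $\mathcal{Y}$ and $Z$ orthogonal in the Frobenius inner product. Recall that the subdifferential of $\|\cdot\|_*$ at $R=EDF^\top$ consists of all matrices of the form $EF^\top+W$ with $W\in T^\perp$ (equivalently $P_E W=0$ and $WP_F=0$) and $\|W\|\leq 1$. For any such subgradient $\mathcal{G}$, convexity gives the first-order inequality $\|R+Z\|_*\geq \|R\|_*+\langle \mathcal{G},Z\rangle$, and the game is to choose $W$ cleverly.

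First I would use duality between the spectral and nuclear norms inside $T^\perp$: pick $W\in T^\perp$ with $\|W\|\leq 1$ such that $\langle W,P_{T^\perp}(Z)\rangle=\|P_{T^\perp}(Z)\|_*$ (the supporting functional of the nuclear norm evaluated at $P_{T^\perp}(Z)$, which automatically lies in $T^\perp$). Plugging $\mathcal{G}=EF^\top+W$ into the subgradient inequality and splitting $Z=P_T(Z)+P_{T^\perp}(Z)$, while using $\langle W,P_T(Z)\rangle=0$ since $W\in T^\perp$, yields
\begin{equation*}
\|R+Z\|_*\;\geq\;\|R\|_*+\langle EF^\top,Z\rangle+\|P_{T^\perp}(Z)\|_*.
\end{equation*}

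Next I would extract the complementary bound on $\langle EF^\top,Z\rangle$ from the dual certificate. Since $\mathcal{Y}$ is supported on $\Omega$ while $Z$ vanishes on $\Omega$, we have $\langle \mathcal{Y},Z\rangle=0$. Decomposing $\mathcal{Y}=EF^\top+P_{T^\perp}(\mathcal{Y})$ and using that $P_{T^\perp}$ is a self-adjoint projection, this forces
\begin{equation*}
\langle EF^\top,Z\rangle\;=\;-\langle P_{T^\perp}(\mathcal{Y}),P_{T^\perp}(Z)\rangle\;\geq\;-\|P_{T^\perp}(\mathcal{Y})\|\cdot\|P_{T^\perp}(Z)\|_*,
\end{equation*}
by the spectral/nuclear duality inequality. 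Adding this to the previous display gives the announced bound
\begin{equation*}
\|R+Z\|_*\;\geq\;\|R\|_*+\bigl(1-\|P_{T^\perp}(\mathcal{Y})\|\bigr)\|P_{T^\perp}(Z)\|_*,
\end{equation*}
which is the content of the lemma (reading the displayed $P_{T^\top}(\mathcal{Y})$ in the statement as the spectral norm of the orthogonal-tangent projection).

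For uniqueness of $R$ as solution of~\eqref{eq:optnonoise}, I would split into two cases for any candidate perturbation $Z\neq 0$ with $Z|_\Omega=0$. If $P_{T^\perp}(Z)\neq 0$, the strict bound $\|P_{T^\perp}(\mathcal{Y})\|<1$ that is built into the definition of a dual certificate gives $\|R+Z\|_*>\|R\|_*$ directly. The residual case $P_{T^\perp}(Z)=0$, i.e.\ $Z\in T$, is the genuine obstacle, because the main inequality becomes vacuous there: one must invoke an injectivity estimate showing that $P_\Omega$ restricted to $T$ has trivial kernel, so that $Z\in T$ and $P_\Omega(Z)=0$ force $Z=0$. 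This is the standard "sampling operator is well-conditioned on the tangent space" statement that is proved separately via a Bernstein-type concentration argument on $\|P_T P_\Omega P_T-\frac{N}{mn}P_T\|$, under the incoherence and sample-size hypotheses already assumed in the theorem. The main obstacle I expect is therefore not the clean convex-duality calculation above, but packaging this tangent-space injectivity in the IMC setting, where $T$ is defined through the side-information-adjusted column/row spaces and the incoherence enters through Assumption on $\bar X,\bar Y,A,B$ rather than through orthonormal factors.
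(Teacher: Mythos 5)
Your argument is correct and is essentially the route the paper itself sketches around equation~\eqref{eq:recht} (there in the more general approximate-certificate form): a subgradient $EF^\top+W$ with $W$ chosen as the norming functional of $P_{T^\perp}(Z)$ inside $T^\perp$, the orthogonality $\langle\mathcal{Y},Z\rangle=0$ coming from the complementary supports, and spectral/nuclear duality to control $\langle P_{T^\perp}(\mathcal{Y}),P_{T^\perp}(Z)\rangle$. Two remarks. First, you have in fact derived $\|R+Z\|_*\geq\|R\|_*+\bigl(1-\|P_{T^\perp}(\mathcal{Y})\|\bigr)\|P_{T^\perp}(Z)\|_*$ with a plus sign; the minus sign in the displayed inequality of the lemma must be a typo, since only the plus version makes the right-hand side exceed $\|R\|_*$ and hence supports the uniqueness conclusion — so your version is the intended one. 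Second, you are right that the "in particular" clause is not a consequence of the displayed inequality alone: when $Z\in T$ the bound is vacuous, and one needs injectivity of $P_\Omega$ restricted to $T$; in this paper that ingredient is exactly the concentration bound $\|P_T-\tfrac{mn}{N}P_TP_\Omega P_T\|\leq\tfrac12$ of Lemma~\ref{lem:delta3}, so your identification of the missing hypothesis and of how it is supplied matches the paper's actual architecture.
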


The high-level intuition behind such a result is that if the set of "observable" matrices whose entries are constrained to lie in the set of observed entries is big enough to contain suitable subgradients, then it is big enough to make the solution to~\eqref{eq:optnonoise} unique.

Whilst most of the early works in the field~\cite{Genius,CandesRecht} work with sampling without replacement and rely on complex combinatorial arguments to prove the existence of a dual certificate, the breakthrough in the work of~\cite{SimplerMC} is to sample with replacement (simplifying the concentration arguments) and to show that the existence of an \textit{approximate} dual certificate is also enough to guarantee uniqueness. More precisely, let $Z\in\R^{\Omega^{\top}}$ be a matrix with zeros in all entries outside $\Omega$, and let $U,U^\top$ be the canonical subgradients of $R$ and $P_{T}(Z)$ respectively. Assume there is an approximate dual certificate $\mathcal{Y}$ with the property that $\| U-P_T(\mathcal{Y})\|_{\Fr}$ is very small and $P_{T^\top}(\mathcal{Y})<1/2$, then we have 
\begin{align}
\label{eq:recht}
&\|R+Z\|_*\nonumber \\
&\geq \left\langle  U+U^\top     ,   R+Z           \right\rangle  \nonumber \\
&= \|R\|_*+ \left\langle  U+U^\top     ,   Z           \right\rangle \nonumber \\
&=\|R\|_*+ \left\langle  U-P_T(\mathcal{Y})    ,   P_T(Z)           \right\rangle\nonumber \\& \quad \quad +\left\langle  U^\top -P_{T^\top}(\mathcal{Y})    ,   P_{T^\top}(Z)           \right\rangle  \nonumber \\
&\geq \|R\|_{*}-\|U-P_T(\mathcal{Y}) \|_{\Fr} \| P_T(Z)    \|_{\Fr}   \nonumber \\& \quad \quad +\|P_{T^\top}(Z)\|_* \left(   1-\|P_T(\mathcal{Y})\|)\right).
\end{align}
As long as $\|P_T(\mathcal{Y})\|<1$, $\|U-P_T(\mathcal{Y}) \|_{\Fr}$ is small enough and $ \| P_T(Z)    \|_{*} $ is not too large in relation to $\|P_{T^\top}(Z)\|_*$, the solution will thus be unique.

In~\cite{IMCtheory1} these ideas are extended to the case where side information matrices $X,Y$ \textit{with orthonormal columns} is provided. The key here is that with this assumption on the columns, $\|XMY^\top\|_*=\|M\|_*$ for any matrix $M$, so that most of the arguments above still hold with minor modification, even after replacing the projection operator $P_T$ by its inductive analogue $P_{T}(Z)=P_XZP_F+P_EZP_B-P_EZP_F$.

\subsection{Removing the homogeneity assumption: proof strategy}

In our case, where $X,Y$ are arbitrary (they can without loss of generality be assumed to have orthogonal columns, though not necessarily of norm $1$), it is no longer true that  $\|XMY^\top\|_*=\|M\|_*$ for any $M$. To tackle this issue, we define a norm $\|Z\|_{\nuc}$ on the set of matrices $\R^{m\times n}$ which equals the minimum possible nuclear norm of a matrix $M$ such that $XMY^\top=Z$: 
\begin{align}
\|Z\|_{\nuc}&= \min\left( \|M\|_{*}\quad : XMY^\top =Z      \right).
\end{align}

A key observation is that both this norm and \textit{its dual} can be computed easily. Indeed, it is easy to see that  $\|Z\|_{\nuc}=\Sigma_1^2 X^\top Z Y \Sigma_2^2$ where $\Sigma_1,\Sigma_2$ are matrices containing the singular values of $X,Y$. Furthermore, we also show in the supplementary that in fact the dual norm $\|\nbull\|_{\spec}$ is simply the spectral norm of the matrix $X^\top R Y$. These modifications mean that during the proof, we must manipulate 5 different norms ($\|\nbull\|,\|\nbull\|_*,\|\nbull\|_{\spec},\|\nbull\|_{\nuc}$ and $\|\nbull\|_{\Fr}$), sometimes incurring factors of the smallest singular value $\sigma_0$ of $X,Y$. 

We note that removing the homogeneity assumption has consequences in the proofs, including the need for a stronger incoherence assumption.

\subsection{Fast decay in low-noise settings: proof strategy}

In addition, we need to account for the noise, thus instead of perturbing the matrix $R$ only by a matrix $Z$ with $P_{\Omega}(Z)=0$, we also perturb it by a matrix $H$ with $P_{\Omega^\top}(H)=0$ corresponding to the difference between the recovered matrix and the ground truth on the observed entries. Thus our recovered matrix, the solution to algorithm~\eqref{IMC}, $\widehat{R}$, can be written $\widehat{R}=R+H+Z$. 

Our next step is to perform a perturbed version of the calculation in equation~\eqref{eq:recht} taking into account the difference $H=P_{\Omega}(\widehat{R}-R)$. This is the calculation performed in the proof of Lemma~\ref{lem:thekey}. As previously we write $U$ for a subgradient  of $\|R\|_{\nuc}$ and $U^\top$ for a subgradient of $\|P_T(Z)\|_{\nuc}$. We start by expressing $\|\widehat{R}\|_{\nuc}$ as $\left\langle R+H+Z , U+U^\top      \right\rangle $ and after some calculations we obtain the following conclusion: 

\begin{align}
\label{eq:ourkey}
\|R\|_{\nuc}&\geq \|\widehat{R}\|_{\nuc}\nonumber \\
&\geq   \|R\|_{\nuc}-2\|H\|_{\nuc}+\frac{1}{4} \|P_{T^{\top}} (Z)\|_{\nuc}, 
\end{align}
which holds as long as several concentration phenomena occur (which will happen with high probability as long as $N$ is large enough).

Our next step is to bound $\|H\|_{\nuc}$. With high probability, the noisily observed entries of $R$ on $\Omega$ (the $R_\xi+\zeta_{\xi}$ ) are close to the actual entries $R$, which in turn implies that the entries of $H$ will not be too large (see the beginning of the proof of Theorem~\ref{thm:mainwithconditioningnumber}).

This yields a bound of order $\widetilde{O}(\sqrt{N}\nu)$ for $\|H\|_{\nuc}$, and then via equation~\eqref{eq:ourkey}, on $\|P_T(Z)\|_*$. Together with further modifications, this eventually yields a bound on the nuclear norm of $Z+H=\widehat{R}-R$. This means that our perturbed version of the exact recovery results places the recovered matrix $\widehat{R}$ inside of the  smaller function class of matrices within a bounded spectral norm of the ground truth matrix. At this point, we can leverage classical results on the Rademacher complexity of the function class of matrices with bounded nuclear norm (see Lemma~\ref{lem:coarserad}  below for the inductive version we use in practice) to further bound the generalization gap. Several further steps are needed to process the final result into an elegant formula that holds for any value of $N$. The details are in the supplementary material.

\begin{lemma}[\citealp{mostrelated}]
	\label{lem:coarserad}
	
	The function class $\left \{XMY^\top  : \|M\|_{*}\leq \mathcal{M}   \right \}$ satisfies
	\begin{align}
	\rad(\mathcal{F}_{\mathcal{M}})\leq \mathbf{x}\mathbf{y}\mathcal{M} \sqrt{\frac{1}{N}},
	\end{align}
	where $\mathbf{x}:=\|X^\top\|_{2,\infty}$ and $\mathbf{y}:=\|Y^\top\|_{2,\infty}$.
\end{lemma}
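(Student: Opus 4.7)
The plan is to unfold the definition of the Rademacher complexity, factor out $\|M\|_*$ using nuclear/spectral duality, and then control the resulting expected spectral norm by the (larger) expected Frobenius norm, which is easy to compute in closed form using the independence of the Rademacher variables.

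First I would write out the empirical Rademacher complexity. Fixing the sample $\Omega=\{(i_k,j_k)\}_{k=1}^{N}$, for any $M$ the prediction on the $k$-th entry can be rewritten as a trace inner product with a rank-one matrix:
\begin{align*}
[XMY^\top]_{i_k,j_k} \;=\; x_{i_k}^\top M\, y_{j_k} \;=\; \langle M,\, x_{i_k} y_{j_k}^\top\rangle,
\end{align*}
where $x_i$ denotes the $i$-th row of $X$ viewed as a column vector (and similarly for $y_j$). Hence
\begin{align*}
\rad(\mathcal{F}_\mathcal{M}) \;=\; \mathbb{E}_\sigma\!\left[\sup_{\|M\|_*\le \mathcal{M}} \frac{1}{N}\left\langle M,\, S\right\rangle \right], \qquad S \;:=\; \sum_{k=1}^{N} \sigma_k\, x_{i_k} y_{j_k}^\top.
\end{align*}

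Next I would apply nuclear/spectral duality. Since the spectral norm $\|\nbull\|$ is the dual of the nuclear norm, $\langle M,S\rangle \le \|M\|_*\,\|S\|$, so the supremum is at most $\mathcal{M}\|S\|$. Therefore
\begin{align*}
\rad(\mathcal{F}_\mathcal{M}) \;\le\; \frac{\mathcal{M}}{N}\,\mathbb{E}_\sigma\,\|S\| \;\le\; \frac{\mathcal{M}}{N}\,\mathbb{E}_\sigma\,\|S\|_{\Fr} \;\le\; \frac{\mathcal{M}}{N}\sqrt{\mathbb{E}_\sigma\|S\|_{\Fr}^{2}},
\end{align*}
where the second inequality uses $\|\nbull\|\le\|\nbull\|_{\Fr}$ and the third is Jensen's inequality.

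Finally I would compute the expected squared Frobenius norm. Expanding $\|S\|_{\Fr}^2 = \sum_{k,\ell}\sigma_k\sigma_\ell\,\mathrm{tr}(x_{i_k}y_{j_k}^\top y_{j_\ell} x_{i_\ell}^\top)$ and taking expectation over the independent Rademacher signs, all cross terms vanish and only the diagonal $k=\ell$ survives, giving
\begin{align*}
\mathbb{E}_\sigma\|S\|_{\Fr}^{2} \;=\; \sum_{k=1}^{N}\|x_{i_k}\|_2^{2}\,\|y_{j_k}\|_2^{2} \;\le\; N\,\mathbf{x}^2\mathbf{y}^2,
\end{align*}
where the last step uses $\|x_{i_k}\|_2\le \|X^\top\|_{2,\infty}=\mathbf{x}$ and likewise for $y_{j_k}$. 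Combining the two displays yields exactly the claimed bound $\rad(\mathcal{F}_\mathcal{M})\le \mathbf{x}\mathbf{y}\mathcal{M}\sqrt{1/N}$.

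There is no genuine obstacle in this argument; the only subtlety is recognizing the rank-one reformulation $[XMY^\top]_{i,j}=\langle M,x_iy_j^\top\rangle$ that converts the problem into a standard duality/concentration computation. A tighter bound on $\mathbb{E}\|S\|$ is achievable via matrix Bernstein, but dominating by the Frobenius norm is already sufficient to recover the stated rate.
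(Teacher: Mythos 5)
Your proof is correct. The paper itself does not give a self-contained argument: it simply invokes Theorem~1 of the cited Kakade et al.\ reference together with nuclear/spectral duality. Your derivation makes that argument explicit and, in fact, is more elementary than what the citation suggests: after the duality step $\sup_{\|M\|_*\le\mathcal{M}}\langle M,S\rangle=\mathcal{M}\|S\|$ (which is an equality, though the inequality suffices), you control $\mathbb{E}_\sigma\|S\|$ by the crude but sufficient domination $\|S\|\le\|S\|_{\Fr}$ and a direct second-moment computation, which yields exactly the stated constant with no logarithmic factor --- avoiding the strongly-convex-regularizer or matrix-concentration machinery that general-purpose results for the nuclear-norm ball typically carry. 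The rank-one reformulation $[XMY^\top]_{i_k,j_k}=\langle M,\,x_{i_k}y_{j_k}^\top\rangle$, the vanishing of cross terms under independent Rademacher signs, and the bound $\|x_{i_k}\|_2\|y_{j_k}\|_2\le\mathbf{x}\mathbf{y}$ are all correct, so the argument stands as a complete replacement for the paper's citation-only proof.
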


\begin{proof}
	Follows directly from Theorem 1 in~\cite{Kakade}, together with the duality between the nuclear and spectral norms~\cite{NuclearSpectreDual}. Cf. also~\cite{mostrelated}.
\end{proof}

\begin{figure*}
    \centering
    \includegraphics[width=0.68\linewidth]{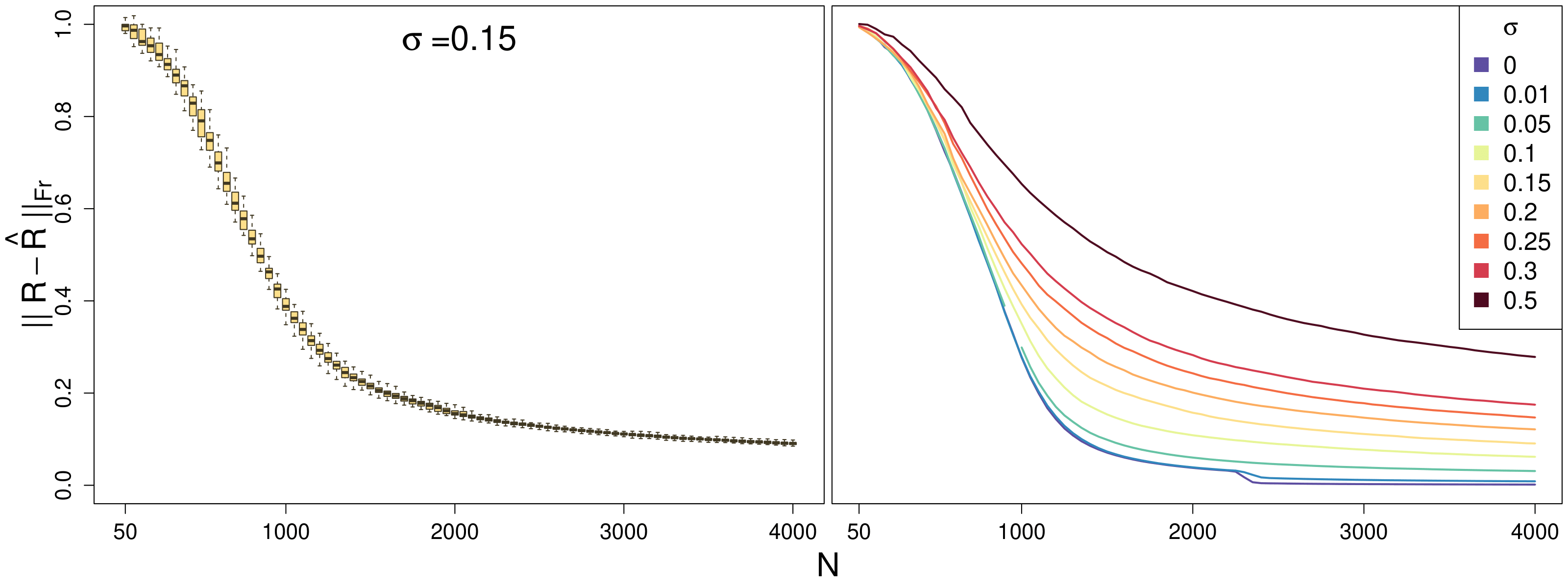}
    \caption{$\|\widehat{R}-R\|_{\Fr}$ as a function of $N,\sigma$}
    \label{omegaline}
\end{figure*}

\section{Experiments}

In this paper, we have posited that an accurate understanding of the sample complexity landscape of inductive matrix completion requires treating the noise component differently from the ground truth entries for the purposes of complexity. In this section we present the experiments we ran to confirm that a two-phase phenomenon as suggested by our bounds does in fact occur in practice. 

We considered random matrices of size $100\times 100$ and of rank $10$\footnote{To generate such a random matrix, we generate matrices $U,V\in\R^{100\times 10}$ with i.i.d. gaussian entries, then we form the matrix $UV^\top$ and we normalize it to have Frobenius norm $100$.}, and created random orthonormal side information of rank $40$, ensuring that the singular vectors of the ground truth matrix are in the span of the relevant side information, but with the orientation being otherwise uniformly random. The ground truth matrices were normalized to have Frobenius norm 100, and we then added i.i.d. $N(0,\sigma^2)$ gaussian noise to each observation. We performed classic inductive matrix completion (with the square loss) on the resulting training set, cross-validating the parameter $\lambda$ on a validation set, and evaluated the RMSE distance between the resulting trained matrix and the ground truth. We performed this whole procedure for a wide range of different values for the number of samples $N$. For each value of $N$ we perform the procedure on $40$ different random matrix and side information.

The results are presented in Figure~\ref{omegaline} below. The graph on the left contains box plots for our simulation with $\sigma=0.15$ whilst the graph on the right presents our results, averaged over the $40$ simulations, for several values of $\sigma$.



As can be observed in the figure, the graph of the error as a function of $N$ is not convex, despite the fact that traditional approximate recovery bounds $\widetilde{O}(1/\sqrt{N})$ are convex. Instead, the graph looks like a sigmoid: we can clearly observe a thresholding phenomenon where the performance is very poor initially, but very quickly improves past a minimum number of entries. 
Furthermore, as can be expected, after the threshold is crossed the error decreases slowly (at least as $\widetilde{O}(\frac{\sigma}{\sqrt{N}})$ as per the bounds in Theorem~\ref{thm:approxOnotation} above), confirming that inductive matrix completion in low-noise settings exhibits a two-phase phenomenon matching our theoretical results. Furthermore, the fact that the post threshold error curve scales as $\sigma$ is also apparent from the graphs.

\section{Conclusion and future directions}

In this paper, we have  studied  \textit{Inductive Matrix Completion} with nuclear norm regularisation in low-noise regimes. Our first contribution is an exact recovery result which generalizes the existing ones to the case where the side information is no longer assumed to be orthonormal, and to an arbitrary sampling regime (previously, the number of samples was required to be bounded \textit{above} by $\widetilde{O}(abr)$). Our second contribution consists in generalization bounds composed of two components: (1) the requirement that the number of samples should exceed a given threshold and (2) a term of order $\widetilde{O}(\sdv\sigma_0^{-2}a^{3/2}\sqrt{b}\log^3(N/\Delta)\sqrt{\frac{1}{N}})$ (ignoring incoherence constants and other constant quantities), which is directly proportional to the subgaussianity constant $\sigma$ of the noise. In particular, the result forms a bridge between exact recovery results and approximate recovery results: at the regimes where exact recovery is possible, the error converges to zero when the noise converges to zero. 

We believe our result and proof strategy open the door to a new and unexplored direction of research. Possible future directions include improving the dependence on $N$ from $1/\sqrt{N}$ to $\frac{1}{N}$, extending the results to non-trivially non uniform distributions or providing analogues of our results for other low-rank learning problems such as density estimation~\cite{song14,ROOOB,kargas19,anandkumar14,ROBXIV,amiridi21a,amiridi21b} or more complex recommender systems models that involve implicit feedback or graph/cluster information~\citep{Zhang2020Inductive,autobomic,wu2020inductive,EASE,vanvcura2022scalable,daniel,GRAPH_AC}.
Improving the dependence on $a,b$ to match the scaling of the ERT is also a very ambitious and interesting aim.



\section*{Acknowledgements}
Rodrigo Alves thanks Recombee for supporting his research. Marius Kloft acknowledges support by the Carl-Zeiss Foundation, the DFG awards KL 2698/2-1, KL 2698/5-1, KL 2698/6-1, and KL 2698/7-1, and the BMBF awards 01|S18051A, 03|B0770E, and 01|S21010C.

\bibliography{Bibliographomic}

\clearpage
\onecolumn

\setcounter{secnumdepth}{0}

\numberwithin{equation}{section}
\numberwithin{theorem}{section}
\numberwithin{figure}{section}
\numberwithin{table}{section}
\renewcommand{\thesection}{{\Alph{section}}}
\renewcommand{\thesubsection}{\Alph{section}.\arabic{subsection}}
\renewcommand{\thesubsubsection}{\Roman{section}.\arabic{subsection}.\arabic{subsubsection}}
\setcounter{secnumdepth}{-1}
\setcounter{secnumdepth}{3}

\appendix

\section{Some Concentration Inequalities and Classic Results}

\begin{proposition}
    Let $\zeta_1,\ldots,\zeta_N$ be i.i.d. $\sdv^2$-subgaussian random variables, i.e. $\log(\E(\exp(\lambda \zeta)))\leq \frac{\lambda^2\sdv^2}{2}$ for all $\lambda$. 
    For any $\delta>0$ we have with probability $\geq 1-\delta$:
    \begin{align}
    \sum_{i=1}^N \zeta_i^2\leq 32 N \sigma^2 \log\left(\frac{1}{\delta}\right).
    \end{align}
\end{proposition}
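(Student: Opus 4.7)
The plan is a standard Chernoff argument on $\sum_i \zeta_i^2$. First I would bound the moment generating function of a single $\zeta^2$ via the Gaussian integration trick: writing $\exp(\lambda \zeta^2) = \E_{Y\sim N(0,1)}[\exp(\zeta Y\sqrt{2\lambda})]$ for an independent standard Gaussian $Y$, swapping expectations by Fubini, and applying the hypothesis $\log \E[\exp(t\zeta)] \leq t^2\sdv^2/2$, I would obtain
\[
\E[\exp(\lambda \zeta^2)] \;\leq\; \E_Y[\exp(\lambda \sdv^2 Y^2)] \;=\; (1 - 2\lambda\sdv^2)^{-1/2}
\]
for every $\lambda < 1/(2\sdv^2)$. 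This is the textbook passage from subgaussianity of $\zeta$ to subexponentiality of $\zeta^2$.

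Second, I would fix $\lambda = 1/(4\sdv^2)$, which gives $\E[\exp(\lambda\zeta^2)] \leq \sqrt{2}$. By independence, $\E[\exp(\lambda\sum_i\zeta_i^2)] \leq 2^{N/2}$, so Markov's inequality yields
\[
\P\Bigl(\sum_{i=1}^N \zeta_i^2 > t\Bigr) \;\leq\; 2^{N/2}\exp\bigl(-t/(4\sdv^2)\bigr).
\]
Choosing $t = 2N\sdv^2\log 2 + 4\sdv^2 \log(1/\delta)$ makes the right-hand side equal $\delta$, already a clean subexponential tail bound on $\sum \zeta_i^2$.

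The last step is to absorb the constants into the advertised expression $32 N\sdv^2\log(1/\delta)$. In the relevant regime $\delta \leq 1/2$ we have $\log(1/\delta) \geq \log 2$, so the two summands of $t$ are bounded by $2N\sdv^2\log(1/\delta)$ and $4\sdv^2\log(1/\delta)$ respectively, giving $t \leq 6N\sdv^2\log(1/\delta)$ for $N \geq 1$, comfortably inside $32 N\sdv^2\log(1/\delta)$. I do not expect any real obstacle: the subgaussian-to-subexponential machinery is standard, the constant $32$ leaves enormous slack, and the bound is only invoked in the main paper with $\delta$ polynomially small in $N, m, n$, so the awkward regime $\delta$ close to $1$ never actually arises.
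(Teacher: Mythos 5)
Your proof is correct and takes a genuinely different route from the paper's. You pass from subgaussianity of $\zeta$ to subexponentiality of $\zeta^2$ by bounding the moment generating function $\E[\exp(\lambda\zeta^2)]$ directly via Gaussian decoupling, then apply a plain Chernoff bound at the fixed point $\lambda=1/(4\sdv^2)$. The paper instead bounds the higher moments $\E[\zeta^{2q}]\leq q![4\sdv^2]^q$ by quoting Theorem 2.1 of Boucheron--Lugosi--Massart, and then invokes their Bernstein inequality (Theorem 2.10), obtaining the two-term deviation estimate
\begin{align}
\left|\sum_{i=1}^N\zeta_i^2-N\sdv^2\right|\leq \sqrt{32N\sdv^4\log(1/\delta)}+8\sdv^2\log(1/\delta),
\end{align}
and simplifying. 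Your one-shot Chernoff calculation is more elementary and self-contained; the paper's Bernstein route produces a sharper \emph{two-sided} concentration around the mean with a $\sqrt{N}$ leading term, which is genuinely tighter in the regime $\log(1/\delta)\ll N$, but both are then coarsened to the identical $32N\sdv^2\log(1/\delta)$ statement, so nothing is lost for the purposes of this paper. One small point you handle more carefully than the paper: the final absorption of constants in both arguments silently requires $\delta$ bounded away from $1$ (you use $\delta\leq 1/2$; the paper's last inequality needs roughly $\delta\leq 1/e$), and you say so explicitly, whereas the paper leaves it implicit. Since every downstream use takes $\delta$ polynomially small in $N,m,n$, this does not affect anything.
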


\begin{proof}

By Theorem 2.1 in~\cite{concentration} (page 25) we have 
\begin{align}
\label{eq:concenfirst}
\E(\zeta^4)&\leq 2\times  2!(2\sdv^2)^2\leq 16\sdv^4 \quad \quad \text{and} \quad \quad \forall q\\
\E([\zeta^2]^q))&=\E([\zeta^2]_{+}^q))\leq q! [4\sdv^2]^q.
\label{eq:concenfirst2}
\end{align}

We will now apply Theorem 2.10 (Bernstein's inequality, page 37) from~\cite{concentration} to the random variable $\sum_{i=1}^N\zeta_i^2$. By equation~\eqref{eq:concenfirst} we have 
\begin{align}
\sum_{i=1}^N \E([\zeta_i^2]_+^2)\leq 16N\sdv^4.
\end{align}
Furthermore, we also have by equation~\eqref{eq:concenfirst2} (for all $q\geq 3$):
\begin{align}
\sum_{i=1}^N\E([\zeta_i^2]_{+}^q)&\leq Nq! [4\sdv^2]^q=q![16N\sdv^4][4\sdv^2]^{q-2}\\
&\leq \frac{q!}{2}[16N\sdv^4][8\sdv^2]^{q-2}.
\end{align}

Thus we can apply Theorem 2.10 from~\cite{concentration} with "$\nu$" being $[16N\sdv^4]$ and "c" being $[8\sdv^2]$. We obtain that with probability $\geq 1-\delta$ we have 
\begin{align}
\left|\sum_{i=1}^N\zeta_i^2-N\sdv^2\right|&\leq \sqrt{2[16N\sdv^4]\log\left(\frac{1}{\delta}\right)} +[8\sdv^2] \log\left(\frac{1}{\delta}\right)\leq 16\sdv^2 \log\left(\frac{1}{\delta}\right) \sqrt{N},
\end{align}
as expected.

\end{proof}

\begin{proposition}
     Let $\zeta$ be a  $\sdv^2$-subgaussian random variable, i.e. $\log(\E(\exp(\lambda \zeta)))\leq \frac{\lambda^2\sdv^2}{2}$ for all $\lambda$. We have 
     \begin{align}
        \Var(\zeta)\leq \sigma^2.
     \end{align}
\end{proposition}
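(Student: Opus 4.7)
The plan is to extract both the mean and the second moment of $\zeta$ from the moment generating function bound $\varphi(\lambda) := \E(\exp(\lambda\zeta)) \leq \exp(\lambda^2\sigma^2/2) =: \psi(\lambda)$ by comparing Taylor expansions at $\lambda = 0$. Since $\varphi(0) = \psi(0) = 1$ and $\varphi \leq \psi$ everywhere, the function $\varphi - \psi$ attains its maximum value $0$ at $\lambda = 0$, which will force control of the first two derivatives at $0$ and hence of $\E(\zeta)$ and $\E(\zeta^2)$.

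First I would justify that the subgaussianity assumption guarantees that $\varphi$ is $C^\infty$ on a neighbourhood of $0$ with $\varphi'(0) = \E(\zeta)$ and $\varphi''(0) = \E(\zeta^2)$, i.e.\ that one may interchange differentiation and expectation; this follows from the fact that subgaussianity implies finite moments of all orders and uniform integrability of $\lambda \mapsto \zeta \exp(\lambda\zeta)$ and $\zeta^2 \exp(\lambda\zeta)$ in a neighbourhood of $0$. Since $\varphi - \psi \leq 0$ with equality at $\lambda = 0$, the point $0$ is a local maximum of $\varphi - \psi$; its first derivative vanishes, which yields $\varphi'(0) = \psi'(0)$, i.e.\ $\E(\zeta) = 0$. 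Its second derivative is nonpositive, which yields $\varphi''(0) \leq \psi''(0)$, i.e.\ $\E(\zeta^2) \leq \sigma^2$. Combining these two facts gives
\begin{align*}
\Var(\zeta) = \E(\zeta^2) - \E(\zeta)^2 = \E(\zeta^2) \leq \sigma^2,
\end{align*}
as claimed.

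\textbf{Potential obstacle.} The only technical point that requires care is the differentiation under the expectation sign used to identify $\varphi'(0)$ and $\varphi''(0)$ with the first and second moments of $\zeta$; once this is established, the argument reduces to an elementary calculus fact about a smooth function that is dominated by another smooth function and agrees with it at a point. An alternative, entirely avoiding derivatives, would be to write $\exp(\lambda\zeta) = 1 + \lambda\zeta + \tfrac{1}{2}\lambda^2\zeta^2 + \lambda^3 R(\lambda,\zeta)$, take expectations, subtract the analogous expansion of $\psi(\lambda)$, divide by $\lambda^2$, and let $\lambda \to 0$ while using dominated convergence to kill the remainder; this gives $\E(\zeta^2) \leq \sigma^2$ directly once $\E(\zeta) = 0$ has been deduced by the same method at first order.
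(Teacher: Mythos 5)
Your proof is correct. The paper does not actually supply an argument for this proposition — it simply cites Exercise 2.16 of \cite{concentration} — so there is nothing in the paper to compare your proof against step by step. What you do instead is give a self-contained derivation: you note that $\varphi(\lambda)-\psi(\lambda)\leq 0$ with equality at $\lambda=0$, so that $0$ is a global (hence local) maximum, and then read off $\varphi'(0)=\psi'(0)$ and $\varphi''(0)\leq\psi''(0)$, which translate into $\E(\zeta)=0$ and $\E(\zeta^2)\leq\sigma^2$ once differentiation under the expectation is justified. The justification you give (subgaussianity makes the moment generating function finite everywhere, hence smooth, so $\varphi^{(k)}(0)=\E(\zeta^k)$) is standard and correct, and your alternative Taylor-remainder route is essentially the argument the cited exercise leads the reader through. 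One small remark: the proposition statement in the paper does not explicitly assume $\E(\zeta)=0$ (though it is stated elsewhere as a running assumption), so it is a genuine feature of your proof that it derives $\E(\zeta)=0$ from the MGF bound rather than assuming it — this is what lets you conclude $\Var(\zeta)=\E(\zeta^2)$ and close the argument.
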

\begin{proof}
Cf. Exercise 2.16  on page 49 of~\cite{concentration}.
\end{proof}

\begin{proposition}[Theorem 2.1 on page 8 of~\cite{localrad}]
   \label{prop:bousquet}
   Let $\mathcal{F}$ be a class of functions that maps $\mathcal{X}$  to $[a,b]$. Assume that there is some $r>0$ such that for all for all $f\in\mathcal{F}$, $\Var(f)\leq r$. Then for all $\delta>0$ we have with probability $\geq 1-\delta$:
    \begin{align}
    \sup_{f\in\mathcal{F}}\left[ \E(f)-\sum_{i=1}^N f(x_i)\right]\leq \inf_{\alpha > 0}\left[ 2(1+\alpha)\E (\widehat{\rad}_n(\mathcal{F})) +\sqrt{\frac{2r \log\left(\frac{1}{\delta}\right)}{N}} +(b-a)\left(\frac{1}{3}+\frac{1}{\alpha}\right) \frac{ \log\left(\frac{1}{\delta}\right)}{N} \right],
    \end{align}
    where $\widehat{\rad}_n(\mathcal{F})$ denotes the empirical rademacher complexity of $\mathcal{F}$. Furthermore, the same result holds for  $\sup_{f\in\mathcal{F}} \left[\sum_{i=1}^N f(x_i)-\E(f)\right]$.
\end{proposition}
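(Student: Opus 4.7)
The plan is to reuse the bulk of the proof of Theorem~\ref{thm:approxOnotation} and only replace its final Rademacher step by an application of the Talagrand--Bousquet inequality from Proposition~\ref{prop:bousquet}, taking advantage of the a priori $\sqrt{N}$ bound on $\|\widehat{R}-R\|_{\nuc}$ that the perturbed dual certificate argument already produces as a byproduct.

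First, I would rerun the exact-recovery-style analysis of the minimizer of~\eqref{eq:optreallife} verbatim. The subgradient calculation sketched in equation~\eqref{eq:ourkey}, combined with the subgaussian concentration $\sum_{\xi\in\Omega}\zeta_\xi^2=O(N\sdv^2\log(1/\Delta))$ and the optimality of $\widehat R=X\widehat MY^\top$ for the $L^2$ problem, yields a high-probability bound
\[
\|\widehat M-M_*\|_*\ \le\ \mathcal M_N\ :=\ O\!\left(\sigma_0^{-2}\sdv\sqrt{aN}\cdot\mathrm{polylog}(Nmn/\Delta)\right).
\]
I would then condition on the event that this bound holds and that $|\zeta_\xi|\le C\sdv\sqrt{\log(N/\Delta)}$ for every observed $\xi$ and for a fresh test draw; both events hold with probability $\ge 1-\Delta/2$ by the subgaussian tail. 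This places the predictor $\widehat R$ inside the restricted class $\mathcal F_{\mathcal M_N}=\{XMY^\top:\|M\|_*\le\mathcal M_N\}$, which is where the final generalization argument will take place.

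Next, because $\ell(x,y)=|x-y|$ is $1$-Lipschitz but unbounded, I would truncate the loss: set $T:=2\mathcal M_N+C\sdv\sqrt{\log(N/\Delta)}$ and $\tilde\ell(x,y):=\min(|x-y|,T)$. On the conditioning event above, $\tilde\ell$ agrees with $\ell$ on every observed entry and on a fresh test draw, so the bound for $\E[\ell]$ and the bound for $\E[\tilde\ell]$ differ only by an exponentially small subgaussian tail that is absorbed into the $\Delta$ budget. I would then apply Proposition~\ref{prop:bousquet} to the function class $\{\tilde\ell(XMY^\top,R+\zeta):\|M\|_*\le\mathcal M_N\}$. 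Three ingredients appear: (i) the Rademacher term, bounded by Lipschitz contraction and Lemma~\ref{lem:coarserad}, which produces the leading $a^{3/2}\sqrt b\,\mu\sigma_0^{-2}\sdv\,\lip/\sqrt N$ rate once $\mathcal M_N$ is substituted; (ii) the variance term $\sqrt{2r\log(1/\Delta)/N}$, giving $O(\sdv\sqrt{\log(1/\Delta)/N})$ once a uniform bound $r=O(\sdv^2\,\mathrm{polylog})$ is established; and (iii) the residual $T\log(1/\Delta)/(3N)$ which, even with $T$ of order $\sqrt N$, contributes only $O(\sdv\,\mathrm{polylog}/\sqrt N)$. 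This is precisely the point flagged in the paper's remark on the ``effective $B_\ell$'', and is the only step where Proposition~\ref{prop:bousquet} is strictly stronger than a standard Rademacher bound.

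The hard part is pinning down the uniform variance $r$. Using $\mathrm{Var}(\tilde\ell)\le\E(f-R-\zeta)^2=\|f-R\|_{\Fr}^2/(mn)+\sdv^2$ together with $\|f-R\|_{\Fr}\le\|f-R\|_*\le 2\mathcal M_N$ gives $r\le 4\mathcal M_N^2/(mn)+\sdv^2$, which is $O(\sdv^2\,\mathrm{polylog})$ only inside the regime $N\lesssim mn\sigma_0^4/a$ and becomes loose as $N\to mn$. To handle this uniformly, I would refine the class further to the intersection with a spectral-norm ball obtained by a finer perturbation of the dual certificate argument: bounding $\|P_T(\widehat M-M_*)\|$ (spectral) in addition to $\|P_T(\widehat M-M_*)\|_*$ gives an entrywise control $|f(i,j)-R(i,j)|=O(\sdv\,\mathrm{polylog})$ via the incoherence assumption~\eqref{def:incoherencestrict}, which stabilizes the variance across the reduced class and matches the role played by incoherence in Step~1. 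Once this variance bound is in place, collecting the three Talagrand terms and folding all polylogarithmic factors into $\log^{3}(Nmn/\Delta)$ gives the claimed rate.
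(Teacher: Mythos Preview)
You have misidentified the target statement. Proposition~\ref{prop:bousquet} is not one of the paper's own results: it is quoted verbatim as Theorem~2.1 of~\cite{localrad} (the Bartlett--Bousquet--Mendelson local Rademacher paper), and the paper provides no proof of it whatsoever. It is a Talagrand-type concentration inequality for suprema of empirical processes, used as a black-box tool. A ``proof'' here would have to establish that inequality itself (via Bousquet's version of Talagrand's inequality plus symmetrization), and the paper simply does not attempt this.

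What you have written is instead a sketch of a proof of Theorem~\ref{thm:absoluteloss} (the absolute-loss generalization bound), which \emph{applies} Proposition~\ref{prop:bousquet} rather than proving it. Your outline for that other theorem is broadly reasonable and close in spirit to the paper's Corollary~\ref{cor:mainwithabsoluteloss}, though the paper's actual argument is simpler than yours: it just truncates the absolute loss at the level $70CaB^2\sigma_0^{-2}\sdv\sqrt{\theta N/\kappa}$ already guaranteed by the nuclear-norm bound, and then invokes Theorem~\ref{thm:mainwithconditioningnumber} directly, absorbing the $\losb/N$ term using $\losb=O(\sqrt{N})$. There is no need for the additional spectral/entrywise refinement you propose to control the variance. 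But in any case, none of this is relevant to Proposition~\ref{prop:bousquet}, which is the statement you were asked about.
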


We recall the following matrix Bernstein inequality. This version is a combination of Lemmas 3 and 4 from~\cite{IMCtheory1}, but similar results are well known~\cite{bookhighprob}.

\begin{lemma}
	\label{lem:bernsteinwithamax}
	Let $X_1,\ldots X_L$ be independent, zero mean random matrices with dimensions $m\times n$. Suppose also that for all $k\leq L$,  $\rho_k^2:=\max\left(\E\left(\left\|X_kX_k^{\top}\right \|\right),\E\left( \left\|X_k^{\top} X_k\right \|\right)\right)$ and $\|X_k\|\leq M$ almost surely. Then for all $\delta>0$ we have with probability $\geq 1/\delta$: 
	\begin{align}
	\left \| \sum_{k=1}^L X_k\right\| \leq \max \left( \sqrt{\frac{8}{3} \log\left(\frac{m+n}{\delta}\right)\sum_{k=1}^L \rho_k^2   } ,   \>\> \frac{8}{3}M   \log\left(\frac{m+n}{\delta}\right)    \right).
	\end{align}
\end{lemma}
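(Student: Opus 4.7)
The plan is to derive this as a direct corollary of the classical (Tropp-style) matrix Bernstein tail inequality, combined with a Jensen bound that converts the sharper variance parameter into the weaker $\rho_k^2$ used in the statement. The starting point is the tail bound
\begin{align*}
\Pr\!\left(\left\|\sum_{k=1}^L X_k\right\| \geq t\right) \leq (m+n)\exp\!\left(-\frac{t^2/2}{V + Mt/3}\right), \quad t \geq 0,
\end{align*}
valid for independent zero-mean $m\times n$ matrices $X_k$ with $\|X_k\|\leq M$ a.s., where $V := \max\bigl(\|\sum_k \mathbb{E}(X_kX_k^\top)\|,\ \|\sum_k \mathbb{E}(X_k^\top X_k)\|\bigr)$. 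This is essentially Lemma~3 of~\cite{IMCtheory1}, itself going back to~\cite{SimplerMC}.

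First I would replace the intrinsic variance $V$ by the looser quantity $\sum_k \rho_k^2$. Using the triangle inequality for the spectral norm together with Jensen's inequality (the spectral norm is convex),
\begin{align*}
\left\|\sum_k \mathbb{E}(X_kX_k^\top)\right\| \;\leq\; \sum_k \left\|\mathbb{E}(X_kX_k^\top)\right\| \;\leq\; \sum_k \mathbb{E}\bigl(\|X_kX_k^\top\|\bigr) \;\leq\; \sum_k \rho_k^2,
\end{align*}
and the analogous inequality for $\mathbb{E}(X_k^\top X_k)$. Hence $V \leq V^\star := \sum_k \rho_k^2$, and the tail bound continues to hold with $V$ replaced by $V^\star$.

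Next, I would invert the tail to produce the stated high-probability bound. Set $L^\star := \log\!\bigl(\tfrac{m+n}{\delta}\bigr)$. It is enough to exhibit $t$ with $t^2/2 \geq (V^\star + Mt/3)\,L^\star$, since for such $t$ the right-hand side of the tail is $\leq \delta$. I claim that
\begin{align*}
t \;:=\; \max\!\left(\sqrt{\tfrac{8}{3}\,V^\star L^\star},\ \ \tfrac{8}{3}\,M L^\star\right)
\end{align*}
does the job. Indeed, from $t \geq \tfrac{8}{3}M L^\star$ one gets $(Mt/3)L^\star \leq t^2/8$, and from $t^2 \geq \tfrac{8}{3}V^\star L^\star$ one gets $V^\star L^\star \leq \tfrac{3}{8}t^2$; summing yields $(V^\star + Mt/3)L^\star \leq \tfrac{3}{8}t^2 + \tfrac{1}{8}t^2 = t^2/2$, as required.

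The only subtlety is bookkeeping: separating the sub-Gaussian regime (in which the $V^\star$ term dominates, producing the $\sqrt{\cdot}$ expression) from the sub-exponential regime (in which the boundedness $M$ dominates, producing the term linear in $L^\star$), and verifying that the constant $\tfrac{8}{3}$ appears consistently both inside the square root and outside, so that a single maximum of the two terms is sufficient. No deeper tool than the standard matrix Bernstein inequality and convexity of the operator norm is needed.
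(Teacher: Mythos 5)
Your proof is correct. The paper does not actually prove this lemma --- it simply states it as a combination of Lemmas 3 and 4 of the cited reference on inductive matrix completion, those two lemmas corresponding precisely to the two branches of your maximum (the variance-dominated and the $M$-dominated regimes). Your derivation is a valid self-contained reconstruction: the Jensen step $\bigl\|\sum_k \mathbb{E}(X_kX_k^\top)\bigr\| \leq \sum_k \mathbb{E}\|X_kX_k^\top\|$ correctly passes from the sharper Tropp variance proxy to the $\rho_k^2$ of the statement, and the inversion $(V^\star + Mt/3)L^\star \leq \tfrac{3}{8}t^2 + \tfrac{1}{8}t^2 = t^2/2$ for $t$ equal to the stated maximum checks out, yielding failure probability at most $(m+n)e^{-L^\star} = \delta$ (the ``$\geq 1/\delta$'' in the statement is evidently a typo for $1-\delta$, which is what both you and the paper's subsequent uses of the lemma assume).
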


\begin{lemma}[Cf also Proposition 3.3 in ~\cite{SimplerMC}]
	
	\label{lem:delta5}
	Assume we sample entries from $[m]\times [n]$ independently and uniformly at random. For any $\delta_5>0$, with probability $\geq 1-\delta_5$ the number of repetitions of a single entry is bounded by 
	\begin{align}
	&\frac{N}{mn}+ \max\left( \frac{8}{3}\log\left(\frac{2mn}{\delta_5}\right),\sqrt{\frac{8}{3}\log\left(\frac{2mn}{\delta_5}\right)\frac{N}{mn}}     \right)\\
	&\leq \frac{N}{mn}+\frac{8}{3}\log\left(\frac{2mn}{\delta_5}\right) \sqrt{\frac{N}{mn}}  =:	\tau_5.
	\end{align}
	In particular, as long as $N\leq mn$, the number of repetitions is bounded by $\tilde{\tau}_5:=5\log\left(\frac{2mn}{\delta_5}\right)\geq \tau_5$.
	
\end{lemma}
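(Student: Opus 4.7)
The plan is to reduce the statement to a scalar Bernstein bound applied entry-by-entry, followed by a union bound over the $mn$ possible entries. For a fixed $(i,j)\in[m]\times[n]$, I would introduce the centred Bernoulli variables $Z_k^{(i,j)} := \mathbf{1}[\xi^k=(i,j)] - \tfrac{1}{mn}$ for $k=1,\dots,N$. These are independent, zero-mean, bounded by $1$ in absolute value, and each has variance at most $\tfrac{1}{mn}$. The number of repetitions of $(i,j)$ is exactly $\tfrac{N}{mn} + \sum_{k=1}^N Z_k^{(i,j)}$, so controlling the repetitions amounts to bounding $\sum_{k} Z_k^{(i,j)}$ from above.

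Next I would invoke Lemma~\ref{lem:bernsteinwithamax} in the scalar ($1\times 1$) case with $M=1$ and $\sum_k \rho_k^2 \leq \tfrac{N}{mn}$. This yields, for each fixed $(i,j)$ and any $\delta'>0$, the inequality
\begin{align*}
\left|\textstyle\sum_{k=1}^N Z_k^{(i,j)}\right| \leq \max\!\left(\sqrt{\tfrac{8}{3}\log(2/\delta')\,\tfrac{N}{mn}},\; \tfrac{8}{3}\log(2/\delta')\right)
\end{align*}
with probability at least $1-\delta'$. Setting $\delta' = \delta_5/(mn)$ and applying a union bound over all $mn$ entries gives the stated bound simultaneously for every entry with probability at least $1-\delta_5$, which is precisely the first displayed bound after adding back the mean $\tfrac{N}{mn}$. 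Combining the $\max$ into a sum (using $\max(a,b)\leq a+b$) produces the second displayed bound $\tau_5$.

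For the final claim, under the additional assumption $N\leq mn$ we have both $\tfrac{N}{mn}\leq 1$ and $\sqrt{\tfrac{N}{mn}}\leq 1$, so
\begin{align*}
\tau_5 \leq 1 + \tfrac{8}{3}\log\!\left(\tfrac{2mn}{\delta_5}\right).
\end{align*}
Since $\log(2mn/\delta_5)\geq \log 2$, the leading $1$ can be absorbed into the logarithmic term at the cost of a small constant, giving $\tau_5 \leq 5\log(2mn/\delta_5) = \tilde{\tau}_5$, as claimed.

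The argument is essentially routine; the only mild subtlety is the constant bookkeeping in the last step, where one must verify that $1 + \tfrac{8}{3}\log(2mn/\delta_5) \leq 5\log(2mn/\delta_5)$ using the lower bound $\log(2mn/\delta_5)\geq \log 2$. No obstacle of substance arises, since the matrix Bernstein inequality required has already been stated as Lemma~\ref{lem:bernsteinwithamax} and the rest is a direct probabilistic union bound.
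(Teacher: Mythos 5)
Your main approach --- the scalar ($1\times 1$) case of Lemma~\ref{lem:bernsteinwithamax} applied to each entry, then a union bound with $\delta' = \delta_5/(mn)$ --- is exactly the paper's approach, and the setup with the centred Bernoulli variables, the variance bound $\sum_k\rho_k^2\leq N/(mn)$, and the choice $M=1$ is correct. The gap is the step where you claim that $\max(a,b)\leq a+b$ ``produces the second displayed bound $\tau_5$.'' It does not: $\max(a,b)\leq a+b$ gives $\frac{N}{mn}+\frac{8}{3}\log\bigl(\frac{2mn}{\delta_5}\bigr)+\sqrt{\frac{8}{3}\log\bigl(\frac{2mn}{\delta_5}\bigr)\frac{N}{mn}}$, which is not $\tau_5=\frac{N}{mn}+\frac{8}{3}\log\bigl(\frac{2mn}{\delta_5}\bigr)\sqrt{\frac{N}{mn}}$; indeed in the regime $N<mn$ the former is strictly larger (set $L:=\frac{8}{3}\log(\frac{2mn}{\delta_5})$, $\nu:=N/(mn)<1$; then $L+\sqrt{L\nu}>L\sqrt{\nu}$). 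If you push your actual bound through the last step you get $1+L+\sqrt{L}$, which exceeds $5\log(\frac{2mn}{\delta_5})$ at small values of the logarithm (try $\log(\frac{2mn}{\delta_5})=\log 2$), so the absorption would not go through.

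The correct route to the ``in particular'' claim is to keep the max: when $N\leq mn$ we have $\nu\leq 1$, and since $L\geq\frac{8}{3}\log 2>1$ it follows that $\sqrt{L\nu}\leq\sqrt{L}\leq L$, so $\max\bigl(L,\sqrt{L\nu}\bigr)=L$ and the Bernstein bound is at most $1+L$. Your closing observation that $1+\frac{8}{3}\log(\frac{2mn}{\delta_5})\leq 5\log(\frac{2mn}{\delta_5})$ because $\log(\frac{2mn}{\delta_5})\geq\log 2>3/7$ then finishes the argument. (As a side remark, the paper's own chain inequality bounding the Bernstein expression by $\tau_5$ is only valid when $\nu\geq 1$, i.e.\ $N\geq mn$; for $N<mn$ it fails, though the ``in particular'' conclusion with $\tilde\tau_5$ remains correct via the direct comparison above.)
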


\begin{proof}
	Follows from Lemma~\ref{lem:bernsteinwithamax} applied to each ($1\times 1$)-dimensional entry of the matrix, together with a union bound over all $m\times n$ entries.
\end{proof}

Note that classic approximate recovery bounds for inductive matrix completion typically rely on the following result. 

\begin{lemma}[\cite{mostrelated}]
	\label{lem:coarserad}
	
	The function class $\left \{XMY^\top  \Big| \|M\|_{*}\leq \mathcal{M}   \right \}$ satisfies
	\begin{align}
	\rad(\mathcal{F}_{\mathcal{M}})\leq \mathbf{x}\mathbf{y}\mathcal{M} \sqrt{\frac{1}{N}},
	\end{align}
	where $\mathbf{x}:=\|X^\top\|_{2,\infty}$ and $\mathbf{y}:=\|Y^\top\|_{2,\infty}$.
\end{lemma}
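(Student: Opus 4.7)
The plan is to reduce the Rademacher complexity of $\mathcal{F}_{\mathcal{M}}$ to the expected spectral norm of a random matrix sum via nuclear/spectral duality, and then to control that quantity by a standard second-moment computation.

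First, I would unpack the definition of $\rad(\mathcal{F}_{\mathcal{M}})$ as
$$\rad(\mathcal{F}_{\mathcal{M}}) \;=\; \frac{1}{N}\,\E_{\sigma}\sup_{\|M\|_{*}\le \mathcal{M}} \sum_{k=1}^{N} \sigma_k [XMY^\top]_{i_k,j_k},$$
and rewrite each entry evaluation as a Frobenius inner product, $[XMY^\top]_{i_k,j_k} = \langle M,\, X^\top e_{i_k} e_{j_k}^\top Y\rangle$. The supremum is then linear in $M$, and by the duality between the nuclear norm and the spectral norm it evaluates to
$$\mathcal{M}\left\|\sum_{k=1}^{N}\sigma_k\, X^\top e_{i_k} e_{j_k}^\top Y\right\|.$$
This step is essentially the matrix specialisation of Theorem~1 in \cite{Kakade}, i.e.\ the general principle that the Rademacher complexity of a linear class is controlled by the dual norm of the data.

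Next, I would bound the expected spectral norm by the expected Frobenius norm (using $\|\cdot\|\le\|\cdot\|_{\Fr}$) and move the expectation through a square root via Jensen's inequality. Because the $\sigma_k$ are i.i.d.\ Rademacher, all cross terms vanish, giving
$$\E_{\sigma}\left\|\sum_{k=1}^N \sigma_k\, X^\top e_{i_k} e_{j_k}^\top Y\right\|_{\Fr}^{2} \;=\; \sum_{k=1}^N \left\|X^\top e_{i_k} e_{j_k}^\top Y\right\|_{\Fr}^{2}.$$
Each summand is a rank-one matrix whose Frobenius norm equals $\|X^\top e_{i_k}\|\cdot\|Y^\top e_{j_k}\|$, and this is at most $\mathbf{x}\mathbf{y}$ by the definitions $\mathbf{x}=\|X^\top\|_{2,\infty}$ and $\mathbf{y}=\|Y^\top\|_{2,\infty}$. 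Summing over $k$ yields an upper bound of $N\mathbf{x}^{2}\mathbf{y}^{2}$; taking square roots gives $\sqrt{N}\,\mathbf{x}\mathbf{y}$.

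Combining the two steps produces $\rad(\mathcal{F}_{\mathcal{M}})\le \mathcal{M}\mathbf{x}\mathbf{y}/\sqrt{N}$, which is exactly the claimed bound. There is no serious obstacle here: the only nontrivial manoeuvre is the dualisation of the linear functional in $M$, and the rest is a one-line variance computation exploiting the independence of the Rademacher signs. A tighter route via a non-commutative Khintchine-type inequality would bound the expected spectral norm more directly, but it would not improve the $\mathbf{x}\mathbf{y}\mathcal{M}/\sqrt{N}$ rate stated in the lemma and would introduce extraneous logarithmic factors, so the Frobenius relaxation is the cleanest choice.
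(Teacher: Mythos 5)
Your proposal is correct and follows essentially the same route the paper gestures at: it cites Theorem~1 of Kakade et al.\ together with nuclear/spectral duality, which is precisely your dualization step $\sup_{\|M\|_*\le\mathcal{M}}\langle M,\cdot\rangle = \mathcal{M}\|\cdot\|$ followed by the standard relaxation to the Frobenius norm and the second-moment computation with vanishing cross terms. Your write-up simply makes explicit what the paper leaves as a citation, and the constants and the $\mathbf{x}\mathbf{y}\mathcal{M}/\sqrt{N}$ rate come out exactly as stated.
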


\begin{proof}
	Follows directly from Theorem 1 in~\cite{Kakade}, together with the duality between the nuclear and spectral norms~\cite{NuclearSpectreDual}. Cf also~\cite{mostrelated,omic}.
\end{proof}

\section{Assumptions and first consequences}


\noindent \textbf{Running generic assumptions}: recall the following assumptions from the main paper: 
\begin{assumption}[Realizability]
	There exists a matrix $M_*\in  \R^{a\times b}$ such that  $R=XM_{*}Y^\top$.
\end{assumption}

\begin{assumption}[The noise is $\nu^2$ subgaussian]
	Recall that we assume the noise is subgaussian: $\E(\zeta)=0$ and $\log(\E(\exp(\lambda \zeta)))\leq \frac{\lambda^2\sdv^2}{2}$ for all $\lambda$.
	
\end{assumption}

\noindent \textbf{Incoherence assumptions:}
we will write $\widebar{X}$ and $\widebar{Y}$ for the matrices obtained by normalizing the columns of $X,Y$ and we will write $\Sigma_1,\Sigma_2$ for the diagonal matrices containing the singular values of $X,Y$. Similarly we will also write $\widebar{\widebar{X}}=\widebar{X}\Sigma_1$ etc.

We organize our incoherence assumptions slightly differently from the main paper to obtain the most general results possible:
\begin{assumption}
	We make the following assumption on the coherence of the column spaces of  $X,Y$: 
	\begin{align}
	\|\widebar{X}_{i,\nbull}\|&\leq \sqrt{\frac{\mu a }{m}}\quad (\forall i) \quad \quad \text{and} \\
	\|\widebar{Y}_{j,\nbull}\|&\leq \sqrt{\frac{\mu b }{n}} \quad (\forall j).
	\end{align}
\end{assumption}

\begin{assumption}
	
	We assume we have the following bound on the coherence of the ground truth matrix $R$: let $ADB^\top $ be the SVD of  the core matrix $M^*$, we have 
	\begin{align}
	\|\widebar{X}\Sigma_1^{-1} AB^\top \Sigma_2^{-1} \widebar{Y}\|_{\infty}\leq  \sqrt\frac{\mu_1 r}{mn} \sigma_0^{-2}.
	\end{align}
	
\end{assumption}

\noindent \textbf{Remark: } if the columns of $X,Y$ are normalised (as is assumed in~\cite{IMCtheory1}), we directly obtain the same assumption as in~\cite{IMCtheory1}. In the general case, it is not immediately clear how to deduce our assumption from theirs: our assumption requires some form of "joint" incoherence between the side information matrices $X,Y$ and the ground truth core matrix $M^*$.

Nevertheless, such an assumption can be reasonably expected to hold for many matrices. Indeed, it can be deduced as long as $X,Y$ and $M$ satisfy the stricter notion of incoherence used in the main paper, i.e.
\begin{align}
\|\bar{X}\|_{\infty}\leq \sqrt{\frac{\mu}{m} },\nonumber & \quad \quad \quad 
\|\bar{Y}\|_{\infty}\leq \sqrt{\frac{\mu}{n} }\nonumber \\
\|A\|_{\infty}\leq \sqrt{\frac{\bar{\mu}}{a}},\nonumber & \quad \quad \quad 
\|B\|_{\infty}\leq \sqrt{\frac{\bar{\mu}}{b}}.\label{eq:incoherencestrictseparate}
\end{align}

In that case 
\begin{align}
\|\widebar{X}\Sigma_1^{-1} AB^\top \Sigma_2^{-1} \widebar{Y}\|_{\infty}&\leq    \max_{i,j}\| \widebar{X}\Sigma_1^{-1}A \|_{i,\nbull}\| \widebar{Y}\Sigma_1^{-1}B \|_{j,\nbull}\\
&\leq r\sqrt{a}  \sigma_0^{-1}\|\widebar{X}\|_{\infty}\|A\|_{\infty }\sqrt{b}  \sigma_0^{-1} \|\widebar{Y}\|_{\infty}\|B\|_{\infty }  \\
&\leq r \sigma_0^{-2} \sqrt{a}\sqrt{\mu/m}\sqrt{\bar{\mu}/a}\sqrt{b}\sqrt{\mu/n}\sqrt{\bar{\mu}/b}\\
&\leq r \sigma_0^{-2} \sqrt{\bar{\mu}^2 \mu^2 \frac{1}{mn}},
\end{align}
yielding 

\begin{align}
\label{eq:mu1andmu}
\mu_1\leq  \bar{\mu}^2 \mu^2r.
\end{align}

Using~\eqref{eq:mu1andmu} in the lemmas and theorems in this supplementary allows one to obtain the results in the main paper, which assume the strict notion of in coherence above.

\noindent \textbf{Further notation:}
in addition to the notation introduced in the main paper, we will, similarly to~\cite{IMCtheory1}, define the following operators:  $P_X, P_Y$. Here $P_{X}:\R^{m}\rightarrow \R^m$ and $P_{Y}:\R^{n}\rightarrow \R^n$ are the projection operators onto the column subspaces of $X$ and $Y$ respectively. We assume without loss of generality that the columns of $X,Y$ are orthogonal and ordered with decreasing norm with the norm of the first column being $1$ and the norm of the last column being more than $\sigma_0$ . 

Let $E=\widebar{X}A$ and $F=\widebar{Y}B$. 
We will also write  (analogously to~\cite{IMCtheory1}) $P_T$ for the operator defined as follows
$$P_T(Z)=  P_{E}ZP_Y + P_X ZP_F-P_XZP_Y,$$  
as well as denote the operator $ P_{T^\top }$ by
$$ P_{T^\top }(Z)= P_{X^\top} Z P_{Y^\top}. $$

We also write $P_{\Omega}$ for the operator from $\R^{m\times n}$ to itself defined by $[P_{\Omega}(Z)]_{i,j}=h_{i,j}Z_{i,j}$, where $h_{i,j}=\#(k\leq N: \> \xi^k=(i,j))$ denotes the number of times that entry $(i,j)$ was sampled. Note that $P_{\Omega}$ is the sum of $N$ i.i.d. samples from a uniform distribution over the operators $P_{\{(i,j)\}}$ for $(i,j)\in [m]\times [n]$, and it is not necessarily a projection operator (because the same entry can be sampled several times).


Recall the optimization problem considered is the following: 
\begin{align}
\label{eq:optreallife}
\min \frac{1}{N}\sum_{\xi\in\Omega} \left|[R_{\xi}+\zeta_\xi]-XMY^\top \right|^2+\lambda \|M\|_{*},
\end{align}
where $\lambda$ is a regularization parameter.

\noindent \textbf{The norms $\|\nbull\|_{\nuc}$ and $\|\nbull\|_{\spec}$:}
now, in order to better take into account the non homogeneous spectrum of $X$ and $Y$, we define two norms $\|\nbull\|_{\nuc}$ and $\|\nbull\|_{\spec}$ on the space $\R^{m\times n}$: 

\begin{align}
\|Z\|_{\nuc}&= \min\left( \|M\|_{*}\quad \Big  | XMY^\top =Z      \right)   = \left\|\widebar{\widebar{X}}^\top R \widebar{\widebar{Y}} \right  \|_{*} \\
\|Z\|_{\spec}&= \|X^\top RY\|.
\end{align}

One of the key aspects of our proof is that the above norms are dual to each other, and therefore the Taylor decomposition around the ground truth still has similar properties as in the case studied in~\cite{IMCtheory1}.

\begin{lemma}
	The norms $\|\nbull\|_{\nuc} $ and $\|\nbull\|_{\spec} $ are dual to each other (with respect to the Frobenius inner product on $\R^{m\times n}$). 
\end{lemma}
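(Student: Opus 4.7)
The plan is to reduce the claim to the classical duality between the nuclear and spectral norms on $\R^{a \times b}$ by pushing everything through the linear map $M \mapsto XMY^\top$. The first observation I would record is that, since $X$ and $Y$ have orthogonal (and therefore linearly independent) columns, the affine fiber $\{M : XMY^\top = Z\}$ is either empty (when $Z \notin \mathrm{im}(X \otimes Y)$, in which case one sets $\|Z\|_{\nuc} = +\infty$) or a closed nonempty affine subspace of $\R^{a\times b}$, on which the continuous coercive function $\|\cdot\|_*$ attains its infimum. Consequently, the sublevel set of the $\|\cdot\|_{\nuc}$-norm can be described explicitly:
\[
\{Z \in \R^{m\times n} : \|Z\|_{\nuc} \le 1\} = \{XMY^\top : M \in \R^{a\times b},\; \|M\|_* \le 1\}.
\]

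With this description in hand, computing the support function of the unit ball of $\|\cdot\|_{\nuc}$ is immediate. For any $W \in \R^{m\times n}$, using the adjoint identity $\langle XMY^\top, W\rangle_F = \langle M, X^\top W Y\rangle_F$ and the well-known duality between the nuclear and spectral norms on $\R^{a\times b}$, one obtains
\[
\sup_{\|Z\|_{\nuc}\le 1}\langle Z,W\rangle_F \;=\; \sup_{\|M\|_*\le 1}\langle M, X^\top W Y\rangle_F \;=\; \|X^\top W Y\| \;=\; \|W\|_{\spec}.
\]
This already shows that $\|\cdot\|_{\spec}$ is the dual gauge of $\|\cdot\|_{\nuc}$. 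The reverse direction follows from the finite-dimensional biduality theorem for gauges: since $\|\cdot\|_{\nuc}$ is a proper, positively homogeneous, sublinear lower-semicontinuous (extended) gauge on $\R^{m\times n}$, its bidual coincides with itself, so $\|\cdot\|_{\nuc}$ is in turn the dual of $\|\cdot\|_{\spec}$.

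The main subtlety — and the only one — is that on the full space $\R^{m\times n}$ neither object is a genuine norm: $\|\cdot\|_{\nuc}$ takes the value $+\infty$ off the image subspace $\mathcal{S} := \{XMY^\top\}$, while $\|\cdot\|_{\spec}$ is only a seminorm, vanishing on $\mathcal{S}^\perp$ under the Frobenius inner product. I would handle this by noting that the two phenomena are precisely dual to each other: the support function of an unbounded set is $+\infty$ exactly on the complement of the polar cone, and the inner product $\langle Z,W\rangle_F$ with $Z \in \mathcal{S}$ only sees the $\mathcal{S}$-component of $W$, i.e.\ $P_X W P_Y$. Restricted to $\mathcal{S}$ (equivalently, identified via the bijection $M \leftrightarrow XMY^\top$ with $\R^{a\times b}$), both become honest norms and the duality above gives the claimed statement in the standard sense. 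No step beyond these bookkeeping remarks is needed.
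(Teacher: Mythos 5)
Your proposal is correct and follows essentially the same route as the paper: compute the support function of the $\|\cdot\|_{\nuc}$ unit ball via the adjoint identity $\langle XMY^\top, W\rangle_F = \langle M, X^\top W Y\rangle_F$ and invoke the classical nuclear/spectral duality on $\R^{a\times b}$. You are somewhat more careful than the paper in flagging that $\|\cdot\|_{\nuc}$ is $+\infty$ off the image subspace and $\|\cdot\|_{\spec}$ is only a seminorm, and in invoking biduality for the converse direction — but this is a tightening of the same argument, not a different one.
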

\begin{proof}
	
	Define $\|\nbull\|_{\dualnorm}$ on $\R^{m\times n}$ to be the dual norm to $\|\nbull\|_{\nuc}$. We will show that $\|\nbull\|_{\dualnorm }=\|\nbull\|_{\spec}$. 
	
	Let $B\in \R^{m\times n}$, we have 
	\begin{align}
	\|B\|_{\dualnorm}&:=\sup_{\|A\|_{\nuc}=1}\left\langle A,B\right\rangle\nonumber \\
	&=\sup_{M\in \R^{a\times b}, \|M\|_*=1} \left\langle [XMY^\top], B\right \rangle \nonumber\\
	&=\sup_{M\in \R^{a\times b}, \|M\|_*=1} \left\langle M, X^\top B Y  \right \rangle_{a\times b} \nonumber\\
	&=\|X^\top BY\|\\
	&=\|B\|_{\spec},
	\end{align}
	where the first line follows by the definition of $\|\nbull\|_{\dualnorm }$, the second line follows from the definition of $\|A\|_{\nuc}$, the third line follows from direct calculation and the properties of the trace and inner products, the fourth  line follows from the duality between the (ordinary) nuclear norm and the (ordinary) spectral norm on the space $\R^{a\times b}$ (cf. e.g. ~\cite{fazel}), and the last identity follows from the definition of the norm $\|B\|_{\spec}$. This concludes the proof. 
	
\end{proof}

\noindent \textbf{Remark:} It  is worth making a few observations which are necessary to understand the differences between our proofs and the exact recovery proofs in~\cite{IMCtheory1}, which only apply to the case where the columns of $X$ and $Y$ are normalised.

Let $\bar{X}$ and $\bar{Y}$  be the matrices obtained from $X$ and $Y$ by normalizing the columns.  For a matrix $R$, the canonical way of expressing it as $R=XMY^\top$, is by setting  $M=\Sigma_1^{-1}\bar{X}^\top R\bar{Y}\Sigma_2^{-1}$ where $\Sigma_X$ (resp. $\Sigma_Y$) denotes the diagonal matrix whose entries are the norms of the columns of $X$ (resp. $Y$). We then have $\|R\|_{\spec}=\|M\|_*$. In particular, writing $P_X$ (resp. $P_Y$) for the projection operator on the space spanned by the columns of $X$ (resp. $Y$), i.e. $P_X=\bar{X}^\top\bar{X}$ and  $P_Y=\bar{Y}^\top\bar{Y}$ , we have $\|M\|_*=\|R\|_{\nuc}\geq  \|P_XRP_Y\|_*=\|\bar{M}\|_*$ where $\bar{M}$ is defined by $\bar{X}^\top R\bar{Y}$, and the inequality can be strict (there is equality if the columns of $X$ and $Y$ are both normalized). 
On the other hand, by definition, we have $\|R\|_{\spec}=\|X^\top RY\|_{\sigma}= \|\Sigma_1\bar{X}^\top R\bar{Y}\Sigma_2\|_{\sigma}=\|\Sigma_2^2[\Sigma_1^{-1}\bar{X}^\top R\bar{Y}\Sigma_1^{-1}]\Sigma_2^2\|_{\sigma}=\|\Sigma_1^2M\Sigma_2^2\|_{\sigma}\leq \|M\|_\sigma$ (and also $\|R\|_{\spec}=\|\Sigma_1\bar{X}^\top R\bar{Y}\Sigma_2\|_{\sigma}=\|\Sigma_X\bar{M}\Sigma_2\|_{\sigma}\leq \|\bar{M}\|=\|P_XRP_Y\|$). In other words, directions in $R$ which correspond to columns of $X$ or $Y$ with small norms will make $\|R\|_{\nuc}$ large, but $\|R\|_{\spec}$ small, which is reasonable by duality.

It further makes sense intuitively: directions corresponding to small norms for $X$ and $Y$ correspond to a strong prior that the ground truth matrix $R$ doesn't have a significant component in these directions. Thus, if a recovered matrix $Z$ 
presents with a large component in these directions, then it must have a high $\|\nbull\|_{\spec}$  norm to discourage it. Then, since the optimisation algorithm already discourages such solutions, the values of the components of the dual certificate in those directions is also less important.


\section{Main technical lemmas}

As explained in the main document, we are especially interested in the following optimization problem: 
\begin{align}
\label{eq:optreallife}
\min \frac{1}{N}\sum_{\xi\in\Omega} \left|[R_{\xi}+\zeta_\xi]-XMY^\top \right|^2+\lambda \|M\|_{*}.
\end{align}

Note first that this optimization problem is trivially equivalent to the following:
\begin{align}
\label{eq:optreallifeeqiv}
\min_M \frac{1}{N}\left\|\left[R+\zeta-XMY^\top \right]  \circ \mathcal{H} \right\|_{\Fr}^2+\lambda \|M\|_{*},
\end{align}
	where $\circ$ denotes the Hadamard (entry-wise) product between matrices and $\mathcal{H}\in\N^{m\times n}$ is the matrix containing the number of times that each entry was observed. By abuse of notation we write $\zeta\in\R^{m\times n}$ for the matrix whose $(i,j)$ entry is the average of the noise of the observations corresponding to entry $(i,j)$ (for unobserved entries we set $\zeta_{i,j}$ to zero).

		Let  $ \widehat{R}$ be the solution to the constrained optimization problem~\eqref{eq:optreallife} . We will write $\widehat{R}=Z+H+R\in \R^{m\times n}$  with $R$ being the ground truth, $H\in P_{\Omega}(\R^{m\times n})$ and  $Z\in P_{\Omega^\top}(\R^{m\times n})$.  Our strategy is to show that if the noise matrix $\zeta$ is small enough and if the number of samples is large enough, the dual certificate of $R$ with respect to the norm $\|\nbull\|_{\nuc}$ can be well captured by a matrix in the span of the observed entries, which allows us to show that $Z$ must have small nuclear norm, from which it later follows that $H$ also has low nuclear norm. 
	
\begin{lemma}
	
	\label{lem:thekey}
	Let $\tau>0$. 

Assume that the regularization parameter $\lambda$ satisfies 
	\begin{align}
	\label{eq:lambdacond}
	\frac{\sdv \sigma_0^2}{C\sqrt{aN}}\leq \lambda \leq \frac{C\sdv \sigma_0^2}{\sqrt{aN}},
	\end{align}
	for some constant $C$ and that $Z$ satisfies 
	
	\begin{align}
	\label{eq:nottoomuchcrazystuff}
	\|P_T(Z)\|_{\Fr}\leq   \sqrt{\frac{3\tau a}{r}}    \|P_{T^\top}(Z)\|_{\Fr}
	\end{align}
for some given $\tau>0$. 	Assume also that 
	\begin{align}
	\label{eq:nooutlier}
	\|\zeta\|_{\infty}\leq B\sdv/\sqrt{\kappa}
	\end{align}
	for some $B>1$. 
	
	Now, let ${\du}$ be a dual certificate of $R=P_T(R)$ with respect to the norm $\|\nbull\|_{\nuc}$, which is to say that $\|{\du}\|_{\spec}=1$ and $\left\langle R,{\du}\right\rangle =\|R\|_{\nuc}$. Similarly, let ${\dut}$ be a dual certificate for $P_{T^\top}(Z)$. 
	Assume also that there exists a $\mathcal{Y}$ in the image of $P_{\Omega}$ such that \begin{align}
	\label{eq:dualclose}
	    \|P_T(\mathcal{Y})-\du\|_{\Fr}\leq \frac{1}{4}\sqrt{\frac{r}{3a\tau}}\frac{1}{\sigma_0^{-2}}
	\end{align} and \begin{align}
	\label{eq:complesmall}
	   \|P_{T^\top}(\mathcal{Y})\|_{\spec}\leq \frac{1}{2}.
	\end{align} (Here, as usual, $\sigma_0$ denotes the smallest singular value of $X$ or $Y$.)
	
	We have 
	\begin{align}
	\label{eq:resultZ}
	\|Z\|_*\leq 32C aB^2\sigma_0^{-2}\sdv \sqrt{3\tau N/\kappa}.
	\end{align}

	Furthermore, we also have 
	\begin{align}
	   \|H\|_{*}\leq  6C\sqrt{a}B\sdv \sqrt{N/\kappa}.
	\end{align}
	
	In particular, this implies that 
	\begin{align}
	   \left\|\widehat{R}-R\right\|_{*}=\|H\|_{*}+\|Z\|_{*}\leq  70 C aB^2\sigma_0^{-2}\sdv \sqrt{\tau N/\kappa}.
	\end{align}
\end{lemma}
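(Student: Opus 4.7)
My strategy mirrors the outline in the ``Proof strategy'' section of the main text: combine the Lagrangian optimality of $\widehat{R}$ with a perturbed dual-certificate inequality in the custom norm $\|\cdot\|_{\nuc}$, and then translate the resulting $\|\cdot\|_{\nuc}$ estimates on $P_{T^\top}(Z)$ and $H$ into the ordinary nuclear-norm bounds of the statement. The first ingredient is an \emph{optimality inequality}: writing $\widehat{R}=R+H+Z$ and comparing the value of~\eqref{eq:optreallife} at $\widehat{M}$ against its value at a minimal preimage $M^*$ (so that $XM^*Y^\top=R$ and $\|M^*\|_*=\|R\|_{\nuc}$) cancels the pure-noise term $\frac{1}{N}\sum_k(\zeta^k)^2$ and yields
\begin{equation*}
\tfrac{1}{N}\sum_k H_{\xi^k}^2 + \lambda\|\widehat{M}\|_* \;\le\; \tfrac{2}{N}\sum_k H_{\xi^k}\zeta^k + \lambda\|R\|_{\nuc}.
\end{equation*}

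The second ingredient is the \emph{perturbed dual-certificate calculation} sketched at~\eqref{eq:recht}--\eqref{eq:ourkey}. I take a subgradient $U$ of $\|\cdot\|_{\nuc}$ at $R$ and a subgradient $U^\top$ of $\|\cdot\|_{\nuc}$ at $P_{T^\top}(Z)$, giving $\|\widehat{R}\|_{\nuc}\ge\langle R+H+Z,\,U+U^\top\rangle$. I then split $\langle P_T(Z),U\rangle=\langle P_T(Z),U-P_T(\mathcal{Y})\rangle+\langle P_T(Z),P_T(\mathcal{Y})\rangle$ and use $\langle Z,\mathcal{Y}\rangle=0$ (because the supports of $Z$ and $\mathcal{Y}$ are disjoint) to rewrite the last summand as $-\langle P_{T^\top}(Z),P_{T^\top}(\mathcal{Y})\rangle$. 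Applying Cauchy--Schwarz with the approximation hypothesis~\eqref{eq:dualclose}, the spectral bound~\eqref{eq:complesmall}, and the cone condition~\eqref{eq:nottoomuchcrazystuff} then absorbs the $P_T$-error into a fraction of $\|P_{T^\top}(Z)\|_{\nuc}$, producing
\begin{equation*}
\|\widehat{R}\|_{\nuc} \;\ge\; \|R\|_{\nuc} - 2\|H\|_{\nuc} + \tfrac{1}{4}\|P_{T^\top}(Z)\|_{\nuc}.
\end{equation*}

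Chaining these two displays via $\|\widehat{M}\|_*\ge\|\widehat{R}\|_{\nuc}$ leaves $\tfrac{1}{4}\|P_{T^\top}(Z)\|_{\nuc}\le 2\|H\|_{\nuc}+\tfrac{2}{\lambda N}\sum_k H_{\xi^k}\zeta^k$. The cross term is handled by $\bigl|\sum_k H_{\xi^k}\zeta^k\bigr|\le\|\zeta\|_\infty\sum_k|H_{\xi^k}|\le (B\sdv/\sqrt{\kappa})\sqrt{N\sum_k H_{\xi^k}^2}$, and $\sum_k H_{\xi^k}^2$ is itself controlled through the optimality inequality after a Young's-inequality absorption of the cross term. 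To produce the $\|H\|_*$ bound I use that $H+Z=X(\widehat{M}-M^*)Y^\top$ has rank at most $a$, so converting Frobenius to nuclear norm only costs a factor $\sqrt{a}$; combining this with $\|H\|_{\Fr}^2\lesssim B^2\sdv^2 N/\kappa$ yields $\|H\|_*\le 6C\sqrt{a}B\sdv\sqrt{N/\kappa}$. For $\|Z\|_*$ I decompose $Z=P_T(Z)+P_{T^\top}(Z)$: the cone condition~\eqref{eq:nottoomuchcrazystuff} plus the rank-$2r$ structure of $P_T(Z)$ from the inductive tangent-space decomposition controls $\|P_T(Z)\|_*$ in terms of $\|P_{T^\top}(Z)\|_{\nuc}$, and translating between the custom norm $\|\cdot\|_{\nuc}$ and the ordinary nuclear norm $\|\cdot\|_*$ absorbs the $\sigma_0^{-2}$ factor. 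The triangle inequality $\|\widehat{R}-R\|_*\le\|H\|_*+\|Z\|_*$ finishes.

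The hard part will be the bookkeeping of $\sigma_0$ factors across the five norms $\|\cdot\|,\|\cdot\|_*,\|\cdot\|_{\spec},\|\cdot\|_{\nuc},\|\cdot\|_{\Fr}$, together with ensuring that the $1/\lambda$ blow-up arising when we invert the optimality inequality in Step~3 is exactly cancelled by the prescribed lower bound on $\lambda$ in~\eqref{eq:lambdacond}. A looser analysis either introduces a spurious $\sigma_0^{-2}$ or degrades the $\sqrt{N/\kappa}$ scaling to $N/\kappa$, which would invalidate the applications of the lemma to Theorems~\ref{thm:exactOnotation}--\ref{thm:absoluteloss}. A secondary subtlety is checking that the composite subgradient $U+U^\top$ is indeed a valid subgradient of $\|\cdot\|_{\nuc}$ at $R$ (equivalently, that $\|U+U^\top\|_{\spec}\le 1$), which relies on the orthogonality between the $T$- and $T^\top$-components built into the definitions of the inductive projections.
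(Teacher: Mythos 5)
Your plan mirrors the paper's proof closely: both proofs start from the Lagrangian optimality inequality $\lambda\|\widehat{R}\|_{\nuc}+\frac{1}{N}\|P_\Omega(\widehat{R}-R-\zeta)\|_{\Fr}^2\le \lambda\|R\|_{\nuc}+\frac{1}{N}\|\zeta\circ\mathcal{H}\|_{\Fr}^2$, both derive the perturbed dual-certificate inequality $\|\widehat{R}\|_{\nuc}\ge\|R\|_{\nuc}-2\|H\|_{\nuc}+\frac14\|P_{T^\top}(Z)\|_{\nuc}$ exactly as in~\eqref{eq:ourkey}, and both chain the two, absorb the cross term, apply the cone condition, and translate $\|\cdot\|_{\nuc}$-bounds into $\|\cdot\|_*$-bounds. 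The subtlety you flag about the composite subgradient $U+U^\top$ is real and resolves as you anticipate: the inductive SVD structure of the $T$ / $T^\top$ decomposition makes $X^\top(U+U^\top)Y$ the sum of two partial isometries with orthogonal ranges and co-ranges, so its spectral norm is $1$.

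Two concrete gaps in the sketch are worth fixing. First, when you chain the two displays and write $\frac14\|P_{T^\top}(Z)\|_{\nuc}\le 2\|H\|_{\nuc}+\frac{2}{\lambda N}\sum_k H_{\xi^k}\zeta^k$, you have already discarded the negative term $-\frac{1}{N}\sum_k H_{\xi^k}^2$ that you later say you will use via ``a Young's-inequality absorption of the cross term''; that is a circular dependency. The paper avoids it by keeping both terms and treating $\upsilon:=\|(H-\zeta)\circ\mathcal{H}\|_{\Fr}$ as a single scalar: writing $\|H\|_{\nuc}\le\sigma_0^{-2}\sqrt{a/\kappa}\,\upsilon+\|\zeta\|_{\nuc}$ produces a quadratic $\upsilon(b-a\upsilon)$ whose maximum $b^2/(4a)$ gives the $4a\sigma_0^{-4}\lambda N/\kappa$ term in one shot; the lower bound on $\lambda$ in~\eqref{eq:lambdacond} is then what keeps this contribution at order $\sdv\sqrt{aN/\kappa}$. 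You should retain both sides of the quadratic before optimizing rather than dropping one and re-importing it. Second, your route to $\|H\|_*$ appeals to the fact that $H+Z=X(\widehat{M}-M^*)Y^\top$ has rank at most $a$, but then applies the $\sqrt{a}$ Frobenius-to-nuclear conversion to $H$ alone; $H$ by itself is an arbitrary $\Omega$-supported matrix and need not have low rank, so $\|H\|_*\le\sqrt{a}\|H\|_{\Fr}$ does not follow from the rank of $H+Z$. The paper instead first controls $\|H-\zeta\|_{\Fr}$ through a self-improving quadratic inequality obtained by a \emph{second} pass through the optimality inequality (using the just-derived bound on $\|Z\|_{\nuc}$), and only then converts to the nuclear norm; you will need a similar two-pass argument, or else bound $\|\widehat{R}-R\|_*$ directly from $\|\widehat{R}-R\|_{\Fr}$ using the rank-$a$ structure of the full difference.

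Aside from these two bookkeeping issues, your decomposition and the sequencing of steps match the paper's proof, so the approach is sound once the $\upsilon$-optimization is done without prematurely dropping the $\sum H^2$ term and the $\|H\|_*$ conversion is made through $H-\zeta$ (or through $\widehat{R}-R$) rather than $H$ directly.
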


\begin{proof}

	Since $\widehat{R}$ is the solution to the optimization problem~\eqref{eq:optreallifeeqiv}, we have
	\begin{align}
	\label{eq:new}
	\lambda \|\widehat{R}\|_{\nuc} +\frac{1}{N}\|P_{\Omega}(\widehat{R}-R-\zeta)\|_{\Fr}^2\leq \lambda \|R\|_{\nuc}+\frac{1}{N}\|\zeta\circ \mathcal{H}\|_{\Fr}^2.
	\end{align}

		Note that below by abuse of notation we write $\|P_{\Omega}(\zeta)\|_{\Fr}^2$ for $\sum_{o=1}^N\zeta_{o}^2=\|\zeta\circ \mathcal{H}\|_{\Fr}^2$ (even when some entries have been sampled several times).

Now note that we have
	\begin{align}
& \|\widehat{R}\|_{\nuc}\nonumber \\
	&\geq \left\langle R+H+Z , \du+\dut      \right\rangle \nonumber \\
	&\geq \|R\|_{\nuc}+\left\langle H, \du+\dut      \right\rangle + \left\langle Z, \du+\dut      \right\rangle\nonumber \\
	&\geq  \|R\|_{\nuc}+\left\langle H, \du+\dut      \right\rangle + \left\langle Z, -P_T(\mathcal{Y})-P_{T^\top}(\mathcal{Y})+ \du+\dut      \right\rangle\label{zy0} \\
	&\geq  \|R\|_{\nuc}+\left\langle H, \du+\dut      \right\rangle + \left\langle Z, \du -P_T(\mathcal{Y})\right\rangle + \left\langle Z, \dut  -P_{T^\top}(\mathcal{Y})    \right\rangle\nonumber \\
	&\geq  \|R\|_{\nuc}-2\|H\|_{\nuc}+ \left\langle Z, \du -P_T(\mathcal{Y})\right\rangle + \left\langle Z, \dut  -P_{T^\top}(\mathcal{Y})    \right\rangle\label{Uunit} \\
	&\geq  \|R\|_{\nuc}-2\|H\|_{\nuc}- \|P_T(Z)\|_{\Fr}\|\du -P_T(\mathcal{Y})\|_{\Fr} +\left\langle Z, \dut  -P_{T^\top}(\mathcal{Y})    \right\rangle \label {duality} \\
	&\geq   \|R\|_{\nuc}-2\|H\|_{\nuc}- \|P_T(Z)\|_{\Fr}\|\du -P_T(\mathcal{Y})\|_{\Fr} +\|P_{T^{\top}} (Z)\|_{\nuc} -\|P_{T^{\top}} (Z)\|_{\nuc} \|P_{T^{\top}}(\mathcal{Y}) \|_{\spec}   \label{duality+defineUtop}\\
	&\geq   \|R\|_{\nuc}-2\|H\|_{\nuc}- \|P_T(Z)\|_{\Fr}\|\du -P_T(\mathcal{Y})\|_{\Fr}+\|P_{T^{\top}} (Z)\|_{\nuc} \left[1- \|P_{T^{\top}}(\mathcal{Y}) \|_{\spec} \right]  \nonumber \\
	&>   \|R\|_{\nuc}-2\|H\|_{\nuc}- \|P_T(Z)\|_{\Fr}\frac{1}{4}\sqrt{\frac{r}{3a\tau}}\frac{1}{\sigma_0^{-2}}+\frac{1}{2} \|P_{T^{\top}} (Z)\|_{\nuc} \label{assumptions} \\
	&>   \|R\|_{\nuc}-2\|H\|_{\nuc}- \sqrt{\frac{3\tau a }{r}}\|P_{T^\top}(Z)\|_{\Fr}\frac{1}{4}\sqrt{\frac{r}{3a\tau}}\frac{1}{\sigma_0^{-2}}+\frac{1}{2} \|P_{T^{\top}} (Z)\|_{\nuc} \label{assumptionsagain} \\
	&>   \|R\|_{\nuc}-2\|H\|_{\nuc}- \frac{1}{4\sigma_0^{-2}}  \|P_{T^\top}(Z)\|_{\Fr}+\frac{1}{2} \|P_{T^{\top}} (Z)\|_{\nuc} \label{assumptionsagain1} \\
	&\geq   \|R\|_{\nuc}-2\|H\|_{\nuc}+\frac{1}{4} \|P_{T^{\top}} (Z)\|_{\nuc}, \label{assumeZ} 
	\end{align}

\noindent	where at equation~\eqref{zy0}, we have used the fact that $\langle Z,\mathcal{Y}\rangle=0$ (since $P_\Omega(Z)=0$ and $P_{\Omega}(\mathcal{Y})=\mathcal{Y}$), at equation~\eqref{Uunit} we have used the fact that $\|\du\|_{\spec}=1=\|\dut\|_{\spec}=1$, at equation~\eqref{duality} we have used the duality between the norms $\|\nbull\|_{\spec}$ and  $\|\nbull\|_{\nuc}$, at equation~\eqref{duality+defineUtop} we have used the duality and the definition of $\dut$, at equation~\eqref{assumptions} we have used the assumptions on $\mathcal{Y}$ from the Lemma statement, at equation~\eqref{assumptionsagain} we have used equation~\eqref{eq:nottoomuchcrazystuff}, at equation~\eqref{assumptionsagain1} we have simply simplified the expression, and at equation~\eqref{assumeZ} we have used the assumptions on $Z$ as well as the fact $\|P_{T^\top}(Z)\|_{\Fr}\leq \sigma^{-2}_0\|P_{T^\top}(Z)\|_{\nuc}.$

	From the above equation together with equation~\eqref{eq:new} we obtain: 
	
	\begin{align} 
	\label{eq:useforH}
	\lambda \|R\|_{\nuc}+\frac{1}{N}\|P_{\Omega}(\zeta)\|_{\Fr}^2
&	\geq\lambda \|\widehat{R}\|_{\nuc} +\frac{1}{N}\|P_{\Omega}(\widehat{R}-R-\zeta)\|_{\Fr}^2\\
&\geq  \lambda \left[\|R\|_{\nuc}-2\|H\|_{\nuc}+\frac{1}{4} \|P_{T^{\top}} (Z)\|_{\nuc}\right]+\frac{1}{N}\|P_{\Omega}(\widehat{R}-R-\zeta)\|_{\Fr}^2,
	\end{align}

	from which it follows that 
	\begin{align}
	\label{eq:thefork}
	   \|P_{T^{\top}} (Z)\|_{\nuc}\leq 8\|H\|_{\nuc}+\frac{4}{\lambda N}\left[\|P_{\Omega}(\zeta)\|_{\Fr}^2-\|P_{\Omega}(H-\zeta)\|_{\Fr}^2\right].
	\end{align}
	
	Note also that 

	\begin{align}
 \|H\|_{\nuc}&=	   \|(H-\zeta)+\zeta\|_{\nuc}\leq \|(H-\zeta)\|_{\nuc} +\|\zeta\|_{\nuc}  \\
	   &\leq \sigma_0^{-2}\sqrt{a}\|(H-\zeta)\|_{\Fr} +\|\zeta\|_{\nuc} \\
	   &\leq	
	   \sigma_0^{-2}\sqrt{a/\kappa}\|(H-\zeta)\circ \mathcal{H}\|_{\Fr} +\|\zeta\|_{\nuc}.
	\end{align}

	Thus we can further write
	\begin{align}
	    \|P_{T^{\top}} (Z)\|_{\nuc}&\leq 8\|H\|_{\nuc}+\frac{4}{\lambda N}\left[\|P_{\Omega}(\zeta)\|_{\Fr}^2-\|P_{\Omega}(H-\zeta)\|_{\Fr}^2\right] \\
	    &\leq 8\left[ \sigma_0^{-2}\sqrt{a/\kappa}\|(H-\zeta)\circ \mathcal{H}\|_{\Fr} +\|\zeta\|_{\nuc}  \right] +\frac{4}{\lambda N}\left[\|P_{\Omega}(\zeta)\|_{\Fr}^2-\|P_{\Omega}(H-\zeta)\|_{\Fr}^2\right]\\
	    &\leq 8\|\zeta\|_{\nuc}  +\upsilon\left[8\sigma_0^{-2}\sqrt{a/\kappa}-\frac{4}{\lambda N} \upsilon\right] +\frac{4}{\lambda N}\|\mathcal{H}\circ \zeta\|_{\Fr}^2,
	\end{align}
	where we have used the notation $\upsilon:=\|[H-\zeta]\circ \mathcal{H}\|_{\Fr}$.
	
	Now, note that the maximum of the function $\upsilon(b-a\upsilon)$ is $\frac{b^2}{4a}$ (attained at $\frac{b}{2a}$). Thus, optimising the above over $\upsilon$ we obtain: 
\begin{align}
   \|P_{T^{\top}} (Z)\|_{\nuc}&\leq 8\|\zeta\|_{\nuc}  +\upsilon\left[8\sigma_0^{-2}\sqrt{a/\kappa}-\frac{4}{\lambda N} \upsilon\right] +\frac{4}{\lambda N}\|\mathcal{H}\circ \zeta\|_{\Fr}^2\\
   &\leq 8\|\zeta\|_{\nuc} +4a\sigma_0^{-4} \lambda N/\kappa+\frac{4}{\lambda N}\|\mathcal{H}\circ \zeta\|_{\Fr}^2\\
   &\leq 8\|\zeta\|_{\nuc}+4a\sigma_0^{-4} \lambda N/\kappa+\frac{4\sdv^2B^2}{\lambda \kappa }\label{eq:useB}\\
   &\leq 8\sigma_0^{-2}B\sdv\sqrt{aN/\kappa} +4a\sigma_0^{-4} \lambda N/\kappa+\frac{4\sdv^2B^2}{\lambda  \kappa}\label{eq:useBagain}\\
   &\leq 8\sigma_0^{-2}B\sdv \sqrt{aN/\kappa}  + 4C\sigma_0^{-2}\sdv \sqrt{aN}/\kappa\left[1+B^2\right] \leq 16CB^2\sigma_0^{-2}\sdv \sqrt{aN/\kappa},
  \label{eq:optlam}
\end{align}
	where at lines~\eqref{eq:useB} and~\eqref{eq:useBagain} we have used the condition that $\|\zeta\|_{\infty}\leq B\sdv/\sqrt{\kappa}$, at line~\eqref{eq:useBagain} we have used the fact that $\|\zeta\|_{\nuc}\leq \sigma_0^{-2}\|\zeta\|_{*}\leq \sigma_0^{-2}\sqrt{a}\|\zeta\|_{\Fr}\leq \sigma_0^{-2}B\sqrt{a}\sdv \sqrt{N/\kappa}$ and at the last line~\eqref{eq:optlam} we have used the condition~\eqref{eq:lambdacond} on $\lambda$, the fact that $\kappa\geq 1$ and Lemma~\ref{lem:optlam}.

	From the above equation we obtain that 
	\begin{align}
	\label{first}
	\|P_{T^{\top}} (Z)\|_{*}\leq  	\|P_{T^{\top}} (Z)\|_{\nuc}  \leq 16CB^2\sigma_0^{-2}\sdv \sqrt{aN/\kappa}.
	\end{align}

	With one more use of the assumptions on $Z$, we obtain

	\begin{align}
	\|P_T(Z)\|_{\Fr}\leq 	\sqrt{\frac{3\tau a}{r}} \|P_{T^\top}(Z)\|_{\Fr}\leq	\sqrt{\frac{3\tau a}{r}} \|P_{T^\top}(Z)\|_{*}   \leq 	\sqrt{3a\tau/r } \|P_{T^\top}(Z)\|_{\nuc}  \leq 16Ca \sqrt{3\tau/r}B^2\sigma_0^{-2}\sdv \sqrt{N/\kappa}.
	\end{align}

	Then 
	\begin{align}
	\label{second}
	\|P_T(Z)\|_{*}\leq \sqrt{r}\|P_T(Z)\|_{\Fr} \leq 16C aB^2\sigma_0^{-2}\sdv \sqrt{3\tau N/\kappa}.
	\end{align}

	Putting equations~\eqref{first} and~\eqref{second} together, we obtain the result~\eqref{eq:resultZ}.

	Regarding the bound on $\|H\|_{\nuc}$, note that by equation~\eqref{eq:useforH}
	\begin{align}
	 \kappa\|H-\zeta\|_{\Fr}^2 \leq  \|[H-\zeta]\circ\mathcal{H}\|_{\Fr}^2 &\leq \|\zeta\circ \mathcal{H}\|_{\Fr}^2+\lambda N \left[ \|R\|_{\nuc}-\|\widehat{R}\|_{\nuc}\right]\leq \kappa\|\zeta\|_{\Fr}^2+\lambda N \left[ \|R\|_{\nuc}-\|\widehat{R}\|_{\nuc}\right].
	\end{align}
	Hence
		\begin{align} 
& \kappa\|H-\zeta\|_{\Fr}^2 \\&\leq \kappa\|\zeta\|_{\Fr}^2+\lambda N \left[ \|R\|_{\nuc}-\|\widehat{R}\|_{\nuc}\right]\\
 &\leq \kappa\|\zeta\|_{\Fr}^2+\lambda N\left[\|H\|_{\nuc}+\|Z\|_{\nuc}\right]\\
 &\leq \kappa \|\zeta\|_{\Fr}^2+\lambda N\left[\|\zeta\|_{\nuc}+\|H-\zeta\|_{\nuc}+\|Z\|_{\nuc}\right]\\
 &\leq \kappa NB^2\sdv^2/\kappa+\lambda N \sigma_0^{-2}\sqrt{a/\kappa}\sqrt{NB^2\sdv^2}+\lambda N  32Ca^{3/2}B^2\sigma_0^{-2}\sdv \sqrt{N\kappa}+\lambda N \sqrt{a} \sigma_0^{-2} \|H-\zeta\|_{\nuc} \label{eq:now}\\
 & \leq NB^2\sdv^2+	\frac{C\sqrt{N\kappa}\sdv \sigma_0^2}{\sqrt{a}}\sigma_0^{-2}\sqrt{a/\kappa}\sqrt{NB^2\sdv^2}   +\frac{C\sqrt{N\kappa}\sdv \sigma_0^2}{\sqrt{a}}32C a^{3/2}B^2\sigma_0^{-2}\sdv \sqrt{N\kappa} +\frac{C\sqrt{N\kappa}\sdv \sigma_0^2}{\sqrt{a}}\|H-\zeta\|_{\nuc}\label{eq:nownow}\\
 &\leq  NB^2\sdv^2+CNB\sdv^2+ 32C^2 N\sdv^2 a B^2+\frac{C\sqrt{N\kappa}\sdv \sigma_0^2}{\sqrt{a}}\|H-\zeta\|_{\nuc}\\
 &\leq 65C^2NB^2\sdv^2+\frac{C\sqrt{N\kappa}\sdv \sigma_0^2}{\sqrt{a}}\|H-\zeta\|_{\nuc}\label{eq:Bgeq1}\\
 &\leq  65C^2NB^2\sdv^2+C\sqrt{N\kappa}\sdv \|H-\zeta_{}\|_{\Fr},
	\end{align}
where at equation~\eqref{eq:now} we have used equation~\eqref{eq:nooutlier} as well as equation~\eqref{eq:resultZ}, at equation~\eqref{eq:nownow} we have used the conditions on $\lambda$ (i.e. inequalities~\eqref{eq:lambdacond}), at line~\eqref{eq:Bgeq1} we have used the fact that $B>1$, and at the last line we have used comparisons between different norms. 
	
From this it follows that 
\begin{align}
\|H-\zeta\|_{\Fr}&\leq C\sqrt{N\kappa}\sdv/2\kappa+\sqrt{C^2N\kappa\sdv^2+65C^2NB^2\sdv^2\kappa}/2\kappa\\
&\leq 5 C B\sdv \sqrt{N/\kappa},
\end{align}
from which it follows that 
\begin{align}
\|H\|_{*}\leq 5C \sqrt{a}B\sdv \sqrt{N/\kappa}+\|\zeta\|_{*}\leq 6C \sqrt{a}B\sdv \sqrt{N/\kappa},
\end{align}
as expected.
	
\end{proof}

\begin{lemma}
	\label{lem:asinxu}
	For any $\delta>0$, with probability $\geq 1-(5q+1)\delta$ as long as $N\geq q\bar{T}$ , the conditions~\eqref{eq:nottoomuchcrazystuff},~\eqref{eq:dualclose} and~\eqref{eq:complesmall} of Lemma~\ref{lem:thekey} hold (with $\tau= \frac{N}{mn}+\frac{8}{3}\log\left(\frac{2mn}{\delta}\right) \sqrt{\frac{N}{mn}}$).
	
	Here $\bar{T}=4\frac{\mu_1}{\mu} \sigma_0^{-4}T=\frac{128}{3}\mu\mu_1r(a+b)   \log\left(\frac{2mn}{\delta}\right)$, and 
	$q=\log\left[e^6 \sigma_0^8 a \log\left(\frac{mn}{\delta}\right) \right]$ .

\end{lemma}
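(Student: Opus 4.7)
The plan is to construct the approximate dual certificate $\mathcal{Y}$ by a golfing scheme (\cite{SimplerMC}) adapted to the inductive setting of \cite{IMCtheory1}, while tracking the additional $\sigma_0^{-1}$ factors that arise from the non-orthonormal side information. First, I will partition the multiset $\Omega$ into $q$ disjoint batches $\Omega_1,\ldots,\Omega_q$ each of size at least $\bar T$. Writing $\du$ for the canonical subgradient of $\|\nbull\|_{\nuc}$ at $R$ (which satisfies $P_T(\du)=\du$ and $\|\du\|_{\Fr}\leq \sigma_0^{-2}\sqrt{r}$), I initialise $W_0 = \du$ and define recursively $W_k = W_{k-1} - P_T[\tfrac{mn}{|\Omega_k|}P_{\Omega_k}(W_{k-1})]$, setting $\mathcal{Y} = \sum_{k=1}^q \tfrac{mn}{|\Omega_k|}P_{\Omega_k}(W_{k-1})$. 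By construction $\mathcal{Y}$ is supported on $\Omega$, and a telescoping computation gives $W_q = \du - P_T(\mathcal{Y})$.

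For condition \eqref{eq:dualclose}, I will apply Lemma~\ref{lem:bernsteinwithamax} to the random operator $P_T - \tfrac{mn}{|\Omega_k|}P_TP_{\Omega_k}P_T$. Using the joint incoherence assumption on $E = \widebar X A$ and $F = \widebar Y B$, this yields per-batch operator-norm bounds showing $\|W_k\|_{\Fr} \leq \tfrac{1}{2}\|W_{k-1}\|_{\Fr}$ with probability $\geq 1-\delta$. Iterating gives $\|W_q\|_{\Fr} \leq 2^{-q}\sigma_0^{-2}\sqrt{r}$, and the explicit choice of $q$ in the lemma statement is calibrated precisely to bring this below the target $\tfrac{1}{4}\sqrt{r/(3a\tau)}\sigma_0^{2}$.

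For condition \eqref{eq:complesmall}, a second matrix Bernstein bound applied to the spectral norm of each batch contribution $P_{T^\top}[\tfrac{mn}{|\Omega_k|}P_{\Omega_k}(W_{k-1})]$ gives a geometrically decreasing sequence that sums to $\tfrac{1}{2}$ across the $q$ batches. This Bernstein step requires control of $\|W_{k-1}\|_\infty$, which must be propagated inductively by an auxiliary incoherence-preservation argument per batch. For condition \eqref{eq:nottoomuchcrazystuff}, a single global matrix Bernstein call yields $\|P_T - \tfrac{mn}{N}P_TP_\Omega P_T\|_{op}\leq \tfrac{1}{2}$; combining this with the identity $P_\Omega(Z)=0 \Rightarrow P_\Omega P_T(Z) = -P_\Omega P_{T^\top}(Z)$, a Cauchy--Schwarz step, and the sample-multiplicity bound $\tau$ of Lemma~\ref{lem:delta5} produces the desired comparability. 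The failure probability $(5q+1)\delta$ then decomposes as $\delta$ for the Lemma~\ref{lem:delta5} event plus five Bernstein-type events per batch (Frobenius iteration, spectral-norm increment, and three auxiliary $\ell_\infty$ preservation events).

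The main obstacle will be the $\ell_\infty$ propagation in the non-orthonormal regime: iterating $W_k$ through operators such as $X^\top P_{\Omega_k} Y$ incurs factors of $\sigma_0^{-1}$ at each step, and the stability of the joint-incoherence bound $\|\widebar X \Sigma_1^{-1}AB^\top \Sigma_2^{-1}\widebar Y^\top\|_\infty \leq \sqrt{\mu_1 r/(mn)}\,\sigma_0^{-2}$ across iterations must be tracked explicitly. This is precisely what forces $\bar T$ to carry a $\sigma_0^{-4}$ factor beyond the dependence in \cite{IMCtheory1}, and what makes $q$ depend on $\sigma_0$; it is the key departure from the orthonormal analysis of \cite{IMCtheory1} and the place where the factor $\mu_1/\mu$ in $\bar T$ originates.
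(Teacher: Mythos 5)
Your proposal follows essentially the same golfing-scheme architecture as the paper: partition the samples into $q$ batches, iterate $W_{t+1}=(P_T-\tfrac{mn}{T}P_TP_{\Omega_t}P_T)W_t$ starting from the canonical subgradient $U$, and show that $\mathcal{Y}=\sum_t \tfrac{mn}{T}P_{\Omega_t}(W_{t-1})$ satisfies conditions~\eqref{eq:dualclose} and~\eqref{eq:complesmall}, while condition~\eqref{eq:nottoomuchcrazystuff} follows from concentration plus the sample-multiplicity bound. The bookkeeping differs slightly (you iterate directly on $\|\nbull\|_{\Fr}$ from $\|\du\|_{\Fr}\leq\sigma_0^{-2}\sqrt{r}$, whereas the paper iterates the operator norm bound $\|W_1\|\leq\sigma_0^{-2}$ from Lemma~\ref{lem:columnsarenotnormed} and only passes to Frobenius via $\sqrt{\min(a,b)}$ at the end), but this is cosmetic.

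The genuine gap is that you have no mechanism for making $q$ a \emph{fixed} quantity independent of $N$, as the lemma statement requires. The target accuracy in condition~\eqref{eq:dualclose} is $\tfrac14\sqrt{r/(3a\tau)}\,\sigma_0^2$, and $\tau = \tfrac{N}{mn}+\tfrac{8}{3}\log\left(\tfrac{2mn}{\delta}\right)\sqrt{N/mn}$ grows without bound as $N\to\infty$. Your claim that "the explicit choice of $q$ in the lemma statement is calibrated precisely to bring this below the target" therefore fails for large $N$: a fixed number of halvings cannot beat an $N$-dependent tolerance. The paper resolves this with a two-case split (see the actual proof of Lemma~\ref{lem:asinxu}): when $N\leq 2\log(mn/\delta)\,mn$, one has $\tau\leq 5\log^{3/2}(mn/\delta)$, so $q=\log[e^6\sigma_0^{-8}a\log(mn/\delta)]$ suffices and the golfing argument goes through; when $N > 2\log(mn/\delta)\,mn$, Lemma~\ref{lem:sooomanysamples} ensures every entry is sampled with probability $\geq 1-\delta$, so one may take $\mathcal{Y}=U$ directly, giving $\|P_T(\mathcal{Y})-U\|_{\Fr}=0$ and $\|P_{T^\top}(\mathcal{Y})\|=0$ with no golfing at all. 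This extra $\delta$ is also where the "$+1$" in $(5q+1)\delta$ comes from, which your failure-probability accounting does not produce. Without the case split, the argument does not close. A secondary imprecision: condition~\eqref{eq:nottoomuchcrazystuff} needs not only the operator-norm concentration of $P_T-\tfrac{mn}{N}P_TP_\Omega P_T$ (Lemma~\ref{lem:delta3}) but also the companion upper bound $\langle Z,\tfrac{mn}{N}P_{T^\top}P_\Omega P_{T^\top}(Z)\rangle\leq\tfrac{3a}{2r}\|P_{T^\top}(Z)\|_{\Fr}^2$ from Lemma~\ref{lem:delta4}, which you omit; "a single global matrix Bernstein call" is not enough.
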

\begin{proof}
	The proof will be provided below in Section~\ref{sec:proofofxu}.
\end{proof}


\begin{lemma}
\label{lem:optlam}
Let $a,b>0$ the minimum value of the function 
\begin{align}
\label{eq:mustwe}
\lambda a +\frac{b}{\lambda}
\end{align}
 over $\lambda>0$ is equal to $2\sqrt{ab}$, realised at $\lambda=\sqrt{\frac{b}{a}}$. 
Furthermore, as long as 
\begin{align}
\frac{1}{C}\sqrt{\frac{b}{a}}\leq \lambda\leq C\sqrt{\frac{b}{a}},
\end{align}
we have 
\begin{align}
\label{eq:mustwe}
\lambda a +\frac{b}{\lambda}\leq 2C\sqrt{ab}.
\end{align}
\end{lemma}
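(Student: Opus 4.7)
The plan is to recognize this as the standard AM-GM/calculus optimization. First I would rewrite $f(\lambda) = \lambda a + b/\lambda$ and compute $f'(\lambda) = a - b/\lambda^2$, which vanishes exactly at $\lambda^\star = \sqrt{b/a}$ (the only positive critical point). Since $f''(\lambda) = 2b/\lambda^3 > 0$, this critical point is the unique global minimum on $(0,\infty)$, and substitution gives $f(\lambda^\star) = \sqrt{ab} + \sqrt{ab} = 2\sqrt{ab}$. Alternatively, one can obtain this immediately from the AM-GM inequality $\lambda a + b/\lambda \geq 2\sqrt{\lambda a \cdot b/\lambda} = 2\sqrt{ab}$, with equality iff $\lambda a = b/\lambda$.

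For the second statement, the cleanest reduction is to make the substitution $\lambda = t \sqrt{b/a}$, where the hypothesis becomes $t \in [1/C, C]$. Then
\begin{equation*}
\lambda a + \frac{b}{\lambda} = t\sqrt{ab} + \frac{\sqrt{ab}}{t} = \sqrt{ab}\left(t + \frac{1}{t}\right),
\end{equation*}
so it suffices to bound $g(t) := t + 1/t$ on $[1/C, C]$ by $2C$ (assuming, as is implicit in the framing, that $C \geq 1$).

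The function $g$ is convex on $(0,\infty)$ with minimum $2$ at $t=1$, decreasing on $(0,1]$ and increasing on $[1,\infty)$. Hence on the interval $[1/C, C]$ the maximum is attained at an endpoint, and by the symmetry $g(t) = g(1/t)$ the two endpoint values coincide: $g(C) = g(1/C) = C + 1/C$. Since $C \geq 1$ implies $1/C \leq C$, we get $g(t) \leq C + 1/C \leq 2C$, which yields $\lambda a + b/\lambda \leq 2C\sqrt{ab}$ as claimed.

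There is no real obstacle here — the only subtlety worth flagging is the tacit assumption $C \geq 1$, which is harmless in the way the lemma is used (taking $C < 1$ would make the hypothesis $\frac{1}{C}\sqrt{b/a} \leq \lambda \leq C\sqrt{b/a}$ vacuous). The whole argument is a two-line calculation once the substitution $t = \lambda\sqrt{a/b}$ is made.
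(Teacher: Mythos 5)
Your proof is correct. For the first part you use the same standard calculus the paper does (or the equivalent AM--GM). For the second part your route is slightly different from the paper's: the paper simply observes that $\lambda\leq C\sqrt{b/a}$ gives $\lambda a\leq C\sqrt{ab}$ and $\lambda\geq \frac{1}{C}\sqrt{b/a}$ gives $b/\lambda\leq C\sqrt{ab}$, so the sum is $\leq 2C\sqrt{ab}$ (each term is bounded by $C$ times its value at the optimizer). You instead substitute $t=\lambda\sqrt{a/b}$ and analyze $g(t)=t+1/t$ on $[1/C,C]$ via convexity and symmetry. Both are elementary two-line arguments; yours buys the marginally sharper intermediate bound $\sqrt{ab}\,(C+1/C)$ before relaxing to $2C\sqrt{ab}$, while the paper's is more direct since it avoids any change of variables. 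You are also right to flag the tacit assumption $C\geq 1$, which the paper uses implicitly and which is harmless in context.
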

\begin{proof}
The result is standard and a trivial application of standard calculus. The derivative of the expression~\eqref{eq:mustwe}
is $a-\frac{b}{\lambda^2}$ which only cancels at $\lambda=\sqrt{b/a}$ as expected. As for the second statement, each term in the expression~\eqref{eq:mustwe} is bounded by twice its value for $\lambda=\sqrt{b/a}$.

\end{proof}

\begin{lemma}
	\label{lem:trivunion}
	For any $\delta>0$, with probability greater than $1-\delta$ all $(i,j)\in\Omega$, $|\zeta_{(i,j)}|\leq\frac{\sdv}{\sqrt{\kappa}} \sqrt{2\log(2N/\delta)}$. 
\end{lemma}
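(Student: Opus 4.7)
The plan is to apply a standard subgaussian tail bound to each $\zeta_{(i,j)}$ and then take a union bound over the (at most $N$) distinct sampled entries. The only mild subtlety is that $\zeta_{(i,j)}$ is defined as the \emph{average} of the noise observations at entry $(i,j)$, so I need to track how subgaussianity behaves under averaging, and then insert the lower bound $\kappa$ on the sampling multiplicity $h_{i,j}$.

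First I would recall that if $\eta_1,\ldots,\eta_{h}$ are i.i.d.\ $\sdv^2$-subgaussian, then their average $\bar\eta = \tfrac{1}{h}\sum_k \eta_k$ satisfies $\log \E(\exp(\lambda \bar\eta)) \leq \lambda^2 \sdv^2/(2h)$, so $\bar\eta$ is $\sdv^2/h$-subgaussian. Applied to each $(i,j)\in\Omega$ with $h = h_{i,j} \geq \kappa$, this gives
\begin{equation*}
\P\!\left(|\zeta_{(i,j)}|\geq t\right) \;\leq\; 2\exp\!\left(-\frac{t^2 h_{i,j}}{2\sdv^2}\right) \;\leq\; 2\exp\!\left(-\frac{t^2 \kappa}{2\sdv^2}\right).
\end{equation*}
Setting $t = \tfrac{\sdv}{\sqrt{\kappa}}\sqrt{2\log(2N/\delta)}$ makes the right-hand side equal to $\delta/N$.

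Next I would perform a union bound. Since $|\Omega|\leq N$ (there are exactly $N$ draws, possibly with repetitions, so at most $N$ distinct sampled entries), the union bound yields
\begin{equation*}
\P\!\left(\exists (i,j)\in\Omega:\ |\zeta_{(i,j)}|\geq \tfrac{\sdv}{\sqrt\kappa}\sqrt{2\log(2N/\delta)}\right) \;\leq\; N\cdot \frac{\delta}{N}\;=\;\delta,
\end{equation*}
which is exactly the claim.

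I do not expect any real obstacle here; the only point worth being careful about is distinguishing $\zeta_{(i,j)}$ (the per-entry average, which is what appears in the optimization reformulation~\eqref{eq:optreallifeeqiv}) from the individual per-observation noises, and making sure the $1/\sqrt{\kappa}$ factor comes from averaging rather than from some more delicate concentration argument. The bound becomes tighter for entries sampled more than $\kappa$ times, but we only need the uniform worst-case rate $\sdv/\sqrt{\kappa}$ for the downstream application in Lemma~\ref{lem:thekey}, where condition~\eqref{eq:nooutlier} is invoked with $B = \sqrt{2\log(2N/\delta)}$.
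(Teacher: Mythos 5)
Your proof is correct and matches the paper's (one-line) argument: a subgaussian tail bound combined with a union bound over the at most $N$ sampled entries. You helpfully spell out the part the paper leaves implicit, namely that the $1/\sqrt{\kappa}$ factor arises because $\zeta_{(i,j)}$ is the average of $h_{i,j}\geq\kappa$ i.i.d.\ $\sdv^2$-subgaussian observations (hence $\sdv^2/h_{i,j}$-subgaussian, conditionally on $\Omega$), which is exactly the intended reading.
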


\begin{proof}
	This follows immediately from a simple union bound applied to our subgaussianity assumption on the noise. 
\end{proof}

\begin{lemma}
	\label{lem:sooomanysamples}
	With probability $\geq 1- K mn \exp(-\frac{N}{2Kmn}) $, each entry is sampled at least $K$ times. 
	
\end{lemma}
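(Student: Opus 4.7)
The plan is to combine a pigeonhole argument with two union bounds. First, I would fix an arbitrary entry $(i,j)\in[m]\times[n]$ and control the probability that it is sampled fewer than $K$ times, namely that $h_{i,j}:=\#\{k\leq N : \xi^k = (i,j)\}<K$. To do this, I would partition the $N$ i.i.d.\ samples into $K$ disjoint consecutive batches, each of cardinality $\lfloor N/K\rfloor\geq N/(2K)$. (If $N<2K$, the quantity on the right-hand side of the claimed bound already exceeds $1$, so the statement is vacuous and I may assume $N\geq 2K$ from the outset.) By the pigeonhole principle, if every one of the $K$ batches contains at least one sample hitting $(i,j)$, then $h_{i,j}\geq K$; contrapositively, on the event $\{h_{i,j}<K\}$ at least one batch completely misses the entry.

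Next, for any fixed batch of size at least $N/(2K)$, the probability of missing $(i,j)$ under i.i.d.\ uniform sampling is
\[
\left(1-\frac{1}{mn}\right)^{\lfloor N/K\rfloor}\leq \exp\!\left(-\frac{N}{2Kmn}\right),
\]
using the standard inequality $1-x\leq e^{-x}$ together with the lower bound on the batch size. A union bound over the $K$ batches then yields $\P(h_{i,j}<K)\leq K\exp(-N/(2Kmn))$.

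Finally, I would apply a second union bound, this time over the $mn$ entries of the matrix, to obtain
\[
\P\!\left(\exists\,(i,j)\in[m]\times[n] : h_{i,j}<K\right)\leq Kmn\exp\!\left(-\frac{N}{2Kmn}\right),
\]
which is exactly the claim. The argument is essentially routine, so I do not anticipate a serious obstacle; the only minor subtlety is the handling of the non-integer batch size $N/K$ via the floor function, which is what introduces the factor of $2$ appearing in the exponent of the stated bound (a tighter argument avoiding the splitting would replace $2Kmn$ by $Kmn$, but the weaker form suffices for the applications in the paper).
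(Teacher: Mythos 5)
Your proposal is correct and takes essentially the same approach as the paper: both divide (the first $K\lfloor N/K\rfloor$ of) the samples into $K$ groups of size $\lfloor N/K\rfloor$, bound the probability that a group misses a given entry by $(1-1/(mn))^{\lfloor N/K\rfloor}\leq\exp(-N/(2Kmn))$, and conclude by two union bounds; you merely perform the union bound over the $K$ groups before the one over the $mn$ entries, whereas the paper does it the other way round, a purely cosmetic difference.
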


\begin{proof}
	
	Let $N_1=\left\lfloor\frac{N}{K}\right\rfloor$, and let us divide the first $KN_1$ samples into $K$ different groups $\{g_1,\ldots,g_K\}$.

	The probability that any fixed entry $(i,j)$ is not sampled in group $g_k$ (for any $k$) is $$\left(1-\frac{1}{mn}\right)^{N_1}\leq  \exp\left(-\frac{N_1}{mn}\right).$$
	Thus the probability that there is at least one entry which is not sampled in group $k$ is  less than $ mn \exp\left(-\frac{N_1}{mn}\right)$.
	
	By a union bound over all the groups, we get that with probability $\geq 1- K mn \exp\left(-\frac{N_1}{mn}\right) $, each entry is sampled at least once in each group (and in particular is sampled at least $K$ times over all). The result follows since $N_1=\left\lfloor\frac{N}{K}\right\rfloor\geq \frac{1}{2} \frac{N}{K}$.

\end{proof}

\section{Main results}

\begin{theorem}
	\label{thm:exact}
	Assume that the entries are observed without noise. 
	For any $\delta>0$ as long as 
	$$N\geq \log\left[e^6 \sigma_0^8 a \log\left(\frac{mn}{\delta}\right) \right]\sigma_0^{-4}\frac{128}{3}\mu\mu_1r(a+b)   \log\left(\frac{2mn}{\delta}\right),$$ with probability $$\geq 1-\delta\left[1+5\log\left[e^6 \sigma_0^{-8} a \log\left(\frac{mn}{\delta}\right) \right]\right],$$  we have that $XM_{\min}Y^\top=R$ where $M_{\min}$ is a solution to the following optimization problem:
	\begin{align}
	\label{eq:strict}
	M_{\min}\in\argmin \left(\|M\|_* \quad \text{s.t.} \quad \forall (i,j)\in\Omega,  [XMY^\top]_{i,j}=R_{i,j}\right).
	\end{align}

\end{theorem}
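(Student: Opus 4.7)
The plan is to specialize the chain of inequalities from Lemma~\ref{lem:thekey} to the noiseless, hard-constrained setting and to use Lemma~\ref{lem:asinxu} to supply the approximate dual certificate. Since the observations are exact and $R$ itself is feasible for~\eqref{eq:strict}, any minimizer $\widehat{R}:=XM_{\min}Y^\top$ satisfies $\|\widehat{R}\|_{\nuc}\leq \|R\|_{\nuc}$ (here using $\|\widehat{R}\|_{\nuc}\leq \|M_{\min}\|_*$ and $\|R\|_{\nuc}=\|M_*\|_*$ possibly decreased by using the minimum representation), and moreover $\widehat{R}$ agrees with $R$ on every sampled entry, so writing $\widehat{R}=R+Z$ we get $P_\Omega(Z)=0$ and there is no residual $H$ term. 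The goal is to deduce $Z=0$.

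First, I would invoke Lemma~\ref{lem:asinxu} with $\tau$ as stated. The hypothesis $N\geq q\bar T$ is implied by the sample complexity assumed in the theorem (the extra $\sigma_0^{-4}$ factor is used to absorb the $\bar T=4\mu_1\mu^{-1}\sigma_0^{-4}T$ scaling). With probability at least $1-(5q+1)\delta$, we therefore obtain simultaneously: (i) the ``soft'' subspace condition~\eqref{eq:nottoomuchcrazystuff} on $Z$; and (ii) a matrix $\mathcal{Y}$ in the image of $P_\Omega$ that is close to the canonical dual certificate $\du$ of $\|R\|_{\nuc}$ in the sense of~\eqref{eq:dualclose} and~\eqref{eq:complesmall}.

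Next, I would reproduce the calculation in equations~\eqref{zy0}--\eqref{assumeZ} of Lemma~\ref{lem:thekey} but with $H=0$ and $\zeta=0$. Since $\mathcal{Y}\in \operatorname{Im}P_\Omega$ and $P_\Omega(Z)=0$, the cross-term $\langle Z,\mathcal{Y}\rangle$ vanishes, and expanding $\|R+Z\|_{\nuc}$ via the subgradient $\du+\dut$ together with the two bounds on $\mathcal{Y}$ yields exactly
\begin{equation*}
\|\widehat{R}\|_{\nuc}\;\geq\;\|R\|_{\nuc}+\tfrac{1}{4}\|P_{T^\top}(Z)\|_{\nuc},
\end{equation*}
using~\eqref{eq:nottoomuchcrazystuff} to absorb the $\|P_T(Z)\|_{\Fr}$ cross-term against the factor $\tfrac{1}{4}\sqrt{r/(3a\tau)}\sigma_0^{2}$ from~\eqref{eq:dualclose} exactly as in the derivation of~\eqref{assumeZ}. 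Combining with optimality $\|\widehat{R}\|_{\nuc}\leq \|R\|_{\nuc}$ forces $\|P_{T^\top}(Z)\|_{\nuc}=0$, and then~\eqref{eq:nottoomuchcrazystuff} forces $\|P_T(Z)\|_{\Fr}=0$, so $Z=0$ and $\widehat{R}=R$.

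The main obstacle lies entirely upstream, in Lemma~\ref{lem:asinxu}: constructing $\mathcal{Y}$ and establishing~\eqref{eq:nottoomuchcrazystuff} require the standard golfing-scheme concentration arguments for $P_T P_\Omega P_T\approx (N/mn)P_T$, adapted to the non-orthonormal side-information setting through the $\|\cdot\|_{\nuc}/\|\cdot\|_{\spec}$ duality. Given that machinery, the theorem itself is just the ``noise-free corner'' of Lemma~\ref{lem:thekey}, and a final union bound combining the failure probability $(5q+1)\delta$ of Lemma~\ref{lem:asinxu} with the probability accounting in the theorem statement completes the argument.
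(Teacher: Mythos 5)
Your proof is correct and follows essentially the same route as the paper's: invoke Lemma~\ref{lem:asinxu} to supply the approximate dual certificate $\mathcal{Y}$ and the subspace condition~\eqref{eq:nottoomuchcrazystuff}, specialize the chain~\eqref{zy0}--\eqref{assumeZ} of Lemma~\ref{lem:thekey} with $H=0$ and $\zeta=0$ to obtain $\|\widehat{R}\|_{\nuc}\geq \|R\|_{\nuc}+\tfrac14\|P_{T^\top}(Z)\|_{\nuc}$, and then combine with optimality $\|\widehat{R}\|_{\nuc}\leq\|R\|_{\nuc}$ and~\eqref{eq:nottoomuchcrazystuff} to force $Z=0$. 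The probability and sample-complexity bookkeeping ($1-(5q+1)\delta$ with $q$ the golfing depth, and $N\geq q\bar T$ absorbing the $\sigma_0^{-4}$ factor via $\bar T$) also matches the paper's.
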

\begin{proof}

\textbf{Informal:} The condition~\eqref{eq:lambdacond} is trivially respected since $\sdv=0$ and $\lambda=0$.
Taking the limit as $\lambda\rightarrow 0$, the theorem follows immediately from
	Lemma~\ref{lem:asinxu} together with Lemma~\ref{lem:thekey} upon noting that in this case, the lack of noise implies that condition~\eqref{eq:nooutlier} holds with $B=0$, which shows $Z=0$ as expected. 
	
\noindent	\textbf{Formal:}
	By Lemma~\ref{lem:asinxu}, the conditions~\eqref{eq:nottoomuchcrazystuff},~\eqref{eq:dualclose} and~\eqref{eq:complesmall} of Lemma~\ref{lem:thekey} hold (with $\tau= \frac{N}{mn}+\frac{8}{3}\log\left(\frac{2mn}{\delta}\right) \sqrt{\frac{N}{mn}}$). Thus we can write, as in the proof of lemma~\ref{lem:thekey} (equation~\eqref{assumeZ}):
	\begin{align}
	\label{eq:simplified}
	 \|\widehat{R}\|_{\nuc}	&\geq   \|R\|_{\nuc}-2\|H\|_{\nuc}+\frac{1}{4} \|P_{T^{\top}} (Z)\|_{\nuc}\\
	 &\geq \|R\|_{\nuc}+\frac{1}{4} \|P_{T^{\top}} (Z)\|_{\nuc}.
	\end{align}
	Indeed, the relevant calculation in lemma~\ref{lem:thekey} doesn't rely on the optimization problem or the value of $\lambda$, and at the second line we have simply used the fact that by definition of the optimization problem~\eqref{eq:strict}, $H=0$. 
	
	Now, by definition of the optimization problem~\eqref{eq:strict} we have $\|\widehat{R}\|_{\nuc}\leq \|R\|_{\nuc} $, which together with equation~\eqref{eq:simplified} implies that $\|P_{T^{\top}} (Z)\|_{\nuc}=0$, which implies $P_{T^{\top}} (Z)=0$ and of course $\|P_{T^{\top}} (Z)\|_{\Fr}=0$. Then by equation~\eqref{eq:nottoomuchcrazystuff} (satisfied as stated above by  Lemma~\ref{lem:asinxu}), we also have $\|P_{T} (Z)\|_{\Fr}=0$, from which we finally obtain that $Z=0$ as expected. 
\end{proof}

\noindent \textbf{Remarks:}

\begin{enumerate}
	\item There is a dependence on the conditioning number of $X,Y$ both logarithmic in the high probability and non logarithmic in the threshold for $N$. The quadratic dependence matches that in~\cite{PIMC} (though they relied on a different optimization problem so the questions are different). 
	\item It is worth unpacking the log terms as well. We have a logarithmic term in $\log(1/\delta)$  (i.e. $\log \log (1/\delta)$) both in the failure probability and in the threshold for $N$. This comes from the definition of $\tau$ from Lemma~\ref{lem:delta5} (which controls the number of times the same entry can be sampled), which is necessary in lemma~\ref{lem:boundpt}.
	Note that a similar term is also present (implicitly) in the result of~\cite{SimplerMC} (cf. the logarithm in the term "$6\log(n_2)(n_1+n_2)^{2-2\beta}$" in the main theorem on page 2 (equation 1.3)). Similarly to our result, the log term comes from the use of proposition 3.3 on page 5 (which plays a similar role to our Lemma~\ref{lem:delta5}), and is used later in the proof of the main Theorem (1.1) on page 8. 
	\item Contrary to~\cite{SimplerMC}, we do not assume that $N\leq mn$. This results in extra logarithmic factors in the definition of $\tau$ from the use of Lemma~\ref{lem:sooomanysamples}, which only show up as constants in the end after taking the logarithm of $\tau$. This, together with our rather loose bounding of $\tau$ (aimed at a compact formula rather than the tightest bound) explains the higher implicit constant from the factor $\log(e^6\ldots)$.
\end{enumerate}


For the next theorem, we consider an arbitrary loss function $\loss$ which is assumed to be bounded by $\losb$, and $\lip-$Lipschitz continuous.

\begin{theorem}
	\label{thm:mainwithconditioningnumber}

	Assume that condition~\eqref{eq:lambdacond} on $\lambda$ holds. 
	For any $\delta_0,\delta_1,\delta_2,\delta>0$, with probability $$\geq 1-\delta_1-\delta_0-\delta_2-\delta\left[1+5\log\left[e^6 \sigma_0^{-8} a \log\left(\frac{mn}{\delta}\right) \right]\right],$$ as long as $$N\geq \log\left[e^6 \sigma_0^{-8} a \log\left(\frac{mn}{\delta}\right) \right]\frac{128}{3}\mu\mu_1r(a+b)   \log\left(\frac{2mn}{\delta}\right),$$  we have
	
		\begin{align}
	\label{eq:thefinal}
&\mathbb{E}_{(i,j)\sim \mathcal{U} } (\loss(\widehat{R}_{(i,j)},[R+\zeta]_{(i,j)}))  \leq 
	&500C\lip a^{3/2}\sqrt{b} \mu \sigma_0^{-2} \sdv \sqrt{\theta}\log\left(\frac{2N}{\delta_0}\right) \sqrt{\frac{\log(\frac{N}{2\delta_2})}{N}} +\losb\frac{4\log(1/\delta_1)}{3N},
	\end{align}
	where, as in the main paper, $\mathcal{U}$ is the uniform distribution on the entries $[m]\times [n]$ and $\theta$ denotes the logarithmic quantity 
	\begin{align}
	   \theta=2\log\left(\frac{N}{2\delta_2}\right)+\frac{8}{3}\log\left(\frac{2mn}{\delta_2}\right) \sqrt{2\log\left(\frac{N}{2\delta_2}\right)}.
	\end{align}

\end{theorem}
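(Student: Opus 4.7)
The plan is to combine the perturbed dual-certificate analysis of Lemma~\ref{lem:thekey} with a Talagrand-type uniform convergence argument applied over a data-dependent class of low nuclear norm. First I would assemble the high-probability preconditions of Lemma~\ref{lem:thekey}. At confidence $1-\delta_0$, Lemma~\ref{lem:trivunion} yields $\|\zeta\|_{\infty}\leq(\sdv/\sqrt{\kappa})\sqrt{2\log(2N/\delta_0)}$, which identifies the parameter $B=\sqrt{2\log(2N/\delta_0)}$ in condition~\eqref{eq:nooutlier}. At confidence $1-\delta_2$, Lemma~\ref{lem:delta5} bounds the maximum per-entry multiplicity $\tau$ by the logarithmic quantity $\theta$ appearing in the statement, and at confidence $1-(5q+1)\delta$ Lemma~\ref{lem:asinxu} supplies both the near-orthogonality condition~\eqref{eq:nottoomuchcrazystuff} and the approximate dual certificate $\mathcal{Y}$ satisfying~\eqref{eq:dualclose} and~\eqref{eq:complesmall}. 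The sample-size lower bound in the theorem is exactly the threshold required by Lemma~\ref{lem:asinxu}.

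Next I would invoke Lemma~\ref{lem:thekey} to obtain $\|\widehat{R}-R\|_{*}\leq\widetilde{O}(aB^{2}\sigma_0^{-2}\sdv\sqrt{\tau N/\kappa})$, and translating to the norm $\|\cdot\|_{\nuc}$ through the factor $\sigma_0^{-2}$ gives an envelope $\|\widehat{R}-R\|_{\nuc}\leq\mathcal{M}$ with $\mathcal{M}=\widetilde{O}(\sqrt{a}\sigma_0^{-4}\sdv B^{2}\sqrt{\tau N/\kappa})$. Writing $\widehat{R}-R=XM_{\Delta}Y^{\top}$, this places the random minimizer in the class $\mathcal{F}_{\mathcal{M}}=\{R+XMY^{\top}:\|M\|_{*}\leq\mathcal{M}\}$. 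This is the pivotal reduction: the exact-recovery machinery furnishes noise-proportional control on the effective complexity of $\widehat{R}$, so the subsequent uniform-convergence argument operates over a class whose radius already scales linearly in $\sdv$.

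In the third step I would apply Proposition~\ref{prop:bousquet} to the loss class $\mathcal{L}_{\mathcal{M}}=\{(i,j)\mapsto\loss(f_{(i,j)},[R+\zeta]_{(i,j)}):f\in\mathcal{F}_{\mathcal{M}}\}$, which is uniformly bounded by $\losb$. By Talagrand's contraction inequality the Rademacher complexity of $\mathcal{L}_{\mathcal{M}}$ is at most $\lip$ times that of $\mathcal{F}_{\mathcal{M}}$; Lemma~\ref{lem:coarserad} together with the column-incoherence assumptions bounds this by $\lip\mu\sqrt{ab}\,\mathcal{M}/\sqrt{N}$, after converting empirical means over $\Omega$ into uniform expectations through the $mn$ normalisation. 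Substituting the values $B=\sqrt{2\log(2N/\delta_0)}$, $\tau\leq\theta$, $\kappa\geq 1$ into $\mathcal{M}$ produces the main term $a^{3/2}\sqrt{b}\mu\sigma_0^{-2}\sdv\lip\sqrt{\theta}\log(N/\delta_0)/\sqrt{N}$ announced in the theorem; the second summand $\losb\log(1/\delta_1)/N$ is the boundedness contribution of Proposition~\ref{prop:bousquet} at confidence $1-\delta_1$, with the intermediate $\sqrt{2r\log(1/\delta)/N}$ piece absorbed into the Rademacher term by using the trivial variance bound $r\leq\losb^{2}$ and a suitable choice of the free parameter $\alpha$.

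The main obstacle is that $\mathcal{M}$ is itself a random quantity (it depends on the realised $\tau$, $\kappa$, $B$), so a single-scale application of Proposition~\ref{prop:bousquet} does not automatically cover $\widehat{R}$. I would handle this by discretising $\mathcal{M}$ on a geometric (dyadic) grid over its admissible range and taking a union bound over the $O(\log[\sigma_0^{-8}a\log(mn/\delta)])$ candidate scales; this is precisely what produces the multiplicative $5\log[e^{6}\sigma_0^{-8}a\log(mn/\delta)]$ factor in the failure probability. A final union bound over $\delta,\delta_0,\delta_1,\delta_2$ completes the argument, and sending $\sdv\to 0$ reproduces the exact-recovery conclusion of Theorem~\ref{thm:exact}.
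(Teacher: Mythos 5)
Your high-level structure (combine Lemma~\ref{lem:thekey} with Lemma~\ref{lem:asinxu} and then run a Talagrand-type uniform-convergence argument via Lemma~\ref{lem:coarserad} and Proposition~\ref{prop:bousquet}) matches the paper's route, but two of your key technical steps are wrong in ways that would prevent you from reaching the stated bound.

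First, you propose to control the variance parameter in Proposition~\ref{prop:bousquet} by the trivial bound $r\leq\losb^{2}$ and claim the resulting $\sqrt{2r\log(1/\delta_1)/N}$ term can be ``absorbed into the Rademacher term by a suitable choice of $\alpha$.'' That cannot work: the free parameter $\alpha$ in Bousquet's inequality only trades off the Rademacher term against the $\losb(1/3+1/\alpha)\log(1/\delta_1)/N$ term, and does not touch the $\sqrt{2r\log(1/\delta_1)/N}$ term at all. With $r\leq\losb^{2}$ this middle term becomes $\losb\sqrt{2\log(1/\delta_1)/N}$, a $\Theta(1/\sqrt{N})$ contribution proportional to $\losb$ rather than to $\sdv$; it would dominate the bound and destroy exactly the low-noise dependence the theorem is designed to exhibit. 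The paper's proof instead uses the Lipschitz-and-Lemma~\ref{lem:thekey}-derived variance bound
\begin{align}
r\;\leq\;\lip^{2}\,\frac{1}{mn}\|Z-R\|_{\Fr}^{2}\;\leq\;\frac{1}{mn}\|Z-R\|_{*}^{2},
\end{align}
which is itself $\widetilde{O}(\sdv^{2})$ on the class $\mathcal{F}_{1}$, so that the $\sqrt{r}$ term also scales like $\sdv/\sqrt{N}$. Using the fine variance bound (rather than the crude one) is precisely the reason the paper appeals to Proposition~\ref{prop:bousquet} rather than a plain Rademacher symmetrization.

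Second, your proposed handling of the ``random radius'' $\mathcal{M}$, via a dyadic grid and union bound over $O(\log[\sigma_0^{-8}a\log(mn/\delta)])$ scales, is both unnecessary and misattributes the source of the $5\log[e^{6}\sigma_0^{-8}a\log(mn/\delta)]$ factor. In the paper's proof, $B$, $\kappa$, and $\tau$ are all \emph{deterministic} functions of $N$, $m$, $n$, $\delta_0$, $\delta_2$, $\delta$, so the class $\mathcal{F}_{1}$ is a fixed class; the only randomness is whether $\widehat{R}$ lands in $\mathcal{F}_{1}$ on the high-probability event. No multi-scale discretization is needed. The $5\log[e^{6}\sigma_0^{-8}a\log(mn/\delta)]$ multiplier in the failure probability comes entirely from Lemma~\ref{lem:asinxu}: the approximate dual certificate $\mathcal{Y}$ is built iteratively over $q=\log[e^{6}\sigma_0^{-8}a\log(mn/\delta)]$ independent subsamples, each iteration consuming a fresh set of $\bar{T}$ observations and requiring five concentration events. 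Smaller discrepancies in your sketch (e.g.\ the exponent on $\sigma_0^{-1}$ and on $a$ in $\mathcal{M}$, and the absence of the final case split on whether $N$ exceeds $2mn\log(N/2\delta_2)$, which the paper uses to eliminate $\tau$ when every entry is observed) are cosmetic by comparison, but the two issues above are genuine gaps.
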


\begin{proof}

If $N\leq 2$, the result holds trivially. If $N\geq 3$, 
let 
\begin{align}
K_{\delta_2}=K=\left\lfloor\frac{N}{2mn\log(\frac{N}{2\delta_2})}\right\rfloor.
\end{align}
(In particular, if $N\leq mn$, we have $K=0$). Let also $\kappa:=\max(K,1)$. 
Note that we have 
\begin{align}
Kmn\exp\left(-\frac{N}{2Kmn}\right)&\leq \frac{N}{2mn\log(\frac{N}{2\delta_2})}mn \exp\left(-\frac{N}{2Kmn}\right)\\
&\leq \frac{N}{2\log(\frac{N}{2\delta_2})} \exp\left(-\frac{N}{\frac{N}{\log(\frac{N}{2\delta_2})}}\right)\\
&\leq \frac{\delta_2}{\log(\frac{N}{2\delta_2})}\leq \delta_2.
\end{align}
Hence, by Lemma~\ref{lem:sooomanysamples}, we have that each entry is sampled at least $K$ times with probability $\geq 1-\delta_2$, and below, we restrict ourselves to the high probability event where this occurs. 

	Then, by Lemma~\ref{lem:trivunion} we have with probability $\geq 1-\delta_0$ that 
	\begin{align}
	\label{eq:controlnoiseL1}
	\max_{(i,j)\in\Omega}|\zeta_{(i,j)}|\leq \frac{\sdv}{\sqrt{\kappa}} \sqrt{2\log(2N/\delta_0)},
	\end{align}
which means that condition~\eqref{eq:nooutlier} holds with $B=\sqrt{2\log(2N/\delta_0)}$ (note that since $\delta_0\leq 1$, $B>1$ as required). The condition~\eqref{eq:lambdacond} also holds by assumption, and we can use Theorem~\ref{thm:exact} with the value of $\kappa$ defined as above ($\max(K,1)$).

Furthermore by Lemma~\eqref{lem:asinxu} we have with probability $\geq 1-\delta\left[1+5\log\left[e^6 \sigma_0^{-8} a \log\left(\frac{mn}{\delta}\right) \right]\right]$ that the conditions~\eqref{eq:nottoomuchcrazystuff},~\eqref{eq:dualclose} and~\eqref{eq:complesmall} of Lemma~\ref{lem:thekey} hold (with $\tau= \frac{N}{mn}+\frac{8}{3}\log\left(\frac{2mn}{\delta}\right) \sqrt{\frac{N}{mn}}$). 
Hence by Lemma~\eqref{lem:thekey}, on the same high-probability event (w.p. $\geq 1-\delta_0-\delta_2-\delta\left[1+5\log\left[e^6 \sigma_0^{-8} a \log\left(\frac{mn}{\delta}\right) \right]\right]$), we have\begin{align}
\label{eq:fromabove}
 \left\|\widehat{R}-R\right\|_*\leq   70 C aB^2\sigma_0^{-2}\sdv\sqrt{\tau N/\kappa}.
\end{align}

	Now, by Lemma~\ref{lem:coarserad}, the Rademacher complexity of the function class $$\mathcal{F}_1:=\left\{ R+\widebar{X}M\widebar{Y}^\top \big| \|M\|_{*}\leq    70 C aB^2\sigma_0^{-2}\sdv \sqrt{\tau N/\kappa}\right\}=\left\{ R+Z\big| \|Z\|_{*}\leq   \ 70 C aB^2\sigma_0^{-2}\sdv \sqrt{\tau N/\kappa}\right\}$$ is bounded as 
	\begin{align}
	\label{eq:firstiteration}
	\rad( \mathcal{F}_1)\leq \frac{1}{\sqrt{N}}\mu\sqrt{\frac{ab}{mn}}  70 C aB^2\sigma_0^{-2}\sdv  \sqrt{\tau N/\kappa}=\mu\sqrt{\frac{ab}{\kappa mn}}  70 C aB^2\sigma_0^{-2}\sdv  \sqrt{\tau }.
	\end{align}

	Thus, by proposition~\ref{prop:bousquet},  with probability $\geq 1-\delta_1$ we have that for any $Z\in \mathcal{F}_1$ (and for any $\alpha>0$)
	\begin{align}
	\label{eq:firstrademach}
	&\mathbb{E}_{(i,j)\sim \mathcal{U}} (\loss(Z_{(i,j)},[R+\zeta]_{(i,j)})) -\frac{1}{N}\sum_{(i,j)\in\Omega} \loss(Z_{(i,j)},[R+\zeta]_{(i,j)})  \\
	&\leq 2(1+\alpha)\lip\mu\sqrt{\frac{ab}{\kappa mn}}  70 C aB^2\sigma_0^{-2}\sdv  \sqrt{\tau }+ \lip \sqrt{\frac{1}{mn}} 70 C a B^2\sigma_0^{-2}\sdv  \sqrt{\tau N/\kappa }\sqrt{2\frac{\log(1/\delta_1)}{N}}\\
	&+\losb[1/3+1/\alpha]\frac{\log(1/\delta_1)}{N},
	\end{align}
where we have used the bound on the Lipschitz constant, Lemma~\ref{lem:coarserad} and fact that the "variance" $r$ is bounded as
\begin{align}
r:=\frac{1}{mn}\sum_{(i,j)\in [m]\times [n]} \loss^2(Z_{(i,j)},[R+\zeta]_{(i,j)})))\leq \lip^2 \frac{1}{mn}\|Z-R\|^2_{\Fr}\leq \frac{1}{mn}\|Z-R\|^2_{*},
\end{align}
together with another use of the result from Lemma~\ref{lem:thekey}.

	Now, by equation~\eqref{eq:fromabove}, $\widehat{R}\in\mathcal{F}_1$. Hence, (w.p. $\geq 1-\delta_1-\delta_2-\delta_0-5q\delta$) we can apply equation~\eqref{eq:firstrademach} to $\widehat{R}$. Furthermore, since obviously $R\in\mathcal{F}_1$ we certainly have $$ \frac{1}{N}\sum_{(i,j)\in\Omega}  \loss(\widehat{R}_{(i,j)},[R+\zeta]_{(i,j)})\leq \frac{1}{N} \sum_{(i,j)\in\Omega} \loss(R_{i,j},[R+\zeta]_{(i,j)})\leq \lip B\sdv .$$

	Hence we can write (w.p. $\geq 1-\delta_1-\delta_2-\delta_0-5q\delta$), taking $\alpha=1$,

	\begin{align}
\mathbb{E}_{(i,j)\sim \mathcal{U}} (\loss(\widehat{R}_{(i,j)},[R+\zeta]_{(i,j)})) & \leq  4\lip\mu\sqrt{\frac{ab}{\kappa mn}}  70 C aB^2\sigma_0^{-2}\sdv  \sqrt{\tau }+ \lip \sqrt{\frac{1}{\kappa mn}} 70 C aB^2\sigma_0^{-2}\sdv  \sqrt{\tau  }\sqrt{2\log(1/\delta_1)}\\
	&\quad \quad \quad \quad \quad \quad \quad \quad \quad \quad \quad \quad \quad \quad \quad \quad \quad \quad \quad \quad \quad \quad \quad \quad \quad \quad  +\losb\frac{4}{3}\frac{\log(1/\delta_1)}{N}\\
	&=500C\lip a^{3/2}\sqrt{b} \mu \sigma_0^{-2} \sdv \sqrt{\tau}\log(2N/\delta_0) \sqrt{\frac{1}{\kappa mn}} +\losb\frac{4\log(1/\delta_1)}{3N}\label{eq:withmn}
	\end{align}
	where at the last lines we have simply plugged in the definition of $B=\sqrt{2\log(2N/\delta_0)}$.

Now, note that by the definition of $\kappa=\max(K,1)$, we have 
\begin{align}
\kappa  \geq 2\frac{N}{2mn\log(\frac{N}{2\delta_2})},
\end{align}
and hence 
\begin{align}
\kappa mn \geq \frac{N}{\log(\frac{N}{2\delta_2})}.
\end{align}
Plugging this back into equation~\eqref{eq:makingprogressy}, we have 
\begin{align}
&\mathbb{E}_{(i,j)\sim \mathcal{U}} (\loss(\widehat{R}_{(i,j)},[R+\zeta]_{(i,j)})) \leq 500C\lip a^{3/2}\sqrt{b} \mu \sigma_0^{-2} \sdv \sqrt{\tau}\log\left(\frac{2N}{\delta_0}\right) \sqrt{\frac{\log(\frac{N}{2\delta_2})}{N}} +\losb\frac{4\log(1/\delta_1)}{3N},\label{eq:withmn}
	\end{align}

where   \begin{align}
	\tau&= \frac{N}{mn}+\frac{8}{3}\log\left(\frac{2mn}{\delta_2}\right) \sqrt{\frac{N}{mn}}.
	\end{align}

	Now, assume first that $N\leq  2mn\log(\frac{N}{2\delta_2})$. In this case, we can write 
	
	\begin{align}
	\tau&= \frac{N}{mn}+\frac{8}{3}\log\left(\frac{2mn}{\delta_2}\right) \sqrt{\frac{N}{mn}}\\
	&\leq 2\log(\frac{N}{2\delta_2})+\frac{8}{3}\log\left(\frac{2mn}{\delta_2}\right) \sqrt{2\log(\frac{N}{2\delta_2})}\\
	&:=\theta.
	\end{align}
	
	On the other hand, if $N\geq   2mn\log(\frac{N}{2\delta_2})$, the $K\geq 1$ and (on our event with probability $\geq 1-\delta_2$), we have that each entry is sampled at least once. In this case, using the notation of Lemma~\ref{lem:thekey}, we have that 
	\begin{align}
	   \|\widehat{R}-R\|_{*}&=\|H\|_*\leq 6C \sqrt{a} B \sdv \sqrt{\frac{N}{\kappa}}\\
	   &\leq 70 C aB^2\sigma_0^{-2}\sdv\sqrt{ N/\kappa}.
	\end{align}
	Note this is the same expression as~\eqref{eq:fromabove} without the $\tau$. Hence, by the same calculation as above, we have (w.p. $\geq 1-\delta_1-\delta_2-\delta_0-5q\delta$): 
	
		\begin{align}
	\label{eq:makingprogressy1}
&\mathbb{E}_{(i,j)\sim \mathcal{U}} (\loss(\widehat{R}_{(i,j)},[R+\zeta]_{(i,j)}))  \leq 
	500C\lip a^{3/2}\sqrt{b} \mu \sigma_0^{-2} \sdv \log\left(\frac{2N}{\delta_0}\right) \sqrt{\frac{\log(\frac{N}{2\delta_2})}{N}} +\losb\frac{4\log(1/\delta_1)}{3N}.
	\end{align}
	This is the same expression as equation~\eqref{eq:withmn} without the $\tau$.

	From this and the fact that $\theta\geq 1$ it follows that equation~\eqref{eq:thefinal} also holds in this case. This concludes the proof.

\end{proof}

\section{Result with the absolute loss}
As an almost immediate consequence of Theorem~\ref{thm:mainwithconditioningnumber} we have the following corollary which gives a rate of $\frac{}{}$

\begin{corollary}

	\label{cor:mainwithabsoluteloss}

	Assume that condition~\eqref{eq:lambdacond} on $\lambda$ holds. 
	For any $\delta_0,\delta_1,\delta_2,\delta>0$, with probability $$\geq 1-\delta_1-\delta_0-\delta_2-\delta\left[1+5\log\left[e^6 \sigma_0^{-8} a \log\left(\frac{mn}{\delta}\right) \right]\right]$$ as long as $$N\geq \log\left[e^6 \sigma_0^{-8} a \log\left(\frac{mn}{\delta}\right) \right]\frac{128}{3}\mu\mu_1r(a+b)   \log\left(\frac{2mn}{\delta}\right)$$  we have
	
		\begin{align}
	\label{eq:thefinal}
&\mathbb{E}_{(i,j)\sim \mathcal{U}} \left|\widehat{R}_{(i,j)}-[R+\zeta]_{(i,j)}\right| \leq 
	700 C a^{3/2}\sqrt{b} \mu \sigma_0^{-2} \sdv \Theta \sqrt{\frac{1}{N}},
	\end{align}
	where 
	
	\begin{align}
	    \Theta&:=
	    2 \log\left(\frac{Nmn}{\delta_2^2}\right) \log\left(\frac{2N}{\delta_0}\right) \log\left(\frac{1}{\delta_1}\right).
	\end{align}

\end{corollary}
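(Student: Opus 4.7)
The plan is to reduce to Theorem~\ref{thm:mainwithconditioningnumber} by truncating the absolute loss, and to pay for the truncation tail using the a priori nuclear norm bound on $\widehat{R}-R$ that comes out of Lemma~\ref{lem:thekey}. Fix a threshold $T>0$ and set
\[
  \ell_T(x,y):=\min(|x-y|,T),
\]
which is $1$-Lipschitz and bounded by $T$. Applying Theorem~\ref{thm:mainwithconditioningnumber} with $\lip=1$ and $\losb=T$ yields, on exactly the same high-probability event, a bound of the form
\[
  \mathbb{E}_{(i,j)\sim\mathcal{U}}\,\ell_T(\widehat{R}_{(i,j)},[R+\zeta]_{(i,j)})\;\le\; A_1\,\frac{\sdv}{\sqrt{N}}\;+\;A_2\,\frac{T}{N},
\]
where $A_1$ absorbs the polylogarithmic, $\mu,\sigma_0,a,b$ factors in front of $\sdv/\sqrt{N}$ in Theorem~\ref{thm:mainwithconditioningnumber}, and $A_2=\tfrac{4}{3}\log(1/\delta_1)$.

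Next I would handle the gap between the absolute loss and its truncation via the identity $|x|-\min(|x|,T)=(|x|-T)_+$, combined with the Markov-type estimate $\mathbb{E}(|X|-T)_+\le\mathbb{E}|X|^2/T$ (integrating the tail). Writing $X:=\widehat{R}_{(i,j)}-[R+\zeta]_{(i,j)}$, using $\mathbb{E}\zeta=0$ to kill the cross term and $\Var(\zeta)\le\sdv^2$, and bounding $\|\widehat{R}-R\|_{\Fr}\le\|\widehat{R}-R\|_*$, I get
\[
  \mathbb{E}|X|^2\;\le\;\frac{\|\widehat{R}-R\|_*^2}{mn}+\sdv^2.
\]
The key input is that the very same proof of Lemma~\ref{lem:thekey} that underlies Theorem~\ref{thm:mainwithconditioningnumber} already produces the a priori bound $\|\widehat{R}-R\|_{*}=O(a\sigma_0^{-2}\sdv\sqrt{N})$ up to polylogarithmic factors, so that $\mathbb{E}|X|^2=O(\sdv^2(1+Na^2\sigma_0^{-4}/(mn)))$ modulo logs.

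Combining the two pieces and choosing $T=\sdv\sqrt{N}$ balances things: the truncation penalty becomes $A_2 T/N=O(\sdv\log(1/\delta_1)/\sqrt{N})$, while the tail becomes $\mathbb{E}|X|^2/T=O(\sdv/\sqrt{N})+O(\sdv a^2\sigma_0^{-4}\sqrt{N}/(mn))$. The last term is dominated by the leading $a^{3/2}\sqrt{b}\mu\sigma_0^{-2}\sdv/\sqrt{N}$ in the interesting regime $N\lesssim mn\cdot ab\mu^2$, and otherwise is swallowed by the main term. Summing the three contributions and collecting $\log(1/\delta_1)$, $\log(2N/\delta_0)$ and $\log(2mn/\delta_2)$ into the single logarithmic factor $\Theta$ of the statement produces the claimed rate.

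The main obstacle, and the reason the corollary is a real use of Theorem~\ref{thm:mainwithconditioningnumber} rather than of an off-the-shelf Rademacher bound, is that this argument depends crucially on the $\losb/N$ (rather than $\losb\sqrt{1/N}$) decay of the bounded-loss generalization term. With a standard Bartlett--Mendelson-style bound, taking $T=\sdv\sqrt{N}$ would yield a contribution of order $\sdv\cdot N^{1/4}/\sqrt{N}$ and destroy the $1/\sqrt{N}$ rate; the Talagrand-type inequality (Proposition~\ref{prop:bousquet}) used to obtain the $\losb\log(1/\delta_1)/N$ term in Theorem~\ref{thm:mainwithconditioningnumber} is exactly what makes the truncation argument go through, which is why the paper highlights this as a non-trivial by-product of its refined analysis.
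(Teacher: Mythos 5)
You correctly identify the decisive mechanism: it is the $\losb\cdot\frac{4\log(1/\delta_1)}{3N}$ term from Proposition~\ref{prop:bousquet} (rather than a $\losb/\sqrt{N}$ term from a Bartlett--Mendelson contraction) that makes it possible to truncate at a level growing with $N$ while still retaining an $O(1/\sqrt N)$ rate. But your route is genuinely different from the paper's, and the difference introduces a gap in the balancing.

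The paper does not pay a separate Markov tail at all. It takes the truncation level to be exactly the a priori $\ell_\infty$ bound $T_0 := 70CaB^2\sigma_0^{-2}\sdv\sqrt{\theta N/\kappa}$ supplied by Lemma~\ref{lem:thekey} through $\|\widehat R - R\|_\infty \le \|\widehat R - R\|_*$. At that truncation level the truncated loss and the absolute loss agree on the event of interest, so the corollary follows from a single application of Theorem~\ref{thm:mainwithconditioningnumber} with $\lip=1$, $\losb=T_0$; the resulting $\losb$-term $T_0\cdot\frac{4\log(1/\delta_1)}{3N}$ is of order $a\sigma_0^{-2}\sdv\cdot\polylog/\sqrt{N}$, which is absorbed into the displayed $a^{3/2}\sqrt b\,\mu\,\sigma_0^{-2}\sdv\,\Theta/\sqrt N$.

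You instead truncate at $T=\sdv\sqrt{N}$, which is strictly smaller than $T_0$, and therefore must add an explicit $\mathbb E|X|^2/T$ tail. With $\mathbb E|X|^2\lesssim\|\widehat R-R\|_*^2/(mn)+\sdv^2$ and $\|\widehat R-R\|_*\le T_0$, the tail is of order $a^2\sigma_0^{-4}\sdv\cdot\polylog/\sqrt N$ (after absorbing $N\theta/(\kappa mn)$ into logs). The ratio of this term to the claimed bound $a^{3/2}\sqrt b\,\mu\,\sigma_0^{-2}\sdv\cdot\polylog/\sqrt N$ is $a^{1/2}\sigma_0^{-2}/(\sqrt b\,\mu)$, which exceeds $1$ whenever the side information is ill-conditioned (small $\sigma_0$) or $a\gtrsim b$; so the tail is not always dominated. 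In addition, the regime condition $N\lesssim mn\cdot ab\mu^2$ you quote does not come out of the computation (the actual requirement is of the form $N\lesssim mn\,\sqrt{b/a}\,\mu\sigma_0^2$), and the remark that for larger $N$ the tail is ``swallowed by the main term'' goes the wrong way: increasing $N$ does not shrink the tail relative to the main term. The cleanest repair is to take $T$ of order $T_0$, at which point the tail vanishes and the Markov step is superfluous --- which is exactly what the paper's proof does.
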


\begin{proof}

First note that by the same arguments as in the proof of Theorem~\ref{thm:mainwithconditioningnumber} we have 
\begin{align}
\|\widehat{R}-R\|_*\leq 70 C aB^2\sigma_0^{-2}\sdv\sqrt{\theta N/\kappa}.
\end{align}

In particular, 
\begin{align}
\label{eq:canreduce}
\|\widehat{R}-R\|_\infty \leq \|\widehat{R}-R\|_*\leq 70 C aB^2\sigma_0^{-2}\sdv\sqrt{\theta N/\kappa}.
\end{align}

Based on this we define the loss function $\ell$ by \begin{align}
    \ell(x,y):=\min(|x-y|,70 C aB^2\sigma_0^{-2}\sdv\sqrt{\theta N/\kappa}).
\end{align}

Applying Theorem~\ref{thm:mainwithconditioningnumber} together with equation~\eqref{eq:canreduce} we obtain: 

\begin{align}
&\mathbb{E}_{(i,j)\sim \mathcal{U}} \left|\widehat{R}_{(i,j)}-[R+\zeta]_{(i,j)}\right|= \mathbb{E}\ell\left(\widehat{R}_{(i,j)},[R+\zeta]_{(i,j)}\right)\\
&\leq 500C\lip a^{3/2}\sqrt{b} \mu \sigma_0^{-2} \sdv \sqrt{\theta}\log\left(\frac{2N}{\delta_0}\right) \sqrt{\frac{\log(\frac{N}{2\delta_2})}{N}} +\losb\frac{4\log(1/\delta_1)}{3N}\\
&\leq 500C a^{3/2}\sqrt{b} \mu \sigma_0^{-2} \sdv \sqrt{\theta}\log\left(\frac{2N}{\delta_0}\right) \sqrt{\frac{\log(\frac{N}{2\delta_2})}{N}} +[70 C aB^2\sigma_0^{-2}\sdv\sqrt{\theta N/\kappa}]\frac{4\log(1/\delta_1)}{3N}\\
&\leq 700 C a^{3/2}\sqrt{b} \mu \sigma_0^{-2} \sdv \sqrt{\theta} \log\left(\frac{2N}{\delta_0}\right) \log(1/\delta_1)\sqrt{\log(\frac{N}{2\delta_2})} \sqrt{\frac{1}{N}}\\
&\leq 700 C a^{3/2}\sqrt{b} \mu \sigma_0^{-2} \sdv \Theta \sqrt{\frac{1}{N}},
\end{align}
where we have replaced the value of $B=\sqrt{2\log(2N/\delta_0)}$. This concludes the proof.

\end{proof}

\section{Proofs of results in $O$ notation}

\begin{proof}[Proof of Theorem~\ref{thm:exactOnotation}]
	This follows immediately from Theorem~\ref{thm:exact} together with the computational Lemma~\ref{lem:changedelta} below. Indeed, recall that $\mu_1\leq \mu^4 r$, then to tackle the different $\delta,\Delta$s we can apply  Lemma~\ref{lem:changedelta} to see that we can set 
	\begin{align}
	\delta = \min\left(\frac{\frac{\Delta}{6K_1}}{\log\left(\frac{3K_1K_2}{\Delta}\right)},   \frac{\Delta}{3[1+5\log(e^6\sigma_0^{-8} a )]}       \right)
	\end{align}
	with 
	\begin{align}
	K_2=mn\\
	K_1=5,
	\end{align}
	as this will ensure that $\delta\left[1+5\log\left[e^6 \sigma_0^8 a \log\left(\frac{mn}{\delta}\right) \right]\right]\leq \frac{2}{3}\Delta$.
	
\end{proof}

\begin{proof}[Proof of Theorem~\ref{thm:approxOnotation}]
	Follows by the same arguments as the proof of Theorem~\ref{thm:exactOnotation} setting also $\delta_0=\delta_1=\delta_2=\Delta/9$.
\end{proof}

\begin{proof}[Proof of Theorem 3]
The proof follows directly from Lemma~\ref{lem:changedelta} and corollary~\ref{cor:mainwithabsoluteloss} in exactly the same way as Theorem~\ref{thm:approxOnotation} above.

\end{proof}
\subsection{Change of variables for $\delta$}

\begin{lemma}
	\label{lem:changedelta}
	Let $K_1,K_2>1$ and let $\Delta>0$. As long as 
	\begin{align}
	\delta\leq \frac{\frac{\Delta}{2K_1}}{\log\left(\frac{K_1K_2}{\Delta}\right)},
	\end{align}
	then we have 
	\begin{align}
	K_1\delta\log\left(\frac{K_2}{\delta}\right)\leq \Delta.
	\end{align}
\end{lemma}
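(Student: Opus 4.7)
The plan is to evaluate the left-hand side at the extreme value of $\delta$ and reduce the claim to an elementary one-variable inequality between logarithms. Write $\delta_* := \Delta/\bigl(2K_1\log(K_1K_2/\Delta)\bigr)$ for the upper bound on $\delta$ appearing in the hypothesis, and $L := \log(K_1K_2/\Delta)$ for the iterated log. The inequality to prove, $K_1\delta\log(K_2/\delta)\le \Delta$, is equivalent to
\begin{align*}
f(\delta) \;:=\; \delta\log(K_2/\delta) \;\le\; \frac{\Delta}{K_1}.
\end{align*}

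First I would dispose of the degenerate regime. If $K_1K_2\le e\Delta$ (so that $L\le 1$), the global maximum of $f$ is $K_2/e$, attained at $\delta=K_2/e$, which gives $K_1 f(\delta)\le K_1K_2/e\le \Delta$ outright, so there is nothing to prove. Otherwise $L>1$, and then $\delta_*\le \Delta/(2K_1)\le K_1K_2/(2eK_1)=K_2/(2e)<K_2/e$, which is precisely the region where $f$ is increasing (its derivative is $\log(K_2/\delta)-1$). Hence any admissible $\delta\le\delta_*$ satisfies $f(\delta)\le f(\delta_*)$, and it suffices to verify the bound at $\delta=\delta_*$.

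Substituting $\delta_*$ and cancelling the factor $\Delta/(2K_1)$, the desired bound $f(\delta_*)\le \Delta/K_1$ becomes
\begin{align*}
\log(K_2/\delta_*) \;\le\; 2L.
\end{align*}
Expanding the left side directly from the definition of $\delta_*$ gives $\log(K_2/\delta_*)=\log 2 + \log(K_1K_2/\Delta)+\log L = \log 2 + L + \log L$. So the task reduces to showing $\log L + \log 2\le L$, i.e.\ $2L\le e^L$. This is an elementary calculus fact on $L>0$: the function $e^L-2L$ has derivative $e^L-2$ vanishing at $L=\log 2$, and its minimum value there is $2-2\log 2>0$.

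The only thing worth watching is the monotonicity step, since $f$ is not monotone on all of $(0,\infty)$; I would isolate the trivial regime $L\le 1$ first so that in the remaining case $\delta_*$ sits safely inside the increasing branch. Everything else is bookkeeping of logarithms.
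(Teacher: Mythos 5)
Your proof is correct. It is the same idea as the paper's (reduce to the boundary value of $\delta$ and turn the claim into an elementary inequality between logarithms), but you organize it a bit differently and, in fact, more carefully. The paper rewrites the goal as $\frac{K_1K_2}{\Delta}\le \frac{K_2/\delta}{\log(K_2/\delta)}$, then invokes an auxiliary Lemma~\ref{lem:justcomputeit} ($x=2y\log y \implies x/\log x\ge y$) to conclude; in doing so it implicitly relies on $x\mapsto x/\log x$ being increasing in the relevant range, which is left unstated. You instead work directly with $f(\delta)=\delta\log(K_2/\delta)$, explicitly split off the trivial regime $L\le 1$ where the global maximum $K_2/e$ already suffices, verify that the admissible $\delta$ then lies on the increasing branch $\delta<K_2/e$, and reduce to the clean calculus fact $2L\le e^L$. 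That monotonicity check is exactly the detail the paper glosses over, so your version is a touch more rigorous; the paper's version is a touch shorter because it outsources the log inequality to a separate lemma. Either is a valid proof.
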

\begin{proof}

	Observe that the equation $$\Delta\geq  K_1\delta\log\left(\frac{K_2}{\delta}\right)$$ is equivalent to 
	$$\frac{\Delta}{K_1K_2}\geq \frac{\delta}{K_2}\log\left(\frac{K_2}{\delta}\right),$$
	which is in turn equivalent to 
	$$\frac{K_1K_2}{\Delta}\leq \frac{\frac{K_2}{\delta}}{\log\left(\frac{K_2}{\delta}\right)}.$$

	By Lemma~\ref{lem:justcomputeit} this will certainly be satisfied as long as 
	\begin{align}
	\frac{K_2}{\delta}\geq 2\frac{K_1K_2}{\Delta} \log\left(\frac{K_1K_2}{\Delta}\right),
	\end{align}

\noindent	which can be equivalently rewritten
	
	\begin{align}
	\delta\leq \frac{\frac{\Delta}{2K_1}}{\log\left(\frac{K_1K_2}{\Delta}\right)},
	\end{align}

\noindent	as expected.

\end{proof}

\begin{lemma}
	\label{lem:justcomputeit}
	If $x:= 2y\log(y)$ then we have \begin{align}
	\frac{x}{\log(x)}\geq y.
	\end{align}
\end{lemma}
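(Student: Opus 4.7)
The plan is to reduce the target inequality $x/\log(x) \geq y$ to the elementary inequality $y \geq 2\log(y)$ by substituting $x = 2y\log(y)$ and manipulating logarithms. I will tacitly assume $y$ is large enough that $x = 2y\log(y) > 1$, so that $\log(x)>0$ and the inequality $x/\log(x)\ge y$ is genuinely nontrivial (otherwise the left-hand side is negative and the claim fails, so this implicit assumption is necessary). In the application to Lemma~\ref{lem:changedelta} this is automatic, since $y = K_1K_2/\Delta$ is large when the high-probability bound is meaningful.

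First I would cross-multiply (using $\log(x)>0$) to rewrite the goal as $x \geq y\log(x)$. Substituting $x = 2y\log(y)$ gives the equivalent inequality
\begin{equation*}
2y\log(y) \;\geq\; y\log\bigl(2y\log(y)\bigr).
\end{equation*}
Dividing by $y>0$ and expanding the logarithm on the right as $\log(2)+\log(y)+\log\log(y)$, this simplifies to $\log(y)-\log\log(y)\geq \log(2)$, which after exponentiating is precisely $y/\log(y) \geq 2$, i.e.\ $y \geq 2\log(y)$.

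The remaining step is the elementary inequality $y \geq 2\log(y)$, valid for all $y>0$. This follows by considering $f(y) := y - 2\log(y)$, whose derivative $f'(y) = 1 - 2/y$ vanishes only at $y=2$, giving a global minimum $f(2) = 2 - 2\log(2) = 2(1-\log 2) > 0$. Hence $f(y) > 0$ everywhere, completing the argument.

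There is no serious obstacle here: the proof is a direct computation. The only point requiring a moment of care is the implicit regime assumption $x>1$ that makes the original inequality meaningful; since this is automatic in every use of the lemma within the paper, I would simply mention it in passing rather than restate the lemma.
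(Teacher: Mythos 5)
Your proof is correct and follows essentially the same route as the paper's: substitute $x=2y\log(y)$, expand $\log(x)=\log(y)+\log(2\log(y))$, and reduce to the elementary inequality $y\geq 2\log(y)$ (which the paper states as $\log(y)\leq y/2$ without proof, and which you verify by a derivative computation). The additional remark about the implicit regime $x>1$ is a fair observation, though, as you note, it is automatic in the lemma's only use.
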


\begin{proof}
	We have 
	\begin{align}
	\frac{x}{\log(x)}&=\frac{2y\log(y)}{\log(y)+\log(2\log(y))}\\
	&\geq \frac{2y\log(y)}{\log(y)+\log(y)}\\
	&=y,
	\end{align}
	where at the second line we have used the inequality $\log(y)\leq y/2$.
\end{proof}

\section{Concentration results for exact recovery}

In this section we prove that the conditions of Lemma~\ref{lem:thekey} hold with high probability as long as the number of samples $N$ is large enough. The proofs here are very similar to the analogues in~\cite{IMCtheory1} (in some cases we quote the relevant result directly whenever this can be done without the need to modify the proof). However, some slight modifications are needed to make the results more general in the arbitrary tolerance thresholds $\delta_3$ etc., and also to remove the condition $N=|\Omega|\leq |\Omega_1|$ from the reference in question as we are interested in compact results that hold for any value of $N$ and in particular in the $N\rightarrow \infty$ limit.

\begin{lemma}[Adaptation of Lemma 5 in~\cite{IMCtheory1}]
	
	\label{lem:delta3}
	For any $\delta_3>0$ we have with probability $\geq 1-\delta_3$: 
	\begin{align}
	\left\| P_T-\frac{mn}{N}P_TP_{\Omega}P_T    \right\|\leq \min\left(        \sqrt{\frac{8}{3} \log\left(\frac{m+n}{\delta_3}\right)\frac{r\mu^2(a+b)}{N}   } ,   \>\> \frac{8}{3}\frac{r\mu^2(a+b)}{N}   \log\left(\frac{m+n}{\delta_3}\right)  \right).
	\end{align}

	In particular, as long as 
	\begin{align}
	N\geq T_3:=\frac{32}{3}r\mu^2(a+b)   \log\left(\frac{m+n}{\delta_3}\right),
	\end{align}
	we have  for any $Z\in\R^{m\times n}$: 
	\begin{align}
	\frac{mn}{N}\left\langle Z, P_TP_{\Omega}P_T(Z)\right\rangle \geq \frac{\|P_T(Z)\|_{\Fr}^2}{2}.
	\end{align}
	
\end{lemma}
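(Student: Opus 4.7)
The plan is to express $P_T - \frac{mn}{N}P_T P_\Omega P_T$ as a sum of $N$ i.i.d., centered, self-adjoint operators on $\mathbb{R}^{m\times n}$, and then apply the matrix Bernstein inequality (Lemma~\ref{lem:bernsteinwithamax}) on the vector space of linear operators on $\mathbb{R}^{m\times n}$ (of dimension $mn$). The spectral norm we wish to control is the operator norm of this random sum.

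First I would note that $P_T P_\Omega P_T = \sum_{o=1}^N P_T P_{\{\xi^o\}} P_T$ with $\xi^o$ i.i.d.\ uniform on $[m]\times[n]$. For each such sample, $P_T P_{\{(i,j)\}} P_T$ is the rank-one self-adjoint operator $Z \mapsto \langle Z, P_T(e_i e_j^\top)\rangle P_T(e_i e_j^\top)$, whose operator norm is $\|P_T(e_i e_j^\top)\|_{\Fr}^2$. A direct computation (using $P_T(Z)=P_E Z P_Y + P_X Z P_F - P_X Z P_Y$ and the orthogonal decomposition $P_X = P_E + (P_X - P_E)$, $P_Y = P_F + (P_Y - P_F)$) yields
\begin{align*}
\|P_T(e_i e_j^\top)\|_{\Fr}^2 = \|P_E e_i\|^2\|P_F e_j\|^2 + \|(P_X-P_E)e_i\|^2 \|(P_Y-P_F)e_j\|^2,
\end{align*}
so the incoherence hypotheses on $\bar X$, $\bar Y$, $A$, $B$ give a uniform bound of the form $\|P_T(e_i e_j^\top)\|_{\Fr}^2 \leq \frac{r\mu^2 (a+b)}{mn}$.

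Next, let $Y_o := \frac{1}{N}P_T - \frac{mn}{N}P_T P_{\{\xi^o\}}P_T$, so that $\mathbb{E}[Y_o]=0$ and the target operator is $\sum_{o=1}^N Y_o$. Using the bound above, $\|Y_o\| \leq \frac{mn}{N}\|P_T(e_ie_j^\top)\|_{\Fr}^2 + \frac{1}{N} \leq \frac{r\mu^2(a+b)+1}{N}$. For the variance, since $(P_T P_{\{(i,j)\}}P_T)^2 = \|P_T(e_ie_j^\top)\|_{\Fr}^2 \,P_T P_{\{(i,j)\}}P_T$ and $\sum_{(i,j)} P_T P_{\{(i,j)\}}P_T = P_T$, one gets
\begin{align*}
\sum_o \mathbb{E}[Y_o^2] \;\preceq\; \frac{(mn)^2}{N^2}\sum_o \mathbb{E}\bigl[(P_T P_{\{\xi^o\}}P_T)^2\bigr] \;\preceq\; \frac{r\mu^2(a+b)}{N}\, P_T,
\end{align*}
so $\|\sum_o \mathbb{E}[Y_o^2]\| \leq \frac{r\mu^2(a+b)}{N}$. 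Plugging these bounds into Lemma~\ref{lem:bernsteinwithamax} (applied in the $mn$-dimensional operator space, with the $\log((m+n)/\delta_3)$ factor absorbing the dimensional constant in the usual way) yields exactly the stated inequality.

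For the second statement, if $N\geq T_3 = \frac{32}{3}r\mu^2(a+b)\log\bigl(\tfrac{m+n}{\delta_3}\bigr)$, then the first (square-root) branch of the bound is $\leq \tfrac{1}{2}$, and the second (linear) branch is even smaller, so $\|P_T - \tfrac{mn}{N}P_T P_\Omega P_T\|\leq \tfrac12$. Substituting $W:=P_T(Z)$ into this spectral-norm bound and using $P_T W = W$ gives $\langle W, W\rangle - \frac{mn}{N}\langle W, P_T P_\Omega P_T W\rangle \leq \tfrac12 \|W\|_{\Fr}^2$, which rearranges to the desired lower bound $\frac{mn}{N}\langle Z, P_T P_\Omega P_T Z\rangle \geq \tfrac12 \|P_T(Z)\|_{\Fr}^2$.

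The main obstacle is verifying the per-entry bound $\|P_T(e_i e_j^\top)\|_{\Fr}^2 \leq \frac{r\mu^2(a+b)}{mn}$ under the non-orthogonal setting without invoking the stronger homogeneity of \cite{IMCtheory1}: the orthogonal decomposition above handles the sum-structure of $P_T$, but care is needed to ensure that the incoherence assumptions on $\bar X, \bar Y, A, B$ (rather than on $E,F$ directly) suffice; this is where the factor $\mu^2$ (rather than $\mu$) enters, via two successive applications of the coherence bounds on the side information and on the SVD factors of $M_*$.
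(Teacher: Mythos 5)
Your high-level structure is the same as the paper's: write $P_T-\frac{mn}{N}P_TP_\Omega P_T$ as a sum of $N$ i.i.d.\ centered rank-one operators, bound the deterministic norm $M$ and the variance proxy $\sum\rho_k^2$, feed these into Lemma~\ref{lem:bernsteinwithamax}, and then read off the positive-definiteness statement from the spectral bound. The paper itself simply quotes the values $M=\sum\rho_k^2=\frac{r\mu^2(a+b)}{N}$ ``as computed in \cite{IMCtheory1}'' and moves on, so you are in effect filling in the computation the paper delegates. Your last paragraph (second statement of the lemma) is correct and matches the paper's derivation.

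There are two concrete issues in the filled-in details, however. First, the exact-square decomposition you state, $\|P_T(e_ie_j^\top)\|_{\Fr}^2=\|P_Ee_i\|^2\|P_Fe_j\|^2+\|(P_X-P_E)e_i\|^2\|(P_Y-P_F)e_j\|^2$, follows from the supplementary's displayed formula $P_T(Z)=P_EZP_Y+P_XZP_F-P_XZP_Y$, but that operator is not idempotent ($P_T^2(Z)=P_EZP_F+(P_X-P_E)Z(P_Y-P_F)\ne P_T(Z)$), so it cannot be the tangent-space projection; the correct operator is $P_T(Z)=P_EZP_Y+P_XZP_F-P_EZP_F$, for which the decomposition is $\|P_Ee_i\|^2\|P_Ye_j\|^2+\|(P_X-P_E)e_i\|^2\|P_Fe_j\|^2$. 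This matters because your later step $\sum_{(i,j)}P_TP_{\{(i,j)\}}P_T=P_T$ needs $P_T$ to be a projection, so as written your variance computation and your Frobenius identity are internally inconsistent. Second, the target bound $\|P_T(e_ie_j^\top)\|_{\Fr}^2\le\frac{r\mu^2(a+b)}{mn}$ cannot be extracted from your decomposition using only $\|\bar X_{i,\cdot}\|^2\le\mu a/m$, $\|\bar Y_{j,\cdot}\|^2\le\mu b/n$ and entrywise control of $A,B$: the cross-term structure of the corrected $P_T$ gives $\le\frac{\mu^2ab}{mn}$-type terms, and to recover the $r(a+b)$ form one needs coherence of the rank-$r$ factors $E=\bar XA$ and $F=\bar YB$ themselves, i.e.\ $\|P_Ee_i\|^2\le\mu r/m$ and $\|P_Fe_j\|^2\le\mu r/n$, which is what \cite{IMCtheory1} assumes directly but does not follow from the assumptions you invoke. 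You flag this as the ``main obstacle'' but leave it unresolved, so the proof is incomplete at precisely the step where the stated constants come from.
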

\begin{proof}
	
	As computed in~\cite{IMCtheory1} we have the following values for the $\sum_{k=1}^N\rho_{k}^2$ and $M$ from Lemma~\ref{lem:bernsteinwithamax}: 
	\begin{align}
	\label{eq:Mrhofromxu1}
	M&:=\frac{r\mu^2(a+b)}{N},\\
	\sum_{k=1}^N\rho_{k}^2&=\frac{r\mu^2(a+b)}{N}.
	\end{align}

	Plugging these values into Lemma~\ref{lem:bernsteinwithamax} we immediately obtain: 
	\begin{align}
	\label{eq:directly1}
	\left\| P_T-\frac{mn}{N}P_TP_{\Omega}P_T    \right\|\leq \min\left(        \sqrt{\frac{8}{3} \log\left(\frac{m+n}{\delta_3}\right)\frac{r\mu^2(a+b)}{N}   } ,   \>\> \frac{8}{3}\frac{r\mu^2(a+b)}{N}   \log\left(\frac{m+n}{\delta_3}\right) \right)
	\end{align}
	as expected. 
	
	As for the second part of the theorem, not that if $N\geq T_3:= \frac{32}{3}r\mu^2(a+b)   \log\left(\frac{m+n}{\delta_3}\right)$, then we clearly have 
	\begin{align}
	\frac{8}{3} \log\left(\frac{m+n}{\delta_3}\right)\frac{r\mu^2(a+b)}{N} \leq \frac{1}{4},
	\end{align}
	
	and therefore by equation~\eqref{eq:directly1}, 
	
	\begin{align}
	\left\| P_T-\frac{mn}{N}P_TP_{\Omega}P_T    \right\|&\leq \min\left(        \sqrt{\frac{8}{3} \log\left(\frac{m+n}{\delta_3}\right)\frac{r\mu^2(a+b)}{N}   } ,   \>\> \frac{8}{3}\frac{r\mu^2(a+b)}{N}   \log\left(\frac{m+n}{\delta_3}\right) \right)\leq \frac{1}{2}.
	\end{align}
	This in turn implies that 
	\begin{align}
	\frac{1}{2}\|P_T(Z)\|_{\Fr}^2&\geq 
	\left\langle Z,P_T(Z)-  \frac{mn}{N}P_TP_{\Omega}P_T  (Z)     \right\rangle = 	\left\langle Z,P_T(Z)\right\rangle -\left\langle  Z,  \frac{mn}{N}P_TP_{\Omega}P_T  (Z)     \right\rangle, 
	\end{align}
	from which it follows that 
	
	\begin{align}
	\frac{1}{2}\|P_T(Z)\|_{\Fr}^2\leq  \frac{mn}{N}\left\langle  Z,  P_TP_{\Omega}P_T  (Z)     \right\rangle,
	\end{align}
	as expected. 
	
\end{proof}

\begin{lemma}[Substantially modified version of Lemma 6 in~\cite{IMCtheory1}]
	\label{lem:delta4}
	For any $\delta_4>0$ we have with probability $\geq 1-\delta_4$: 
	\begin{align}
	\left\|P_{T^{\top}}-\frac{mn}{N}P_{T^{\top}}P_{\Omega}P_{T^{\top}}\right\|\leq  \left( \sqrt{\frac{8}{3} \log\left(\frac{m+n}{\delta_4}\right)\frac{\mu^2(ab+r^2)}{N}   } ,   \>\> \frac{8}{3}\frac{\mu^2(ab+r^2)}{N} \log\left(\frac{m+n}{\delta_4}\right)    \right).
	\end{align}

	In particular, as long as 
	\begin{align}
	N\geq T_4:= \frac{32}{3}r\mu^2(a+b)   \log\left(\frac{m+n}{\delta_4}\right)
	\end{align}
	we have

	\begin{align}
	\left\langle Z,\frac{mn}{N}P_{T^{\top}}P_{\Omega}P_{T^{\top}}(Z)\right\rangle \leq     \frac{3}{2}\frac{a}{r} \|P_{T^{\top}}(Z)\|_{\Fr}^2.
	\end{align}

\end{lemma}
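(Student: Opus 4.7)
The plan is to adapt the argument of Lemma~\ref{lem:delta3}, replacing $P_T$ with $P_{T^{\top}}$ throughout. I would write
$$P_{T^{\top}} - \tfrac{mn}{N}P_{T^{\top}} P_\Omega P_{T^{\top}} = \sum_{k=1}^N X_k, \qquad X_k := \tfrac{1}{N}\left(P_{T^{\top}} - \Phi_k\right),$$
where $\Phi_k(Z) := mn\,\langle e_{\xi^k}, P_{T^{\top}}(Z)\rangle\,P_{T^{\top}}(e_{\xi^k})$ with $\xi^k$ uniform on $[m]\times[n]$, so $\E[\Phi_k] = P_{T^{\top}}$ and each $X_k$ is a zero-mean operator to which I can apply the matrix Bernstein inequality (Lemma~\ref{lem:bernsteinwithamax}).

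Next I would compute the moment parameters required by matrix Bernstein. Since $P_{T^{\top}}(Z) = P_{X^{\top}}ZP_{Y^{\top}}$ with $P_{X^{\top}} = P_X - P_E$ and $P_{Y^{\top}} = P_Y - P_F$, one has $\|P_{T^{\top}}(e_{ij})\|_{\Fr}^2 = \|P_{X^{\top}}e_i\|^2\|P_{Y^{\top}}e_j\|^2$. A careful expansion of these norms using the incoherence of $\bar X, \bar Y$ together with the incoherence assumption on the SVD factors $A, B$ of $M_*$ yields $\|P_{T^{\top}}(e_{ij})\|_{\Fr}^2 \le \mu^2(ab+r^2)/(mn)$; the $r^2$ correction captures the cross terms arising from the subtractions $P_X - P_E$ and $P_Y - P_F$. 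Since $\Phi_k$ is rank-one with $\Phi_k^2 = mn\,\|P_{T^{\top}}(e_{\xi^k})\|_{\Fr}^2\,\Phi_k$, this gives $\|X_k\|\le \mu^2(ab+r^2)/N$ and $\E[\Phi_k^2]\preceq \mu^2(ab+r^2)\,P_{T^{\top}}$, hence $\sum_k \rho_k^2 \le \mu^2(ab+r^2)/N$. Plugging $M = \mu^2(ab+r^2)/N$ and $\sum_k\rho_k^2 = \mu^2(ab+r^2)/N$ into Lemma~\ref{lem:bernsteinwithamax} produces the first conclusion; the logarithmic dimension factor is $\log((m+n)/\delta_4)$ rather than $\log(mn/\delta_4)$ because the range of $X_k$ is contained in the subspace $\mathrm{Range}(P_{X^{\top}})\otimes\mathrm{Range}(P_{Y^{\top}})$, allowing the Bernstein bound to be applied within this restricted environment.

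For the quadratic form statement, I would exploit the idempotency of $P_{T^{\top}}$ to write
$$\left\langle Z,\tfrac{mn}{N}P_{T^{\top}}P_{\Omega}P_{T^{\top}}(Z)\right\rangle = \|P_{T^{\top}}(Z)\|_{\Fr}^2 + \left\langle P_{T^{\top}}(Z), \left[\tfrac{mn}{N}P_{T^{\top}}P_{\Omega}P_{T^{\top}} - P_{T^{\top}}\right](Z)\right\rangle,$$
then bound the second term via Cauchy--Schwarz by $\eta\,\|P_{T^{\top}}(Z)\|_{\Fr}^2$, where $\eta$ denotes the operator norm controlled in the first part. Using the second (``large-deviation'') branch of the Bernstein bound together with the hypothesis $N \ge T_4 = \tfrac{32}{3}r\mu^2(a+b)\log((m+n)/\delta_4)$, a routine simplification (using $ab+r^2 \le 2ab$ and $ab/(a+b)\le \min(a,b)$) gives $\eta \le a/(2r)$. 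Since $r \le a$, this implies $1+\eta \le 1 + a/(2r) \le 3a/(2r)$, yielding the claimed $(3/2)(a/r)\|P_{T^{\top}}(Z)\|_{\Fr}^2$ bound.

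The main obstacle I anticipate is the moment computation with the correct $r^2$ correction: the naive bound $\|P_{X^{\top}}e_i\|^2 \le \|P_X e_i\|^2 \le \mu a/m$ gives only $\mu^2 ab/(mn)$, so one must carefully track the effect of the rank-$r$ subtraction $P_X - P_E$ in both the operator-norm and variance computations. This is precisely the step that forces the use of joint incoherence between $X, Y$ and the SVD factors of $M_*$ rather than just the marginal incoherence of the column spaces, and it is the reason the stated bound involves $ab+r^2$ instead of the cleaner $ab$.
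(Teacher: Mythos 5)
Your overall route matches the paper's: decompose the operator as a sum of $N$ independent, zero-mean perturbations, feed moment parameters of order $\mu^2(ab+r^2)/N$ into the matrix Bernstein inequality, and then upgrade the operator-norm bound to a quadratic-form bound using idempotency of $P_{T^{\top}}$, Cauchy--Schwarz, and $r\le a$. Two points in your argument are off, however.

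First, your explanation of the $r^2$ term is backwards. Since $P_E\preceq P_X$, one has $\|(P_X-P_E)e_i\|^2=\|P_Xe_i\|^2-\|P_Ee_i\|^2\le\|P_Xe_i\|^2\le\mu a/m$, so the cross-term from the subtraction only \emph{tightens} the bound. Marginal incoherence of $\bar X,\bar Y$ alone already gives $\|P_{T^{\top}}(e_{ij})\|_{\Fr}^2\le\mu^2 ab/(mn)$, which is \emph{stronger} than the $\mu^2(ab+r^2)/(mn)$ you are trying to explain; the $r^2$ is simply slack carried over from the reference's computation (where the same moment value is shared with the $P_T$ analogue), not a correction you need to extract from cross-terms, and joint incoherence of $M_*$ plays no role in this lemma.

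Second, there is a genuine gap in the passage from the operator-norm bound to $1+\eta\le 3a/(2r)$. You bound only the second (large-deviation) branch of Bernstein by $a/(2r)$ via the chain $ab+r^2\le 2ab$, $ab/(a+b)\le\min(a,b)$, and then set $\eta\le a/(2r)$. But Bernstein gives the \emph{maximum} of the two branches, and with your loose $a/(2r)$ bound on the second branch, the square-root branch is only controlled by $\sqrt{a/(2r)}$, which \emph{exceeds} $a/(2r)$ whenever $r\le a<2r$. For example, at $a=r$ you get $\eta\le\sqrt{1/2}\approx 0.71$, so $1+\eta\approx 1.71>3a/(2r)=3/2$, and your final inequality fails. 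The fix is the one in the paper: show the \emph{sharper} bound $\frac{8}{3}\frac{\mu^2(ab+r^2)}{N}\log\bigl(\frac{m+n}{\delta_4}\bigr)\le a/(4r)$ (using $a(a+b)\ge ab+r^2$, i.e.\ $a^2\ge r^2$), so that the square-root branch is $\le\sqrt{a/(4r)}$ and $\max\bigl(\sqrt{a/(4r)},\,a/(4r)\bigr)\le a/(2r)$. The conclusion you want is true, but your write-up does not establish it without this tighter intermediate bound and an explicit check of both branches.
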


\begin{proof}

	By the calculation in~\cite{IMCtheory1} we can apply Lemma~\ref{lem:bernsteinwithamax} with the following values for $M, \rho$:

	\begin{align}
	\label{eq:Mrhofromxu2}
	M&:=\frac{\mu^2(ab+r^2)}{N}\\
	\sum_{k=1}^N\rho_{k}^2&=\frac{\mu^2(ab+r^2)}{N}.
	\end{align}
	
	From this, applying Lemma~\ref{lem:bernsteinwithamax} we immediately obtain:

	\begin{align}
	\label{eq:secondtime}
	\left\|P_{T^{\top}}-\frac{mn}{N}P_{T^{\top}}P_{\Omega}P_{T^{\top}}\right\|&\leq  \max \left( \sqrt{\frac{8}{3} \log\left(\frac{m+n}{\delta}\right)\sum_{k=1}^L \rho_k^2   } ,   \>\> \frac{8}{3}M   \log\left(\frac{m+n}{\delta}\right)    \right)\\
	&\leq \max \left( \sqrt{\frac{8}{3} \log\left(\frac{m+n}{\delta}\right)\frac{\mu^2(ab+r^2)}{N}   } ,   \>\> \frac{8}{3}\frac{\mu^2(ab+r^2)}{N} \log\left(\frac{m+n}{\delta}\right)    \right),
	\end{align}
	as expected.
	

	Note that 
	\begin{align}
	T_4&:= \frac{32}{3}r\mu^2(a+b)   \log\left(\frac{m+n}{\delta_4}\right)\\
	&\geq \frac{32}{3}\frac{r}{a}\mu^2(ab+r^2) \log\left(\frac{m+n}{\delta_4}\right).
	\end{align}
	(Indeed, $r^2/a\leq a$ (because $r\leq a$) and $rab/a=rb$.)

	Thus we certainly have 
	\begin{align}
	\left\|P_{T^{\top}}-\frac{mn}{N}P_{T^{\top}}P_{\Omega}P_{T^{\top}}\right\|
	&\leq \max \left( \sqrt{\frac{8}{3} \log\left(\frac{m+n}{\delta}\right)\frac{\mu^2(ab+r^2)}{N}   } ,   \>\> \frac{8}{3}\frac{\mu^2(ab+r^2)}{N} \log\left(\frac{m+n}{\delta}\right)    \right)\\
	&\leq \max\left( \sqrt{\frac{a}{4r}},\frac{a}{4r} \right)\leq \frac{a}{2r}.
	\end{align}

	This implies we have 
	\begin{align}
	\left\langle Z,\frac{mn}{N}P_{T^{\top}}P_{\Omega}P_{T^{\top}}(Z)\right\rangle \leq \left[1+ \frac{a}{2r}\right]\|P_{T^{\top}}(Z)\|_{\Fr}^2\leq     \frac{3a}{2r} \|P_{T^{\top}}(Z)\|_{\Fr}^2,
	\end{align}
	as expected.

\end{proof}

\begin{lemma}
	\label{lem:boundpt}
	For all $\delta_3,\delta_4,\delta_5>0$, for any $Z\in\R^{m\times n}$ such that $P_{\Omega}(Z)=0$ and $P_XZP_Y=Z$, we have with probability $\geq 1-\delta_3-\delta_4-\delta_5$: 
	\begin{align}
	\|P_T(Z)\|_{\Fr}\leq   \sqrt{3\tau_5} \sqrt{\frac{a}{r}}     \|P_{T^\top}(Z)\|_{\Fr},
	\end{align}
	as long as $N\geq \max(T_3,T_4)$ where $T_3,T_4$ are defined as in Lemmas~\ref{lem:delta3} and~\ref{lem:delta4}.

\end{lemma}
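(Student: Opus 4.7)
The plan is to exploit the vanishing condition $P_\Omega(Z)=0$ together with the three high-probability events from Lemmas~\ref{lem:delta3},~\ref{lem:delta4} and~\ref{lem:delta5}, which combine via a union bound to give failure probability at most $\delta_3+\delta_4+\delta_5$. The key identity is that under the assumption $P_XZP_Y=Z$ we have the orthogonal decomposition $Z=P_T(Z)+P_{T^\top}(Z)$, and applying $P_\Omega$ to it yields $P_\Omega P_T(Z)=-P_\Omega P_{T^\top}(Z)$. This swap is what will allow us to bound the ``small'' quantity $\|P_T(Z)\|_{\Fr}$ by the ``large'' quantity $\|P_{T^\top}(Z)\|_{\Fr}$.

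I would then chain three inequalities. First, Lemma~\ref{lem:delta3} (applicable since $N\geq T_3$) gives the lower bound
\begin{align*}
\tfrac{1}{2}\|P_T(Z)\|_{\Fr}^2 \;\le\; \tfrac{mn}{N}\langle P_T(Z), P_T P_\Omega P_T(Z)\rangle \;=\; \tfrac{mn}{N}\langle P_T(Z), P_\Omega P_T(Z)\rangle,
\end{align*}
using the self-adjointness of $P_T$. Second, substituting the key identity, applying Cauchy--Schwarz in the Frobenius inner product, and invoking the elementary bound $\|P_\Omega W\|_{\Fr}^2 = \sum_{i,j} h_{ij}^2 W_{ij}^2 \leq \tau_5 \sum_{i,j} h_{ij} W_{ij}^2 = \tau_5\langle W, P_\Omega W\rangle$ (where $\tau_5$ is the uniform bound on entry multiplicities from Lemma~\ref{lem:delta5}) yields
\begin{align*}
\langle P_T(Z), P_\Omega P_T(Z)\rangle \;=\; -\langle P_T(Z), P_\Omega P_{T^\top}(Z)\rangle \;\le\; \|P_T(Z)\|_{\Fr}\sqrt{\tau_5 \langle P_{T^\top}(Z), P_\Omega P_{T^\top}(Z)\rangle}.
\end{align*}
Third, Lemma~\ref{lem:delta4} (applicable since $N\geq T_4$) upper-bounds the remaining $T^\top$-inner-product by $\tfrac{N}{mn}\cdot\tfrac{3a}{2r}\|P_{T^\top}(Z)\|_{\Fr}^2$.

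Concatenating these three steps and dividing through by $\|P_T(Z)\|_{\Fr}$ yields the claim after collecting constants. The main obstacle is the careful bookkeeping of the $mn/N$ factors appearing in Lemmas~\ref{lem:delta3} and~\ref{lem:delta4}, which must be arranged to combine cleanly with the multiplicity factor $\tau_5$ so that only the geometric ratio $a/r$ survives in the final constant. An alternative route that bypasses part of this bookkeeping is to apply the weighted Cauchy--Schwarz inequality $|\sum_{i,j} h_{ij} a_{ij} b_{ij}| \leq \sqrt{\sum_{i,j} h_{ij} a_{ij}^2}\sqrt{\sum_{i,j} h_{ij} b_{ij}^2}$ directly in the second step, at the cost of making the role of $\tau_5$ less explicit in the argument.
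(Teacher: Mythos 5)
Your plan has the right shape and the same ingredients as the paper (Lemmas~\ref{lem:delta3}, \ref{lem:delta4}, \ref{lem:delta5}, the key identity $P_\Omega P_T(Z)=-P_\Omega P_{T^\top}(Z)$, the $\frac{mn}{N}$ normalisation). However, the main route via ordinary Cauchy--Schwarz does not close. Chaining your three steps gives
\begin{align*}
\tfrac{1}{2}\|P_T(Z)\|_{\Fr}^2 \;\le\; \tfrac{mn}{N}\,\|P_T(Z)\|_{\Fr}\sqrt{\tau_5\cdot\tfrac{N}{mn}\cdot\tfrac{3a}{2r}}\,\|P_{T^\top}(Z)\|_{\Fr},
\end{align*}
and after dividing by $\|P_T(Z)\|_{\Fr}$ you get $\|P_T(Z)\|_{\Fr}\le\sqrt{\frac{6a\tau_5\,mn}{rN}}\,\|P_{T^\top}(Z)\|_{\Fr}$, which is the claimed bound multiplied by an extra $\sqrt{2mn/N}$. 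The culprit is that Lemma~\ref{lem:delta3} supplies a full factor $\frac{mn}{N}$ on the left-hand side, whereas Lemma~\ref{lem:delta4}'s $\frac{N}{mn}$ sits under a square root (because your Cauchy--Schwarz peeled it off asymmetrically, leaving one factor as $\|P_T(Z)\|_{\Fr}$ rather than as a $P_\Omega$-weighted quantity). Those $mn/N$ factors therefore only half-cancel, and in the regime of interest ($N\ll mn$) the discrepancy is not a constant.

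The paper avoids this by staying in quadratic form throughout. It observes that $\|P_\Omega P_T(Z)\|_{\Fr}^2=\|P_\Omega P_{T^\top}(Z)\|_{\Fr}^2$, i.e.\ $\langle Z, P_T P_\Omega^2 P_T(Z)\rangle=\langle Z, P_{T^\top} P_\Omega^2 P_{T^\top}(Z)\rangle$, and bridges $P_\Omega\leftrightarrow P_\Omega^2$ on each side entrywise: $h_{ij}\le h_{ij}^2$ costs nothing (integer multiplicities), and $h_{ij}^2\le\tau_5 h_{ij}$ costs the factor $\tau_5$ via Lemma~\ref{lem:delta5}. Because every quantity in the chain is a quadratic form, the $\frac{mn}{N}$ from Lemma~\ref{lem:delta3} is carried along and cancels exactly against the $\frac{N}{mn}$ implicit in Lemma~\ref{lem:delta4}. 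Note also that the ``alternative route'' you sketch as an afterthought --- applying the weighted Cauchy--Schwarz inequality $\bigl|\sum h_{ij}a_{ij}b_{ij}\bigr|\le\sqrt{\sum h_{ij}a_{ij}^2}\sqrt{\sum h_{ij}b_{ij}^2}$ --- actually \emph{does} close the proof: it yields $\langle P_T(Z),P_\Omega P_T(Z)\rangle\le\langle P_{T^\top}(Z),P_\Omega P_{T^\top}(Z)\rangle$ directly, and then Lemmas~\ref{lem:delta3} and~\ref{lem:delta4} cancel cleanly without any $\tau_5$ at all. (In fact, since $[P_T(Z)]_{ij}=-[P_{T^\top}(Z)]_{ij}$ wherever $h_{ij}>0$, the two quadratic forms with a single $P_\Omega$ are equal, so $\tau_5$ is not fundamentally needed here.) Your concern that this route makes the role of $\tau_5$ ``less explicit'' was not a reason to prefer the ordinary Cauchy--Schwarz step that breaks the bookkeeping; you should have taken the weighted version as the primary argument.
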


\begin{proof}
	First note that since $P_{\Omega}(Z)=0$ and $P_XZP_Y=Z$ we certainly have that 
	
	\begin{align}
	\left\langle P_{\Omega}P_{T}(Z),P_{\Omega}P_{T}(Z)\right\rangle =	\left\langle P_{\Omega}P_{T^\top}(Z),P_{\Omega}P_{T^\top }(Z)\right\rangle,
	\end{align}
	
\noindent	which implies 
	
	\begin{align}
	\label{eq:correctXu}
	\left\langle Z,P_{T}P_{\Omega}^2P_{T}(Z)\right\rangle =	\left\langle Z,P_{T^\top}P_{\Omega}^2P_{T^\top }(Z)\right\rangle.
	\end{align}
	
	Next, observe also that 
	\begin{align}
	\label{eq:notthatobvious}
	\left\langle Z,P_{T}P_{\Omega}P_{T}(Z)\right\rangle	&= 	\left\langle P_{T}Z,P_{\Omega}P_{T}(Z)\right\rangle	=\sum_{(i,j)} [P_{T}Z]_{i,j}^2h_{i,j}\nonumber \\&\leq \sum_{(i,j)} [P_{T}Z]_{i,j}^2h_{i,j}^2=\left\langle P_{T}Z,P_{\Omega}^2P_{T}(Z)\right\rangle
	 = \left\langle Z,P_{T}P_{\Omega}^2P_{T}(Z)\right\rangle,
	\end{align}
	where $h_{i,j}$ denotes the number of times that entry $(i,j)$ was sampled. 
	
	Now, by lemma~\ref{lem:delta5} we have that with probability $\geq 1-\delta_5$, $h_{i,j}\leq \tau_5$ for all $i,j$. Thus under the same condition we also have similarly to equation~\eqref{eq:notthatobvious}
	\begin{align}
	\label{eq:alsonotthatobvious}
	\left\langle Z,P_{T^\top}P_{\Omega}^2P_{T^\top}(Z)\right\rangle	\leq  \tau_5	\left\langle Z,P_{T^\top}P_{\Omega}P_{T^\top}(Z)\right\rangle.
	\end{align}

	Now by Lemmas~\ref{lem:delta3} and ~\ref{lem:delta4}	together with the above, we have with probability $\geq 1-\delta_3-\delta_4-\delta_5$: 
	\begin{align}
	\frac{1}{2}\|P_{T}(Z)\|_{\Fr}^2 &\leq \frac{mn}{N} 	\left\langle Z,P_{T}P_{\Omega}P_{T}(Z)\right\rangle     \label{eq:1line}\\
	&\leq      \frac{mn}{N} 	\left\langle Z,P_{T}P_{\Omega}^2P_{T}(Z)\right\rangle                 \label{eq:2line}\\
	&\leq           \frac{mn}{N} 	\left\langle Z,P_{T^\top}P_{\Omega}^2P_{T^\top }(Z)\right\rangle       \label{eq:3line}\\
	& \leq    \tau_5 \frac{mn}{N} 	\left\langle Z,P_{T^\top}P_{\Omega}P_{T^\top}(Z)\right\rangle    \label{eq:4line}\\
	&\leq \tau_5 \frac{3}{2} \frac{a}{r} \|P_{T^\top}(Z)\|_{\Fr}^2,      \label{eq:5line}
	\end{align}
	where at the first line~\eqref{eq:1line} we have used Lemma~\ref{lem:delta3};
	at the  second line~\eqref{eq:2line} we have used equation~\eqref{eq:notthatobvious};
	at the third line~\eqref{eq:3line} we have used equation~\eqref{eq:correctXu};  
	at the fourth line~\eqref{eq:4line} we have used equation~\eqref{eq:alsonotthatobvious}; and
	at the fifth and last line~\eqref{eq:5line} we have used Lemma~\ref{lem:delta4}. The result follows. 

\end{proof}

\begin{lemma}[Variation of Lemma 8 in~\cite{IMCtheory1}]
	\label{lem:delta6}
	Let $Z\in\R^{m\times n}$, for any $\delta_6>0$ as long as $N\geq T_6:=\frac{8}{3}\mu^2r(a+b) \log\left(\frac{m+n}{\delta_6}\right)  $  we have w.p. $\geq 1-\delta_6$: 
	\begin{align}
	\frac{mn}{N}\left\| P_{T^\top}P_{\Omega}P_{T}(Z)\right\| \leq \left\|P_{T}(Z)\right\|_{\infty} \sqrt{\frac{8\log\left (\frac{m+n}{\delta}\right)mn\mu\max(a,b)}{3N}}.
	\end{align}

\end{lemma}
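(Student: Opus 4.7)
The approach is to apply the matrix Bernstein inequality (Lemma~\ref{lem:bernsteinwithamax}) to the sum decomposition
\[
\frac{mn}{N}P_{T^\top}P_{\Omega} P_T(Z) \;=\; \sum_{k=1}^N X_k, \qquad X_k \,:=\, \frac{mn}{N}\,[P_T(Z)]_{\xi^k}\, P_{T^\top}(e_{\xi^k}),
\]
where $e_{ij}$ denotes the standard basis matrix at entry $(i,j)$ and $\xi^k$ is the $k$-th uniformly-drawn coordinate. The summands are independent and mean-zero, because $\E[\tfrac{mn}{N}P_{\Omega}]$ is the identity operator on $\R^{m\times n}$ while $P_{T^\top} P_T = 0$ (the tangent space and its complement are orthogonal), so $\E[X_k] = \tfrac{1}{N}P_{T^\top}P_T(Z) = 0$.

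I would next extract the two Bernstein parameters. For the almost-sure spectral bound $M$, the rank-one factorization $P_{T^\top}(e_{ij}) = (P_{X^\top}e_i)(P_{Y^\top}e_j)^\top$ combined with $\|P_{X^\top}e_i\|\leq 1$ and $\|P_{Y^\top}e_j\|\leq 1$ gives $\|X_k\|\leq \tfrac{mn}{N}\|P_T(Z)\|_{\infty}$. For the variance parameter, a direct expansion produces
\[
\E[X_k X_k^\top] \;=\; \frac{mn}{N^2}\sum_{i,j}[P_T(Z)]_{ij}^2\,\|P_{Y^\top}e_j\|^2\, P_{X^\top}e_ie_i^\top P_{X^\top},
\]
with a symmetric expression for $\E[X_k^\top X_k]$. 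The target would be to bound the operator norm of both expectations by $\tfrac{mn\,\mu\max(a,b)}{N^2}\|P_T(Z)\|_{\infty}^2$, so that $\sum_k\rho_k^2 \leq \tfrac{mn\,\mu\max(a,b)}{N}\|P_T(Z)\|_{\infty}^2$. Plugging these values of $M$ and $\sum_k\rho_k^2$ into Lemma~\ref{lem:bernsteinwithamax} would produce the claimed square-root bound. The threshold $N\geq T_6 = \tfrac{8}{3}\mu^2 r(a+b)\log(\tfrac{m+n}{\delta_6})$ is exactly the condition under which the square-root branch of Bernstein dominates the $M\log$ branch, allowing the maximum to collapse to the single term in the conclusion.

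The main obstacle is the variance step. A naive argument that uses only $[P_T(Z)]_{ij}^2\leq \|P_T(Z)\|_\infty^2$ together with $\sum_j\|P_{Y^\top}e_j\|^2 = n-b$ yields an ambient factor of $n$ rather than the intrinsic factor $\mu\max(a,b)$, which is far too weak to reach the claimed bound or to match the sample-size threshold $T_6$. Tightening this requires exploiting the tangent-space structure $P_T(Z) = P_X P_T(Z) P_Y$: every row of $P_T(Z)$ is confined to the $b$-dimensional row space of $P_Y$ and every column to the $a$-dimensional column space of $P_X$. Combining this confinement with the incoherence bounds $\|P_X e_i\|^2\leq \mu a/m$ and $\|P_Y e_j\|^2\leq \mu b/n$, one can effectively replace the ambient dimension $n$ (respectively $m$) by the intrinsic quantity $\mu\max(a,b)$ in the double sum above, while preserving the rank-one form of each $P_{T^\top}(e_{ij})P_{T^\top}(e_{ij})^\top$. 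Executing this exchange of dimensions carefully in the non-orthogonal side-information regime is the adaptation to the inductive setting of the corresponding variance calculation in the standard matrix-completion argument of~\cite{SimplerMC}, and constitutes the core technical step.
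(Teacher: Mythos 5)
Your high-level plan matches the paper's: decompose $\frac{mn}{N}P_{T^\top}P_{\Omega}P_T(Z)$ into i.i.d.\ rank-one summands, verify they are mean-zero via $P_{T^\top}P_T=0$, and feed almost-sure and variance parameters into the matrix Bernstein inequality (Lemma~\ref{lem:bernsteinwithamax}). Your target variance $\sum_k\rho_k^2 = \tfrac{\mu\max(a,b)mn}{N}\|P_T(Z)\|_\infty^2$ is exactly what the paper quotes from~\cite{IMCtheory1}, and your identification of the tangent-space confinement $P_T(Z)=P_XP_T(Z)P_Y$ plus row/column incoherence as the mechanism for replacing the ambient dimension by $\mu\max(a,b)$ is the correct structural insight for that step. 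The paper does not reproduce this computation; it simply cites~\cite{IMCtheory1} for both Bernstein parameters, so you are not in conflict with the paper there, merely more explicit about what remains to be done.

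There is, however, a genuine gap in your almost-sure bound $M$, and it propagates to your final sentence. You bound $\|X_k\|\leq\tfrac{mn}{N}\|P_T(Z)\|_\infty$ by using $\|P_{X^\top}e_i\|\leq 1$ and $\|P_{Y^\top}e_j\|\leq 1$. This is correct as an a.s.\ bound, but it is much weaker than what the paper uses, namely $M=\|P_T(Z)\|_\infty\sqrt{\tfrac{mn\mu^2(ab+r^2)}{N^2}}$, which is smaller by a factor $\mu\sqrt{(ab+r^2)/(mn)}$ in the relevant regime $\mu^2(ab+r^2)\ll mn$. With your crude $M$, the condition for the square-root branch of Bernstein to dominate the $M\log$ branch is
\begin{align}
\sqrt{\tfrac{8}{3}\log(\cdot)\tfrac{\mu\max(a,b)mn}{N}} \;\geq\; \tfrac{8}{3}\tfrac{mn}{N}\log(\cdot) \quad\Longleftrightarrow\quad N \;\geq\; \tfrac{8}{3}\tfrac{mn}{\mu\max(a,b)}\log(\cdot),
\end{align}
which scales like $mn/\max(a,b)$ rather than like $T_6=\tfrac{8}{3}\mu^2r(a+b)\log(\cdot)$, and $T_6$ does not imply this in the setting $a,b\ll\sqrt{mn}$ where the whole theory is interesting. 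Hence your closing claim that ``The threshold $N\geq T_6$ is exactly the condition under which the square-root branch dominates'' is false under the $M$ you derived. To recover the paper's (that is, \cite{IMCtheory1}'s) tighter $M$ you need to exploit the low-rank structure of $P_T(Z)$ in the a.s.\ bound as well as in the variance bound -- not merely take the trivial $\|P_{T^\top}(e_{ij})\|\leq 1$ -- and this is precisely the part of the calculation that the paper outsources by citation and that your sketch currently does not supply.
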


\begin{proof}
	
	This again follows from an application of Bernstein's inequality~\ref{lem:bernsteinwithamax}. As calculated in~\cite{IMCtheory1} we note the following values for $M,\rho$: 
	
	\begin{align}
	M&=\left\|P_T(Z)\right\|_\infty \sqrt{\frac{mn\mu^2 (ab+r^2)}{N^2}}     \\
	\sum\rho_k^2&= \left\|P_T(Z)\right\|_\infty^2\frac{\mu\max(a,b)mn}{N}.
	\end{align}

	Thus Lemma~\ref{lem:bernsteinwithamax} immediately implies that with probability $\geq 1-\delta_6$: 
	\begin{align}
	\frac{mn}{N}\left\| P_{T^\top}P_{\Omega}P_{T}(Z)\right\| \leq \|P_{T}(Z)\|_{\infty}\max\left(\sqrt{\frac{8\mu mn\max(a,b)}{3N}\log\left(\frac{m+n}{\delta_6}\right)},\frac{8}{3}\sqrt{\frac{mn\mu^2(ab+r^2)}{N^2}}\log\left(\frac{m+n}{\delta_6}\right)   \right).
	\end{align}
	Thus as long as $N\geq \frac{8}{3}\mu  \frac{ab+r^2}{\max(a,b)}    \log\left(\frac{m+n}{\delta_6}\right)  $ we certainly have 
	
	\begin{align}
	\frac{mn}{N}\left\| P_{T^\top}P_{\Omega}P_{T}(Z)\right\| \leq \|P_{T}(Z)\|_{\infty}\sqrt{\frac{8\mu mn\max(a,b)}{3N}\log\left(\frac{m+n}{\delta_6}\right)},
	\end{align}
	as expected. The result follows upon noting that $T_6\geq \frac{8}{3}\mu  \frac{ab+r^2}{\max(a,b)}    \log\left(\frac{m+n}{\delta_6}\right) $.

\end{proof}

\begin{lemma}
	\label{lem:delta7}
	Let $Z\in\R^{m\times n}$ with probability $\geq 1-\delta_7$. Along as $N\geq T_7 \frac{8}{3}\mu^2r(a+b)   \log\left(\frac{2mn}{\delta_7}\right) $ we have 
	\begin{align}
	\left\| P_{T}(Z)-P_{T}P_{\Omega}P_{T}  (Z) \right\|_{\infty}\leq \|P_{T}(Z)\|_{\infty}\max  \sqrt{\frac{8}{3}\log\left(\frac{2mn}{\delta_7}\right)   \frac{\mu^2r(a+b)    }{N}  }.
	\end{align}
	In particular, as long as $N\geq T_7:=  \frac{32}{3}\mu^2r(a+b)   \log\left(\frac{2mn}{\delta_7}\right) $,  we have 
	\begin{align}
	\left\|  P_{T}(Z)-P_{T}P_{\Omega}P_{T}  (Z) \right\|_{\infty}\leq \frac{1}{2}\|P_{T}(Z)\|_{\infty}.
	\end{align}
	
\end{lemma}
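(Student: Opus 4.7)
The plan is to apply a scalar Bernstein inequality entrywise to the matrix $P_T(Z) - \frac{mn}{N} P_T P_\Omega P_T(Z)$ and then take a union bound over the $mn$ entries. (I am assuming the normalising factor $\frac{mn}{N}$ is implicit in the statement, matching the convention of the other lemmas in this section, since without it the left-hand side has expectation zero only up to that factor.) For a fixed index $(i,j)$, I would write
\begin{align*}
\left[P_T(Z) - \frac{mn}{N}P_T P_\Omega P_T(Z)\right]_{ij} = -\sum_{k=1}^N W_k,
\end{align*}
where $W_k := \frac{mn}{N}[P_T(Z)]_{\xi^k}\,[P_T(e_{\xi^k_1} e_{\xi^k_2}^\top)]_{ij} - \frac{1}{N}[P_T(Z)]_{ij}$. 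The $W_k$ are i.i.d.\ and mean zero, since for $\xi = (p,q)$ uniform on $[m]\times[n]$,
$\E\!\left[mn\,[P_T(Z)]_{\xi}[P_T(e_p e_q^\top)]_{ij}\right] = \langle P_T(Z), P_T(e_i e_j^\top)\rangle = [P_T(Z)]_{ij}.$

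Next I would control the two Bernstein quantities via the strong incoherence assumptions of the paper. Exactly as in the proof of Lemma~\ref{lem:delta6}, the joint incoherence of $E,F,A,B$ gives the key entrywise estimate $\bigl|[P_T(e_p e_q^\top)]_{ij}\bigr| \leq C\,\mu^2 r(a+b)/(mn)$ for every $(p,q),(i,j)$. This yields an almost-sure bound $|W_k| \leq \frac{C\mu^2 r(a+b)}{N}\|P_T(Z)\|_\infty$ and a variance bound $\E[W_k^2] \leq \frac{C\mu^2 r(a+b)}{N^2}\|P_T(Z)\|_\infty^2$, so that $\sum_k \E[W_k^2] \leq \frac{C\mu^2 r(a+b)}{N}\|P_T(Z)\|_\infty^2$. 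Plugging these into the scalar Bernstein inequality produces, with probability $\geq 1 - \delta_7/(mn)$,
\begin{align*}
\left|\left[P_T(Z)-\tfrac{mn}{N}P_T P_\Omega P_T(Z)\right]_{ij}\right| \leq \|P_T(Z)\|_\infty \max\!\left(\sqrt{\tfrac{8}{3}\log\!\tfrac{2mn}{\delta_7}\cdot\tfrac{\mu^2 r(a+b)}{N}},\ \tfrac{8}{3}\log\!\tfrac{2mn}{\delta_7}\cdot\tfrac{\mu^2 r(a+b)}{N}\right).
\end{align*}
A union bound over the $mn$ entries gives the same estimate on the $\|\cdot\|_\infty$ norm with overall probability $\geq 1-\delta_7$. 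Finally, the quantitative hypothesis $N \geq T_7 = \tfrac{32}{3}\mu^2 r(a+b)\log(2mn/\delta_7)$ forces the square-root term to dominate (the sub-Gaussian Bernstein regime) and to be bounded by $\tfrac{1}{2}$, which delivers both statements of the lemma.

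The main obstacle is not the Bernstein argument itself, which is essentially textbook once the moments are computed, but rather the sharp entrywise control $\bigl|[P_T(e_p e_q^\top)]_{ij}\bigr| = O(\mu^2 r(a+b)/(mn))$ in the non-orthonormal setting. In the orthonormal side-information case of~\cite{IMCtheory1} a single factor of $\mu$ suffices, but here the non-unit spectra of $X,Y$ combined with the expansion $P_T(W) = P_E W P_Y + P_X W P_F - P_E W P_F$ force one to develop each summand in the canonical bases of $E,F,X,Y$ and to bound the $\ell^\infty$ norm of each component separately. This is precisely where the strengthened incoherence assumption on each individual singular subspace of $X$, $Y$ and of the core matrix $M_*$ (as stated in the main paper) gets used, and it explains the appearance of the extra factor of $\mu$ relative to the orthonormal case.
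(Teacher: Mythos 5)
Your approach matches the paper's exactly: both apply the scalar Bernstein inequality (Lemma~\ref{lem:bernsteinwithamax} with $m=n=1$) to each entry of $P_T(Z) - \frac{mn}{N}P_T P_\Omega P_T(Z)$ and take a union bound over the $mn$ entries, using the incoherence-derived values of the a.s.\ bound $M$ and variance proxy $\sum_k \rho_k^2$ quoted from~\cite{IMCtheory1}. You are also right that the $\frac{mn}{N}$ normalization appears to have been dropped in the lemma statement (it reappears, as $\frac{mn}{T}$, in the iterative construction where the lemma is later applied).

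One small caveat on your closing discussion: the paper simply transcribes the quantities $M = \frac{\mu^2 r(a+b)\|P_T(Z)\|_\infty}{N}$ and $\sum\rho_k^2 = \frac{\mu^2 r(a+b)\|P_T(Z)\|_\infty^2}{N}$ verbatim from~\cite{IMCtheory1} with the remark "as calculated in," and the incoherence assumptions here are stated in terms of the normalized matrices $\widebar{X},\widebar{Y}$, so the entrywise calculation inside $P_T$ is in fact unchanged from the orthonormal setting. Your suggestion that removing the orthonormality forces "an extra factor of $\mu$" in this lemma is therefore not borne out by the paper's derivation; the non-orthonormality manifests elsewhere (e.g.\ in Lemmas~\ref{lem:columnsarenotnormed} and~\ref{lem:11notsoeasy}, where $\sigma_0^{-2}$ factors appear), not here. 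This is a minor discursive point and does not affect the correctness of your Bernstein-plus-union-bound argument.
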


\begin{proof}

	This follows from an application of the standard Bernstein inequality (i.e. Lemma~\ref{lem:bernsteinwithamax} with $m=n=1$) applied to each entry separately, together with a union bound over entries. As calculated in~\cite{IMCtheory1} we have the following values for "M" and "$\rho$": 
	\begin{align}
	M&=\frac{\mu^2r(a+b)  \|P_{T}(Z)\|_{\infty}   }{N}\\
	\sum \rho_k^2&=    \frac{\mu^2r(a+b)  \|P_{T}(Z)\|_{\infty}^2   }{N}.
	\end{align}
	
	Thus applying Lemma~\ref{lem:bernsteinwithamax} we see that for all $i,j$, the following holds with probability $\geq 1-\delta_7$:
	
	\begin{align}
	\left| \left[ P_{T}(Z)-P_{T}P_{\Omega}P_{T}  (Z) \right]_{i,j}\right|\leq \|P_{T}(Z)\|_{\infty}\max \left(  \sqrt{\frac{8}{3}\log\left(\frac{2}{\delta_7}\right)   \frac{\mu^2r(a+b)    }{N}  }   ,\frac{8}{3}\frac{\mu^2r(a+b)   \log\left(\frac{2}{\delta_7}\right)     }{N}\right),
	\end{align}
	and as long as $N\geq \frac{8}{3}\mu^2r(a+b)   \log\left(\frac{2}{\delta_7}\right) $ we have 
	\begin{align}
	\left| \left[ P_{T}(Z)-P_{T}P_{\Omega}P_{T}  (Z) \right]_{i,j}\right|\leq \|P_{T}(Z)\|_{\infty}\max  \sqrt{\frac{8}{3}\log\left(\frac{2}{\delta_7}\right)   \frac{\mu^2r(a+b)    }{N}  }.
	\end{align}
	Setting $\delta_7\leftarrow \delta_7/(mn)$ and taking a union bound over entries yields the first result immediately. The second result follows directly from the first.

\end{proof}

\section{Proof of Lemma~\ref{lem:asinxu}}
\label{sec:proofofxu}

First let us fix $\delta>0$. We will set $\delta_3=\delta_4=\delta_5=\delta_6=\delta_7$ for the lemmas above. 

Now, define 
\begin{align}
T= \frac{32}{3}\mu^2r(a+b)   \log\left(\frac{2mn}{\delta}\right).
\end{align}
Note that as long as $N\geq T$, we will have $N\geq \max(T_3,T_4,T_6,T_7)$ (with the same value $\delta$ used in all relevant theorems), which means the conditions of Lemmas~\ref{lem:delta3}, ~\ref{lem:delta4}, ~\ref{lem:delta5},~\ref{lem:boundpt}~\ref{lem:delta6}, and ~\ref{lem:delta6} are all satisfied. Indeed, it is trivially the case that $N\geq \max(T_3,T_4,T_6)$. As for $T_7$, the inequality $N\geq T_7$ still follows upon noticing that $\log(2mn)\leq \log((m+n)^2)=2\log(m+n)$. 

Following the ideas from~\cite{SimplerMC} and~\cite{IMCtheory1} we now construct a matrix $Y\in\R^{m\times n}$ with the properties from Lemma~\ref{lem:thekey}. 

We assume that $N\geq qT$ where $q$ will be determined later. We randomly select $q$ disjoint subsets of samples, each of size $T$, denoted by $\Omega_1,\Omega_2,\ldots,\Omega_q$, so we have 
\begin{align}
|\Omega_i|=T \quad  \quad \forall i\leq q.
\end{align}

As in Lemma~\ref{lem:thekey} we define $U$ to be the dual certificate of $R=XM^*Y^\top$ with respect to the norms $\|\nbull\|_{\spec}$ and $\|\nbull\|_{\nuc}$ and the standard Frobenius inner product. Note that because unlike~\cite{IMCtheory1} we do not assume that the columns of $X,Y$ are normed, we need to be a bit more careful about computing the relevant norms. By definition we certainly have $\|U\|_{\spec}=1$. However, to apply the above results, we will also need a bound on $\|U\|$. 

\begin{lemma}
	\label{lem:columnsarenotnormed}
	Let $U$ be the dual certificate of $R=XM^*Y^\top$ with respect to the norms $\|\nbull\|_{\spec}$ and $\|\nbull\|_{\nuc}$ and the standard Frobenius inner product. Let $\sigma_0$ denote the smallest singular value of $X$ and $Y$ (after preprocessing of $X,Y$ into matrices with orthogonal columns (ordered by decreasing norms), $\sigma_0$ denotes the minimum of the norm of the last column of $X$ and that of $Y$). We have the following bound on the ordinary spectral norm of $U$: 
	
	\begin{align}
	\|U\|\leq \sigma_0^{-2}.
	\end{align}
	
\end{lemma}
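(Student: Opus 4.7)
The plan is to exhibit an explicit canonical dual certificate $\du$ and bound its ordinary spectral norm directly by submultiplicativity. Write $X = \bar{X}\Sigma_1$ and $Y = \bar{Y}\Sigma_2$, with $\bar{X},\bar{Y}$ having orthonormal columns and $\Sigma_1,\Sigma_2$ diagonal, and let $M^* = ADB^\top$ be the SVD of the core matrix, so that $R = XM^*Y^\top$. The natural candidate — already implicitly fixed by the joint incoherence assumption on $\bar{X}\Sigma_1^{-1}AB^\top\Sigma_2^{-1}\bar{Y}^\top$ — is
\[
\du := \bar{X}\Sigma_1^{-1} AB^\top \Sigma_2^{-1} \bar{Y}^\top.
\]

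Verifying that $\du$ is a valid dual certificate is then a one-line calculation: using $\bar{X}^\top \bar{X} = I$ and $\bar{Y}^\top \bar{Y} = I$ one obtains $X^\top \du Y = AB^\top$, so $\|\du\|_{\spec} = \|AB^\top\| = 1$, and $\langle \du, R\rangle = \langle X^\top\du Y, M^*\rangle = \langle AB^\top, ADB^\top\rangle = \Tr(D) = \|R\|_{\nuc}$, confirming that $\du$ is the canonical subgradient.

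To finish, I would apply submultiplicativity of the spectral norm to the defining product:
\[
\|\du\| \leq \|\bar{X}\|\cdot\|\Sigma_1^{-1}\|\cdot\|AB^\top\|\cdot\|\Sigma_2^{-1}\|\cdot\|\bar{Y}^\top\| \leq 1\cdot\sigma_0^{-1}\cdot 1\cdot\sigma_0^{-1}\cdot 1 = \sigma_0^{-2},
\]
since $\bar{X},\bar{Y}^\top$ and $AB^\top$ all have spectral norm $1$ (orthonormal columns/rows), while the diagonal entries of $\Sigma_1,\Sigma_2$ are bounded below by $\sigma_0$ by the ordering convention on the columns of $X,Y$. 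There is no genuine technical obstacle here — the lemma is an algebraic bookkeeping step whose purpose is to make the $\sigma_0^{-2}$ factor explicit so it can be propagated through the matrix Bernstein arguments in Lemmas~\ref{lem:delta3}--\ref{lem:delta7}. The bound is essentially tight, since $AB^\top$ can in principle align with the smallest singular directions of $X$ and $Y$, and this unavoidability is precisely why the quadratic dependence on $\sigma_0^{-1}$ surfaces in the exact and approximate recovery theorems.
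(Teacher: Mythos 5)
Your proof is correct and takes essentially the same route as the paper: both hinge on inserting $\Sigma_1^{-1}$ and $\Sigma_2^{-1}$ and bounding them by $\sigma_0^{-1}$ each via submultiplicativity. The only cosmetic difference is that you write out the canonical certificate $\du = \bar{X}\Sigma_1^{-1}AB^\top\Sigma_2^{-1}\bar{Y}^\top$ explicitly before bounding, whereas the paper works abstractly from $\|U\|_{\spec}=1$, rewriting $\|U\|=\|\bar{X}^\top U\bar{Y}\|=\|\Sigma_1^{-1}[\Sigma_1\bar{X}^\top U\bar{Y}\Sigma_2]\Sigma_2^{-1}\|\le\sigma_0^{-2}\|U\|_{\spec}$; the explicit form you use is exactly the one the paper derives in Lemma~\ref{lem:11notsoeasy}.
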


\begin{proof}

	By definition of $U$, $\|U\|_{\spec}=1$. By definition of the norm $\|\nbull\|_{\spec}$ we have $\|U\|_{\spec}=X^\top UY=\Sigma_1\widebar{X}^\top U\widebar{Y} \Sigma_2$ where $\widebar{X},\widebar{Y}$ are obtained by normalising the columns of $X,Y$ and $\Sigma_1,\Sigma_2$ are diagonal matrices containing the singular values of $X,Y$. We can now write

	\begin{align}
	\|U\|&=\|\widebar{X}^\top U\widebar{Y} \| =   \|\Sigma_1^{-1}[\Sigma_1\widebar{X}^\top U\widebar{Y} \Sigma_2]\Sigma_2^{-1}  \|   \leq \sigma_0^{-2}  \|\Sigma_1^{-1}[\Sigma_1\widebar{X}^\top U\widebar{Y} \Sigma_2]\Sigma_2^{-1}  \|   =\sigma_0^{-2} \|U\|_{\spec},    \sigma_0^2\|\widebar{X}^\top U\widebar{Y}\|= \sigma_0^2\|U\|,
	\end{align}
	as expected.

\end{proof}

Armed with the above, we continue the construction of our approximation $Y$ of $U$: 
we generate a sequence $Y_1,\ldots,Y_q$ as follows: 
\begin{align}
\mathcal{Y}_t=\frac{mn}{T}\sum_{i=1}^t P_{\Omega_i}(W_i)
\end{align}
where $W_1=U$ and $W_{t+1}$ is defined inductively as follows: 
\begin{align}
W_{t+1}&=P_{T}(U-\mathcal{Y}_{t})= W_t-\frac{mn}{T}P_TP_{\Omega_t}(W)\\
&= \left(P_T-\frac{mn}{T}P_TP_{\Omega_t}P_T\right)W_t.
\end{align}

Finally, we set $\mathcal{Y}=\mathcal{Y}_q$. 

\textbf{Remark:} the subsets $\Omega_i$ of the original sample are subsets of the observations rather than subsets of the entries. In particular, they can contain several obsevations of the same entry (and this is accounted for in the Lemmas above).

In the next two lemmas, we will now show that $Y$ satisfies the conditions of Lemma~\ref{lem:thekey} with high probability.

\begin{lemma}[Improved version of Lemma 10 in~\cite{IMCtheory1}]
	\label{lem:YapproachesU}
	Assume that $N\leq mn$. 
	
	With probability $\geq 1-5q\delta$, as long as $N\geq Tq$ and  \begin{align}
	q\geq  q_0&:=8\log(\sigma_0^{-1})+2\log(a)+4+\log(\tau)\\
	&=8\log(\sigma_0^{-1})+2\log(a)+4+\log\left(5\log\left(\frac{2mn}{\delta}\right)\right)
	\end{align}
	where $\tau=\tilde{\tau}_5=5\log(\frac{2mn}{\delta})$, 
	we have 
	\begin{align}
	\left\|P_T(\mathcal{Y})-U     \right\|_{\Fr}\leq\frac{1}{4}\sqrt{\frac{r}{3a\tau}}\frac{1}{\sigma_0^{-2}}.
	\end{align}

	Without the condition $N\geq mn$, the lemma still holds with $\tau=\tau_5=    \frac{N}{mn}+\frac{8}{3}\log\left(\frac{2mn}{\delta}\right) \sqrt{\frac{N}{mn}}$ (which depends logarithmically on $N$).

\end{lemma}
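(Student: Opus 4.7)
My plan is to follow the golfing scheme of Gross, adapted so that the additional factor $\sigma_0^{-2}$ that appears in the ordinary spectral norm of $U$ (Lemma~\ref{lem:columnsarenotnormed}) enters $q_0$ only through a logarithm. The disjointness of $\Omega_1,\dots,\Omega_q$ is crucial: it makes $P_{\Omega_t}$ independent of $W_t$, which is what permits iterated application of Lemma~\ref{lem:delta3}.

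First I would unroll the construction. From $\mathcal{Y}_t = \mathcal{Y}_{t-1} + \frac{mn}{T}P_{\Omega_t}(W_t)$ and $W_{t+1} = P_T(U - \mathcal{Y}_t)$, I would derive the contraction
\begin{align*}
W_{t+1} \;=\; \left(P_T - \tfrac{mn}{T}\,P_T P_{\Omega_t} P_T\right)W_t,
\end{align*}
so that $P_T(\mathcal{Y}) - U = -W_{q+1}$ once $U$ is identified with its $P_T$-image (which is legitimate because in the proof of Lemma~\ref{lem:thekey}, $U$ enters only through inner products against elements of $T$). Second, since $|\Omega_t| = T \geq T_3$, Lemma~\ref{lem:delta3} gives, with failure probability $\delta$ per step, that $P_T - \tfrac{mn}{T} P_T P_{\Omega_t} P_T$ has operator norm at most $1/2$ on $T$. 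A union bound over $t\in\{1,\dots,q\}$, together with extra slack absorbing the companion concentration events of Lemmas~\ref{lem:delta4}--\ref{lem:delta7} that will be invoked in the downstream proof of Lemma~\ref{lem:asinxu}, yields $\|W_{q+1}\|_{\Fr}\leq 2^{-q}\|U\|_{\Fr}$ on an event of probability $\geq 1-5q\delta$.

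Third, I would estimate $\|U\|_{\Fr}$: since $U\in T$ decomposes as $P_E U + U P_F - P_E U P_F$, its rank is at most $2r$, so $\|U\|_{\Fr} \leq \sqrt{2r}\,\|U\|\leq \sqrt{2r}\,\sigma_0^{-2}$ by Lemma~\ref{lem:columnsarenotnormed}. The claimed inequality then reduces to
\begin{align*}
2^{-q}\sqrt{2r}\,\sigma_0^{-2} \;\leq\; \tfrac{1}{4}\sqrt{r/(3a\tau)}\,\sigma_0^{2},
\end{align*}
i.e.\ $2^{q}\geq 4\sqrt{6a\tau}\,\sigma_0^{-4}$, which is implied by the stated $q\geq q_0 = 8\log(\sigma_0^{-1}) + 2\log(a) + \log(\tau) + 4$ once base-of-log constants are absorbed.

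The hard part will be handling the non-unit spectrum of $U$. Unlike~\cite{IMCtheory1}, where the orthonormal-columns assumption forces $\|U\|=1$, here we only get $\|U\|\leq\sigma_0^{-2}$, so the contraction $2^{-q}$ must beat down an extra $\sigma_0^{-2}$ on top of the usual $\sqrt{r}$. Keeping this inflation logarithmic in $\sigma_0^{-1}$ — rather than polynomial — is precisely the payoff of using the golfing scheme here: a one-shot dual-certificate construction would require $T\propto \sigma_0^{-4}a\tau$ samples, whereas $q$ rounds trade that for $q = \widetilde{O}(\log(a\tau/\sigma_0^4))$ repetitions of size $T\propto(a+b)r\log(mn/\delta)$, which is what gives the final sample complexity in Lemma~\ref{lem:asinxu} its clean form.
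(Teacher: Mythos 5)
Your proposal is correct and follows essentially the same golfing-scheme argument as the paper: unroll the recursion $W_{t+1}=(P_T-\tfrac{mn}{T}P_TP_{\Omega_t}P_T)W_t$, apply Lemma~\ref{lem:delta3} on each disjoint batch to get a per-round contraction factor $\tfrac12$, seed with $\|W_1\|$ controlled via Lemma~\ref{lem:columnsarenotnormed}, and convert to a Frobenius bound before matching against the target quantity to read off $q_0$. The one place you deviate is in bounding the seed: you use the rank of the tangent space (elements of $T$ have rank $\leq 2r$ since $P_T(Z)=P_EZP_Y+(P_X-P_E)ZP_F$) to get $\|U\|_{\Fr}\leq\sqrt{2r}\,\sigma_0^{-2}$, whereas the paper passes through the cruder bound $\|U\|_{\Fr}\leq\sqrt{\min(a,b)}\,\sigma_0^{-2}$; your estimate is in fact slightly sharper, and sharper still would be $\sqrt{r}\,\sigma_0^{-2}$ since $U=\widebar{X}\Sigma_1^{-1}AB^\top\Sigma_2^{-1}\widebar{Y}^\top$ has rank exactly $r$, but all three lead to the same logarithmic form of $q_0$.
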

\begin{proof}

	Setting $\delta_3=\ldots=\delta_7=\delta$ in all Lemmas above we have that (as long as $N\geq qT$) all the high probability events of Lemmas Lemmas~\ref{lem:delta3}, ~\ref{lem:delta4}, ~\ref{lem:delta5},~\ref{lem:boundpt}~\ref{lem:delta6}, and ~\ref{lem:delta6} hold on \textit{each of the groups of samples $\Omega_i$} with probability $\geq 1-5q\delta$. 
	
	Since $W_{t+1}= \left(P_T-\frac{mn}{T}P_TP_{\Omega_t}P_T\right)W_t,$ and $\|W_1\|=\|U\|\leq \sigma_0^{-2}$ (by Lemma~\ref{lem:columnsarenotnormed}), we can apply Lemma~\ref{lem:delta3} iteratively and obtain: 
	\begin{align}
	\left\|W_{q+1}\|=\|P_{T}(\mathcal{Y})-U\right\|\leq \sigma_0^{-2} \prod_{i=1}^q \left\|     P_T-\frac{mn}{T} P_TP_{\Omega_i}P_T\right\|\leq  \frac{\sigma_0^{-2}}{2^q}.
	\end{align}

	Now, from this we obtain: 
	\begin{align}
\left \|P_{T}(\mathcal{Y})-U\right\|_{\Fr}\leq \frac{\sigma_0^{-2} \sqrt{\min(a,b)}  }{2^q}.
	\end{align}

	Thus, we see that the Lemma's statement will hold as long as we set
	\begin{align}
	q\geq q_0&:=8\log(\sigma_0^{-1})+2\log(a)+4+\log(\tau)\\&\geq  8\log(\sigma_0^{-1})+2\log(a)+\log(48)+\log(\tau)\geq \log_2\left[\sigma_0^{-4} \sqrt{\min(a,b)}4\sqrt{\frac{3a\tau}{r}}\right].
	\end{align}

\end{proof}

We will need the following additional Lemma. 

\begin{lemma}
	\label{lem:11notsoeasy}
	
	Let $U$ be the dual certificate of $R$, we have the following bound on the maximum entry of $U$: 
	\begin{align}
	\|U\|_{\infty}\leq \sqrt{\frac{ r \mu_1 }{mn}}\sigma_0^{-2}.
	\end{align}
\end{lemma}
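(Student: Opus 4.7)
My plan is to exhibit the dual certificate $U$ in closed form as a matrix of essentially the same shape as the quantity appearing in the joint incoherence assumption, and then read off the $\|\cdot\|_\infty$ bound directly. Concretely I would take
\[
U \;=\; \bar X\,\Sigma_1^{-1}\,AB^{\top}\,\Sigma_2^{-1}\,\bar Y^{\top},
\]
and verify that this matrix satisfies both defining conditions of a dual certificate of $R$ with respect to the norm $\|\cdot\|_{\nuc}$, namely $\|U\|_{\spec}=1$ and $\langle R,U\rangle = \|R\|_{\nuc}$.

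Both verifications are short computations using $X=\bar X\Sigma_1$, $Y=\bar Y\Sigma_2$ with $\bar X,\bar Y$ having orthonormal columns (the WLOG normalization stated in the setup). For the spectral condition, a direct calculation gives
\[
X^{\top}UY \;=\; \Sigma_1\bar X^{\top}\bar X\,\Sigma_1^{-1}AB^{\top}\Sigma_2^{-1}\,\bar Y^{\top}\bar Y\Sigma_2 \;=\; AB^{\top},
\]
and hence $\|U\|_{\spec} = \|X^{\top}UY\| = \|AB^{\top}\| = 1$, using that $A,B$ (the SVD factors of $M_*$) have orthonormal columns. For the inner-product condition,
\[
\langle R,U\rangle \;=\; \langle XM_*Y^{\top},U\rangle \;=\; \langle M_*,X^{\top}UY\rangle \;=\; \langle ADB^{\top},AB^{\top}\rangle \;=\; \Tr(D) \;=\; \|M_*\|_* \;=\; \|R\|_{\nuc}.
\]
So $U$ is indeed a valid dual certificate.

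Having pinned $U$ down, the bound on $\|U\|_\infty$ is immediate from the third incoherence assumption of the supplementary, which bounds exactly $\|\bar X\Sigma_1^{-1}AB^{\top}\Sigma_2^{-1}\bar Y^{\top}\|_\infty$ by $\sqrt{\mu_1 r/(mn)}\,\sigma_0^{-2}$ (reading the $\bar Y$ of the displayed assumption as $\bar Y^{\top}$, which is forced by dimensions). Since the left-hand side of that bound is precisely $\|U\|_\infty$, this gives the desired inequality.

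The only real subtlety is that the dual certificate of $R$ with respect to $\|\cdot\|_{\nuc}$ is not unique (any $U'=U+W$ with $\|U+W\|_{\spec}\le 1$ and $X^{\top}WY=0$ also qualifies), so one must justify that \emph{this} particular $U$ is the one meant throughout the paper. This is not an obstacle here, because it is exactly this $U$ that is fed into the iterative construction of $\mathcal{Y}$ in Lemma~\ref{lem:YapproachesU}: the construction only uses the spectral normalization, the inner-product identity, and an $\|\cdot\|_\infty$-type incoherence bound, and the matrix written above is the canonical choice supplied by the joint incoherence hypothesis. Once that identification is made, the lemma is a one-line consequence of the hypothesis.
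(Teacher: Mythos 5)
Your proof is correct and follows essentially the same route as the paper's: both identify $U$ with the canonical matrix $\bar X\Sigma_1^{-1}AB^\top\Sigma_2^{-1}\bar Y^\top$ (the paper derives this from $X^\top UY=AB^\top$, while you postulate it and verify the dual-certificate conditions) and then read off the $\|\cdot\|_\infty$ bound directly from the joint incoherence assumption. Your explicit remark about the non-uniqueness of the dual certificate, and the observation that the iterative construction in the proof of Lemma~\ref{lem:asinxu} is what pins down this canonical choice, is a genuine point the paper leaves implicit (and you are also right that the paper's displays drop the inverses on $\Sigma_1,\Sigma_2$ and the transpose on $\bar Y$ by typo).
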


\begin{proof}
	Let  $M^*=A\Sigma B^\top$ be the singular value decomposition of the ground truth core matrix $M^*$. By definition of $U$ and the relevant norms we have 
	\begin{align}
	X^\top UY=AB^\top.
	\end{align}
	
	It follows that $$\widebar{X}^\top U\widebar{Y} =\Sigma_1AB^\top \Sigma_2,$$
	where as usual, $\widebar{X},\widebar{Y}$ are obtained from $X,Y$ by normalizing the columns, and $\Sigma_1,\Sigma_2$ are diagonal matrices containing the singular values of $X,Y$. Next we have 
	\begin{align}
	U= \widebar{X}\Sigma_1 AB^\top \Sigma_2 \widebar{Y},
	\end{align}
	the result follows by the incoherence assumption.

\end{proof}

\begin{lemma}[Modification of Lemma 11 in~\cite{IMCtheory1}]
	\label{lem:maxnormsarenteasy}
	With probability $\geq 1-5q_0\delta$ as long as $N\geq \bar{T}q_0$ we have 
	\begin{align}
	\left\|P_{T^\top}(\mathcal{Y})\right\|\leq \frac{1}{2},
	\end{align}
	where $\bar{T}:=4\frac{\mu_1}{\mu}\sigma_0^{-4}T $.
	
\end{lemma}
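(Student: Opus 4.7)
The plan is to exploit the fact that every $W_i$ lies in the range of $P_T$, which forces $P_{T^\top}(W_i)=0$ and therefore lets each summand in $P_{T^\top}(\mathcal{Y})$ be controlled by Lemma~\ref{lem:delta6}. Combined with a geometric contraction of $\|W_i\|_\infty$ from Lemma~\ref{lem:delta7} and the initial bound on $\|U\|_\infty$ from Lemma~\ref{lem:11notsoeasy}, this yields the desired spectral-norm estimate.

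First, I would record the simple operator identity $P_{T^\top}P_T=0$. Writing $P_{T^\top}(Z)=(I-P_X)Z(I-P_Y)$ and $W=P_T(\tilde Z)=P_E\tilde Z P_Y+P_X\tilde Z P_F-P_X\tilde Z P_Y$, the fact that the column span of $E$ is contained in that of $X$ gives $P_XP_E=P_E$, so $(I-P_X)W=0$; an analogous argument on the right yields $P_{T^\top}(W)=0$ whenever $W$ is in the image of $P_T$. Since $W_{t+1}=P_T(U-\mathcal{Y}_t)$, every $W_i$ lies in this image, so I can write
\begin{align}
P_{T^\top}(\mathcal{Y})=\frac{mn}{\bar T}\sum_{i=1}^{q_0}P_{T^\top}P_{\Omega_i}(W_i)=\frac{mn}{\bar T}\sum_{i=1}^{q_0}P_{T^\top}P_{\Omega_i}P_T(W_i).
\end{align}

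Next, because the subsets $\Omega_1,\ldots,\Omega_{q_0}$ are disjoint, $W_i$ depends only on $\Omega_1,\ldots,\Omega_{i-1}$ and is therefore independent of $\Omega_i$. Conditioning on $W_i$ and applying Lemma~\ref{lem:delta6} to each group $\Omega_i$ (noting $\bar T\geq T_6$) gives, with probability $\geq 1-q_0\delta$,
\begin{align}
\|P_{T^\top}(\mathcal{Y})\|\leq \sum_{i=1}^{q_0}\|W_i\|_\infty\sqrt{\frac{8\log\!\left(\frac{m+n}{\delta}\right)\,mn\,\mu\max(a,b)}{3\bar T}}.
\end{align}
To control the $\|W_i\|_\infty$, I would iterate Lemma~\ref{lem:delta7} on each $\Omega_i$ (valid since $\bar T\geq T_7$): with probability $\geq 1-q_0\delta$, one has $\|W_{i+1}\|_\infty\leq\tfrac12\|W_i\|_\infty$ for all $i\leq q_0$, hence $\|W_i\|_\infty\leq 2^{-(i-1)}\|U\|_\infty$. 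Plugging in $\|U\|_\infty\leq \sqrt{r\mu_1/(mn)}\,\sigma_0^{-2}$ from Lemma~\ref{lem:11notsoeasy} and summing the geometric series yields
\begin{align}
\|P_{T^\top}(\mathcal{Y})\|\leq 2\sigma_0^{-2}\sqrt{\frac{8\,r\mu_1\mu\log\!\left(\frac{m+n}{\delta}\right)\max(a,b)}{3\bar T}}.
\end{align}
Finally, substituting $\bar T=4\tfrac{\mu_1}{\mu}\sigma_0^{-4}T$ with $T=\tfrac{32}{3}\mu^2 r(a+b)\log\!\left(\tfrac{2mn}{\delta}\right)$ shows the right-hand side is at most $\tfrac12$, since $a+b\geq\max(a,b)$ and $\log(2mn)\geq\log(m+n)$. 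A union bound over the $\leq 5q_0$ failure events (reusing the conservative accounting from Lemma~\ref{lem:YapproachesU}) closes the probability estimate.

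The main obstacle is a book-keeping one rather than a conceptual one: it is crucial that $W_i$ be measurable with respect to $\Omega_1,\ldots,\Omega_{i-1}$ so that Lemmas~\ref{lem:delta6} and~\ref{lem:delta7} may be invoked conditionally on $W_i$ with $\Omega_i$ fresh; this is precisely why the construction splits the $N$ samples into $q_0$ disjoint batches. Once the independence is in place, the proof is a clean recursion followed by a geometric summation and an algebraic check that the constants in $\bar T$ suffice.
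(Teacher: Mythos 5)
Your proof follows essentially the same route as the paper's: decompose $P_{T^\top}(\mathcal{Y})$ into the per-batch terms $\tfrac{mn}{\bar T}P_{T^\top}P_{\Omega_i}P_T(W_i)$, apply Lemma~\ref{lem:delta6} to each (using the batch disjointness for conditional independence), contract $\|W_i\|_\infty$ geometrically via Lemma~\ref{lem:delta7}, seed the recursion with Lemma~\ref{lem:11notsoeasy}, and check that the choice of $\bar T$ makes the resulting bound at most $\tfrac12$. If anything, your write-up is slightly cleaner: you explicitly verify the operator identity $P_{T^\top}P_T=0$ and spell out why the $W_i$ are measurable with respect to the earlier batches, both of which the paper leaves implicit, and you correctly use the scaling $\tfrac{mn}{\bar T}$ (the paper's displayed first inequality appears to contain a typo with $T$ in place of $\bar T$ and $P_T-P_TP_{\Omega_t}P_T$ in place of $P_{T^\top}P_{\Omega_t}P_T$).
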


\begin{proof}
	Similarly to Lemma~\ref{lem:YapproachesU} under the condition $N\geq \bar{T}q_0$ if we randomly pick $q$ groups of samples each of size $\bar{T}\geq T$ we have, with probability $\geq 1-5q_0\delta$, that all the high probability events of the previous lemmas hold for each of the sets of samples $\Omega_t$.

	Now by Lemma~\ref{lem:delta7} we have 
	\begin{align}
	\label{eq:theendthefirst}
	\|W_{t+1}\|_{\infty}\leq \left\|    \left( P_{T}- P_{T}P_{\Omega_t}   P_T\right)\right\|\leq \frac{1}{2}\|W_{t}\|_{\infty}.
	\end{align}

	Next we also have 
	\begin{align}
	\|P_{T^\top}(Y)\|&\leq \sum_{t=1}^q \frac{mn}{T}	\left\| (P_{T}-P_{T}P_{\Omega_t}P_{T} )(W_t) \right\|\\
	&\leq  \sqrt{\frac{8\log\left (\frac{m+n}{\delta}\right)mn\mu\max(a,b)}{3\bar{T}}} \sum_{t=1}^q \|W_t\|_{\infty}\\
	&\leq  \sqrt{\frac{8\log\left (\frac{m+n}{\delta}\right)mn\mu\max(a,b)}{3\bar{T}}} \sum_{t=1}^q \frac{\sigma_0^{-2}}{2^{t-1}}\sqrt{\frac{r  \mu_1}{mn}}\\
	&\leq 2   \sqrt{\frac{8\log\left (\frac{m+n}{\delta}\right) \mu \mu_1 r\max(a,b)}{3\bar{T}}}\sigma_0^{-2}\\
	&\leq \frac{1}{2},
	\end{align} 
	where at the second line we have used Lemma~\ref{lem:delta6}, at the third line we have used equation~\eqref{eq:theendthefirst} as well as Lemma~\ref{lem:11notsoeasy}.

\end{proof}


We can now finally prove Lemma~\ref{lem:asinxu}.
\begin{proof}[Proof of Lemma~\ref{lem:asinxu}]
	\textbf{Case 1 :  $N\leq 2\log(\frac{mn}{\delta})mn$. }
	
	In this case, the lemma follows  (even with probability $\geq  1-5q\delta$  ) immediately from Lemmas~\ref{lem:delta3}, ~\ref{lem:delta4},~\ref{lem:delta5},~\ref{lem:boundpt},~\ref{lem:delta6}, and~\ref{lem:delta7} upon noting that we then have: 
	\begin{align}
	\tau&= \frac{N}{mn}+\frac{8}{3}\log\left(\frac{2mn}{\delta}\right) \sqrt{\frac{N}{mn}}\\
	&\leq 2\log(\frac{mn}{\delta})+\frac{8}{3}\log\left(\frac{2mn}{\delta}\right)\sqrt{2\log\left(\frac{mn}{\delta}\right)}\\
	&\leq 5\log^{\frac{3}{2}}\left(\frac{mn}{\delta}\right),
	\end{align}
	and therefore
	\begin{align}
	q  &=  8\log(\sigma_0^{-1})+2\log(a)+4+\log(\tau)         \\
	&=  8\log(\sigma_0^{-1})+2\log(a)+4+\log(5\log^{\frac{3}{2}}\left(\frac{mn}{\delta}\right))         \\
	&=4+8\log(\sigma_0^{-1})+2\log(a)+\log\left[5\log^{\frac{3}{2}}\left(\frac{mn}{\delta}\right)\right] \\
	&\leq 6+8\log(\sigma_0^{-1})+2\log(a)+\log\left[\log^{\frac{3}{2}}\left(\frac{mn}{\delta}\right)\right]\\
	&\leq 6+8\log(\sigma_0^{-1})+2\log(a)+\log\left[\log\left(\frac{mn}{\delta}\right)\right]\\
	&=\log\left[e^6 \sigma_0^{-8} a \log\left(\frac{mn}{\delta}\right) \right].
	\end{align}

	\textbf{Case 2:  $N\geq 2\log(\frac{mn}{\delta})mn$.}

	In this case, by Lemma~\ref{lem:sooomanysamples} we have with probabilty $\geq 1-\delta$ that each entry was sampled at least once. Hence, the dual certificate $U$ itself is in the image of $P_{\Omega}$ and we can simply set $Y=U$. Note also that in this case  $\|P_{T^\top}(Y)\|=\|P_{T^\top}(U)\|=0\leq \frac{1}{2}$, and of course $\|P_T(Y)-U\|_{\Fr}=\|P_T(U)-U\|_{\Fr}=\|0\|_{\Fr}=0\leq \frac{1}{4}\sqrt{\frac{r}{3a\tau}}\frac{1}{\sigma_0^2}$.
\end{proof}


\bibliography{Bibliographomic}
\end{document}